\pgfplotsset{compat=newest}
\DeclareMathOperator{\tr}{tr}
\DeclareMathOperator{\diag}{diag}
\DeclareMathOperator{\rank}{rank}
\DeclareMathOperator{\supp}{supp}
\DeclareMathOperator{\dist}{dist}
\newcommand{\T}{{\sf T}}
\newcommand{\RR}{{\mathbb{R}}}
\newcommand{\CC}{{\mathbb{C}}}
\newcommand{\EE}{{\mathbb{E}}}
\newcommand{\NN}{{\mathcal{N}}}
\newcommand{\A}{\mathbf{A}}
\newcommand{\B}{\mathbf{B}}
\newcommand{\C}{\mathbf{C}}
\newcommand{\D}{\mathbf{D}}
\newcommand{\G}{\mathbf{G}}
\newcommand{\M}{\mathbf{M}}
\newcommand{\V}{\mathbf{V}}
\newcommand{\U}{\mathbf{U}}
\newcommand{\X}{\mathbf{X}}
\newcommand{\Z}{\mathbf{Z}}
\newcommand{\Q}{\mathbf{Q}}
\renewcommand{\H}{\mathbf{H}}
\renewcommand{\P}{\mathbf{P}}
\newcommand{\uu}{\mathbf{u}}
\newcommand{\x}{\mathbf{x}}
\newcommand{\z}{\mathbf{z}}
\newcommand{\w}{\mathbf{w}}
\newcommand{\vv}{\mathbf{v}}
\newcommand{\zo}{\mathbf{0}}
\newcommand{\one}{\mathbf{1}}
\newcommand{\I}{\mathbf{I}}
\newcommand{\balpha}{\boldsymbol{\alpha}}
\newcommand{\bmu}{\boldsymbol{\mu}}
\newcommand{\bomega}{\boldsymbol{\omega}}
\newcommand{\btheta}{\boldsymbol{\theta}}
\newcommand{\bLambda}{\boldsymbol{\Lambda}}
\newcommand{\asto}{{ \xrightarrow{a.s.} }}
\definecolor{RED}{rgb}{0.7,0,0}
\definecolor{BLUE}{rgb}{0,0,0.69}
\definecolor{GREEN}{rgb}{0,0.6,0}
\definecolor{PURPLE}{rgb}{0.69,0,0.8}
\newtheorem{Assumption}{Assumption}
\newtheorem{Theorem}{Theorem}
\newtheorem{Corollary}{Corollary}
\newtheorem{Lemma}{Lemma}
\newtheorem{Remark}{Remark}
\newcommand\setItemnumber[1]{\setcounter{enum\romannumeral\@enumdepth}{\numexpr#1-1\relax}}
\title{Hessian Eigenspectra of More Realistic Nonlinear Models}
\author{
  Zhenyu Liao\\
  ICSI and Department of Statistics\\
  University of California, Berkeley, USA\\
  \texttt{zhenyu.liao@berkeley.edu}
  \and
   Michael W. Mahoney\\
  ICSI and Department of Statistics\\
  University of California, Berkeley, USA\\
  \texttt{mmahoney@stat.berkeley.edu}
}
\date{\today}
\begin{document}
\maketitle

\begin{abstract}
Given an optimization problem, the Hessian matrix and its eigenspectrum can be used in many ways, ranging from designing more efficient second-order algorithms to performing model analysis and regression diagnostics. 
When nonlinear models and non-convex problems are considered, strong simplifying assumptions are often made to make Hessian spectral analysis more tractable.
This leads to the question of how relevant the conclusions of such analyses are for more realistic nonlinear models. 
In this paper, we exploit deterministic equivalent techniques from random matrix theory to make a \emph{precise} characterization of the Hessian eigenspectra for a broad family of nonlinear models, including models that generalize the classical generalized linear models, without relying on strong simplifying assumptions used previously. 
We show that, depending on the data properties, the nonlinear response model, and the loss function, the Hessian can have \emph{qualitatively} different spectral behaviors: of bounded or unbounded support, with single- or multi-bulk, and with 
isolated eigenvalues on the left- or right-hand side of the bulk. 
By focusing on such a simple but nontrivial nonlinear model, our analysis takes a step forward to unveil the theoretical origin of many visually striking features observed in more complex machine learning models.
\end{abstract}

%\tableofcontents

\section{Introduction}
\label{sec:intro}

The Hessian matrix is ubiquitous in applied mathematics, statistics, and machine learning (ML). 
Given a (loss) function $L(\w)$ with some parameters $\w \in \RR^p$, the Hessian $\H(\w) \in \RR^{p \times p}$ is defined as the second derivative of the objective function with respect to the model parameter, i.e., $\H(\w) = \partial L(\w)/(\partial \w \cdot \partial \w^\T)$.
When a ML model is being trained, it is common to parameterize that model by $\w$, and then train that model by minimizing some (smooth) loss function $L(\w)$, with associated Hessian matrix $\H(\w)$, by backpropagating the error to improve $\w$~\cite{goodfellow2016deep,yao2020adahessian}.
Alternatively, once a ML model is trained, the Hessian matrix $\H(\w)$ (and the related Fisher information matrix \cite{van2000asymptotic,wainwright2019high}) can be examined to identify outliers, perform diagnostics, and/or engage in other sorts of uncertainty quantification and model validation~\cite{hastie2009elements,yao2019pyhessian,sankar2020deeper}.

For convex problems, the Hessian $\H(\w)$ provides detailed information on how to adjust the gradient to achieve improved convergence, e.g., as in Newton-like methods.
For non-convex problems, the properties of the local loss ``landscape'' around a given point $\w$ in the parameter space is of central significance~\cite{dauphin2014identifying,kawaguchi2016deep,chi2018nonconvex,lee2019first,yao2018hessian,yao2019trust,yao2019pyhessian}.
In this case, most obviously, the signs of the smallest and largest Hessian eigenvalue can be used to test whether a given $\w$ is a local maximum, local minimum, or a saddle point (i.e., the second-derivative test).
More subtly, the Hessian eigenvalue distribution characterizes the local curvature of the loss function and provides direct access to, for instance, the fraction of negative Hessian eigenvalues.
This quantity determines the number of (local) descent directions, a quantity that is directly connected to the rates of convergence of various optimization algorithms \cite{jain2017non,chitour2018geometric}, as well as the large-scale structure of the configuration space~\cite{bray2007statistics,auffinger2013random}. 

For theoretical analysis of neural network (NN) models, Hessian eigenspectra are often assumed to follow well-known random matrix distributions such the Mar{\u c}enko–Pastur law \cite{marchenko1967distribution} or the Wigner's semicircle law \cite{wigner1955characteristic}.
This enables one to use Random Matrix Theory (RMT), but it involves (for NNs, at least) making relatively strong simplifying assumptions \cite{pennington2017geometry,pennington2018spectrum,choromanska2015loss}. 
A somewhat more realistic setup involves using a so-called \emph{spiked model} (or a spiked covariance model) \cite{baik2005phase,benaych2011eigenvalues,lesieur2015phase}.
In this case, the matrix follows a \emph{signal-plus-noise} model and consists of \emph{full rank} random noise matrix and \emph{low rank} statistical information structure. %
%\footnote{The low rankness is since the \emph{whole} matrix is repeating the \emph{same} informative pattern in each of its rows/columns.} 
%Such spikes are commonly interpreted as being due to the statistical \emph{signal} in the matrix. 
The ``signal'' eigenvalues are generally larger than the noisy ``bulk'' eigenvalues; and the maximum eigenvalues, when isolated from the bulk, are referred to as the ``spikes.'' 
A substantial theory-practice gap exists, however. 
In both toy examples \cite{granziol2020beyond}, as well as state-of-the-art NN models \cite{yao2018hessian,yao2019trust,yao2019pyhessian,yao2020adahessian,shen2020q,dong2019hawq}, the strong simplifying assumptions are far from realistic or satisfactory.
(A similar theory-practice gap has been observed for other NN matrices to which RMT has been applied, perhaps most notably weight matrices~\cite{MM18_TR,MM20a_trends_TR}.)
A more precise understanding of the Hessian eigenspectra (and its dependence on input data structure, the underlying response model and model parameters, as well as the loss function) for more realistic ML models is~needed.

%\michael{This par seems like it should be in related work, or removed; we have set up everything by the end of the previous par.}
%These large eigenvalues are known to closely connect to the optimization \cite{nesterov2004introductory,yao2020adahessian} and generalization performance \cite{bartlett2017spectrally} of ML models. 
%As a concrete application, it has been proposed to compress NN models by projection the weight (parameters) matrices onto the subspace spanned by the eigenvectors of the few large  eigenvalues \cite{goetschalckx2018efficiently}.

\subsection{Our approach}\label{subsec:our_approach}

In this article, we address these issues, in a setting that is simple enough to be analytically tractable but complex enough to shed light on much more realistic large-scale models.
We consider a family of generalized generalized linear models (G-GLMs) that extends the popular generalized linear model (GLM) \cite{dobson2018introduction,hastie2009elements}; and we show that, even for such simple models, the key simplifying assumptions used in previous theoretical analyses of Hessian eigenspectra, can be very inexact.
In particular, apart from a few special cases (including linear least squares and the logistic regression model in the case of homogeneous features), most Hessians of G-GLM are \emph{not} close to the Mar{\u c}enko–Pastur and/or the semicircle law.
Instead, the corresponding Hessian depends in a complicated way on the interplay between the input feature structure, the underlying response model, and the loss function.
These tradeoffs can be precisely characterized by the proposed analysis in the G-GLM model we consider.

\smallskip
The G-GLM describes a generalized linear relation between the input feature $\x_i \in \RR^p$ and the corresponding response $y_i$, in the sense that there exists some parameters $\w_* \in \RR^p$ such that for given $\w_*^\T \x_i$, the response $y_i$ is independently drawn from
\begin{equation}\label{eq:def-GLM}
  y_i \sim f (y \mid \w_*^\T \x_i) ,
  %y = g(\w_1^\T \x, \ldots, \w_K^\T \x) = g(\W^\T \x)
\end{equation}
for some conditional density function $f(\cdot \mid \cdot)$. 
This setup extends GLMs such as the logistic~model
\begin{equation}\label{eq:def-logistic}
  %\mathbb P(y = 1 \mid \w_*^\T \x) = \frac1{1 + e^{-\w_*^\T \x}} = 1 - \mathbb P(y = -1 \mid \w_*^\T \x),
  \mathbb P(y = 1 \mid \w_*^\T \x) = (1 + e^{-\w_*^\T \x})^{-1}, \quad y \in \{ -1, 1 \}, %= 1 - \mathbb P(y = -1 \mid \w_*^\T \x),
\end{equation}
and it covers a large family of models in many applications in statistics and ML.
Other examples of G-GLM models include:
\begin{itemize}%[topsep=0pt,itemsep=0pt]
	\item the (noisy) nonlinear factor model \cite{child1990essentials}, where $y \sim \NN( g(\w_*^\T \x), \sigma^2 )$ for some nonlinear linking function $g: \RR \to \RR$ and $\sigma > 0$;
	\item the (noiseless) phase retrieval model \cite{fienup1982phase}, with $y = (\w_*^\T \x)^2$, in which case we wish to reconstruct $\w_*$ from its magnitude measurements; and
	\item the single-layer NN model $y = \sigma(\w_*^\T \x)$, for some nonlinear activation function $\sigma(t)$ such as the $\tanh$-sigmoid $\sigma(t) = \tanh(t)$.
\end{itemize}

For a given training set $\{ (\x_i ,y_i) \}_{i=1}^n$ of size $n$, the standard approach to obtain/recover the parameter $\w_* \in \RR^p$ is to solve the following optimization problem
\begin{equation}\label{eq:optim}
  %\hat \w = \argmin_\w L(\w) = \argmin_\w \frac1n \sum_{i=1}^n \ell(y_i, \w^\T \x_i)  ,
  %\min_\w L(\w) = \min_\w \textstyle \frac1n \sum_{i=1}^n \ell(y_i, \w^\T \x_i),
  \textstyle \min_\w L(\w) = \min_\w \frac1n \sum_{i=1}^n \ell(y_i, \w^\T \x_i),
\end{equation}
for some loss function $\ell (y, h): \RR \times \RR \to \RR$.
This includes, e.g., the negative log-likelihood of the observation model within the maximum likelihood estimation framework \cite{hastie2009elements}, such as the logistic loss $\ell(y, h) = \ln(1+e^{- y h})$ in the case of the logistic model \eqref{eq:def-logistic}.
In many applications, however, the optimization problem in \eqref{eq:optim} may not be convex \cite{mason2000boosting,wu2007robust,chapelle2008tighter},%
\footnote{As we shall see, in some non-convex cases of G-GLMs, the top Hessian eigenvector can be shown to correlate positively with the sought-for model parameter $\w_*$ and therefore be used as the initialization of gradient descent methods \cite{candes2015phase,keshavan2010matrix,jain2013low}. This motivates our study of the possible isolated Hessian eigenvalue-eigenvector pairs.}
and it can be NP-hard in general (the noiseless phase retrieval model $y = (\w_*^\T \x)^2$ with the square loss $\ell(y, h) = (y - h^2)^2$ is such an example \cite{candes2015phase}).

\subsection{Our main contributions}

The main contribution of this work is the \emph{exact asymptotic characterization} of the Hessian eigenspectrum for the family of G-GLMs in \eqref{eq:def-GLM}, in the high dimensional regime where the feature dimension $p$ and the sample size $n$ are both large and comparable. 
Our main result is that we establish:
\begin{enumerate}%[topsep=0pt,itemsep=0pt]
  \item 
  the limiting eigenvalue distribution of the Hessian matrix (Theorem~\ref{theo:lsd}); and
  \item 
  the behavior of (possible) isolated eigenvalues and the associated eigenvectors (Theorem~\ref{theo:spike}~and~\ref{theo:eigenvector});
  %in Appendix~\ref{sec:SM-iso-eigenvector} 
  %\item more importantly, when will some eigenvalues isolate from the main ``bulk'' eigenspectrum;
\end{enumerate}
as a function of the dimension ratio $c = \lim p/n$, the feature (first-~and second-order) statistics, the loss function $\ell$ in \eqref{eq:optim}, and the underlying response model in \eqref{eq:def-GLM}. 
%We achieve this by establishing 
Our results are based on a technical result of independent interest:
\begin{enumerate}%[topsep=0pt,itemsep=0pt]
\setItemnumber{3}
  \item
  a \emph{deterministic equivalent} 
  %\footnote{This is, a deterministic matrix $\bar \Q(z)$ such that, for any deterministic sequence of matrices $\A_n \in \RR^{n \times n}$ and vectors $\mathbf{a}_n, \mathbf{b}_n \in \RR^n$ of bounded (spectral and Euclidean) norms, $\frac1n \tr \A_n (\Q(z) - \bar \Q(z)) \to 0$ and $\mathbf{a}_n^\T (\Q(z) - \bar \Q(z)) \mathbf{b}_n \to 0$ almost surely as $n,p \to \infty$; we denote this relation $\Q(z) \leftrightarrow \bar \Q(z)$. } 
  (Theorem~\ref{theo:DE}) of the random \emph{resolvent} $\Q(z) = (\H - z \I_p)^{-1}$, for $z \in \CC$ not an eigenvalue of $\H$, of the generalized sample covariance:
  \begin{equation}\label{eq:def-H}
    %\H = \frac1n \sum_{i=1}^n g(y_i, \w^\T \x_i) \x_i \x_i^\T \equiv \frac1n \X \D \X^\T,
    \H = \textstyle \frac1n \sum_{i=1}^n \ell''(y_i, \w^\T \x_i) \x_i \x_i^\T \equiv \frac1n \X \D \X^\T,
  \end{equation}
\end{enumerate}
for $\D \equiv \diag \{ \ell''(y_i, \w^\T \x_i) \}_{i=1}^n$, $\X = [\x_1, \ldots, \x_n] \in \RR^{p \times n}$ and $\ell''(y,h) \equiv \partial^2 \ell(y, h)/\partial h^2$.
The analysis is performed in the high dimensional limit as $n,p \to \infty$ with $p/n \to c \in (0,\infty)$, under the setting of input Gaussian feature $\x_i \sim \NN(\bmu, \C)$, with some statistical mean $\bmu \in \RR^p$ and positive definite covariance $\C \in \RR^{p \times p}$. 
We also demonstrate our results empirically by showing that:
\begin{enumerate}%[topsep=0pt,itemsep=0pt]
\setItemnumber{4}
  \item 
  for a given response model \eqref{eq:def-GLM}, the Hessian eigenvalue distribution depends on the choice of loss function and the data/feature statistics in an intrinsic manner, e.g., bounded versus unbounded support (Figure~\ref{fig:log-exp-lsd}) and single- versus multi-bulk (Figure~\ref{fig:logistic-lsd}); and
  \item 
  there may exist two \emph{qualitatively} different spikes---the well-known one due to \emph{data signal} $\bmu$, and the other less-obvious one due to the \emph{underlying model itself}---which may appear on different sides of the main bulk, and their associated phase transition behaviors are different and are characterized (Figure~\ref{fig:mu-spikes} versus~\ref{fig:logistic-spike-w}).
\end{enumerate}

%\michael{Let's talk about this figure in more detail, how to maximize clarity, and how it relates to figures in later sections.  For example, are these ``exhaustive,'' or are these ``illustrative,'' and if the latter do we illustrate all our main points.}
% To have a more clear picture of our contribution, Figure~\ref{fig:intro} depicts the empirical eigenvalue histograms and top eigenvectors of the Hessian matrix $\H$ of \eqref{eq:optim}, versus their limiting behaviors predicted by the proposed theory, by considering the example of (i) the (noiseless) phase retrieval model $y_i = (\w_*^\T \x_i)^2$ with square loss $\ell(y, h) = (y - h^2 )^2/4$; and (ii) the logistic model in \eqref{eq:def-logistic} with the (maximum likelihood) logistic loss $\ell(y, h) = \ln(1+e^{- y h})$.

% For phase retrieval model $y_i = (\w_*^\T \x_i)^2$ with square loss and $\w \sim \NN(\zo, \I_p/p)$ in Figure~\ref{fig:intro-c}, the non-convex nature of the problem is reflected by a (relatively large) fraction of negative Hessian eigenvalues. We also note that the dominant eigenvector (that corresponds to the largest eigenvalue) contains structural information of the underlying model (i.e., it is positively correlated with $\w_*$). For logistic model in \eqref{eq:def-logistic} with $\w_* = \bmu \sim \NN(\zo, \I_p/p)$, while for random $\w$ an isolated eigenvalue is observed in Figure~\ref{fig:intro-c}, it disappears  in Figure~\ref{fig:intro-d} for $\w = \bmu = \w_*$ at the same dimension ratio $p/n$.

\medskip

To illustrate pictorially our contributions, we examine in Figure~\ref{fig:intro} the Hessian eigenspectra as a function of the response model \eqref{eq:def-GLM}, the loss function $\ell$, and the feature statistics $\bmu, \C$ (that characterize the statistical property of the data/feature). In Figure~\ref{fig:intro-a}~and~\ref{fig:intro-b}, we compare the Hessian eigenvalues for the logistic model in \eqref{eq:def-logistic} with the (maximum likelihood) logistic loss $\ell(y, h) = \ln(1+e^{- y h})$, for different choices of parameter $\w$. A nontrivial interplay between the response model parameter $\w_*$, the feature statistics $\bmu,\C$, as well as $\w$ is reflected by the range of the Hessian eigenvalue support and an additional right-hand spike in Figure~\ref{fig:intro-b} (as opposed to Figure~\ref{fig:intro-a}), as confirmed by our theory.

For the phase retrieval model $y = (\w_*^\T \x)^2$ with square loss $\ell(y, h) = (y - h^2 )^2/4$, the non-convex nature of the problem is reflected by a (relatively large) fraction of negative Hessian eigenvalues in Figure~\ref{fig:intro-c}. We also note that the top eigenvector (that corresponds to the largest eigenvalue) contains structural information of the underlying model, in the sense that it is positively correlated with $\w_*$, as predicted by the theory in Figure~\ref{fig:intro-d}, and that there exists an additional negative eigenvalue. This is connected to the Hessian-based initialization scheme discussed at the end of Section~\ref{subsec:our_approach} and in Section~\ref{subsubsec:related} below widely used in non-convex problems.

\begin{figure}[!htb]
\vskip 0.1in
\begin{center}
\begin{subfigure}[t]{0.45\columnwidth}
  \begin{tikzpicture}[font=\footnotesize]
    \renewcommand{\axisdefaulttryminticks}{4} 
    \pgfplotsset{every major grid/.style={densely dashed}}       
    \pgfplotsset{every axis legend/.style={cells={anchor=west},fill=none, at={(0.98,0.98)}, anchor=north east, font=\footnotesize }}
    \begin{axis}[
      width=1\linewidth,
      xmin=0.05,
      ymin=0,
      xmax=0.4,
      ymax=6.5,
      bar width=3.5pt,
      grid=major,
      ymajorgrids=false,
      scaled ticks=true,
      %scale ticks above={4},
      xlabel={},
      ylabel={Eigenvalues}
      ]
      \addplot+[ybar,mark=none,draw=white,fill=BLUE!60!white,area legend] coordinates{
      (0.079127,4.517544)(0.089641,5.111958)(0.100156,5.468606)(0.110670,5.468606)(0.121185,5.230841)(0.131700,5.587489)(0.142214,5.230841)(0.152729,4.874193)(0.163243,4.993075)(0.173758,4.636427)(0.184272,4.517544)(0.194787,4.042013)(0.205302,4.042013)(0.215816,3.685365)(0.226331,3.685365)(0.236845,3.447600)(0.247360,2.972069)(0.257874,2.972069)(0.268389,2.734303)(0.278903,2.258772)(0.289418,2.258772)(0.299933,2.021007)(0.310447,1.545476)(0.320962,1.426593)(0.331476,0.951062)(0.341991,0.475531)(0.352505,0.000000)(0.363020,0.000000)(0.373535,0.000000)(0.384049,0.000000)
      };
      \addplot[densely dashed,RED,line width=1.5pt] coordinates{
      (0.062834,0.001568)(0.069332,0.039432)(0.075829,3.445968)(0.082326,4.424502)(0.088824,4.945368)(0.095321,5.237332)(0.101818,5.393890)(0.108315,5.463639)(0.114813,5.475043)(0.121310,5.446012)(0.127807,5.388362)(0.134305,5.310176)(0.140802,5.217119)(0.147299,5.113239)(0.153796,5.001480)(0.160294,4.883999)(0.166791,4.762391)(0.173288,4.637835)(0.179786,4.511208)(0.186283,4.383153)(0.192780,4.254139)(0.199278,4.124491)(0.205775,3.994437)(0.212272,3.864105)(0.218769,3.733560)(0.225267,3.602803)(0.231764,3.471783)(0.238261,3.340397)(0.244759,3.208500)(0.251256,3.075890)(0.257753,2.942320)(0.264251,2.807481)(0.270748,2.670996)(0.277245,2.532400)(0.283742,2.391113)(0.290240,2.246404)(0.296737,2.097331)(0.303234,1.942638)(0.309732,1.780604)(0.316229,1.608734)(0.322726,1.423158)(0.329223,1.217284)(0.335721,0.977977)(0.342218,0.669766)(0.348715,0.001744)(0.355213,0.000437)(0.361710,0.000292)(0.368207,0.000225)(0.374705,0.000184)(0.381202,0.000157)
      };
      %\legend{ {Hessian eigenvalues}, {Limiting spectrum} }; 
      \legend{ {Empirical}, {Theory} }; 
    \end{axis}
  \end{tikzpicture}
  \caption{ { Logistic, $\w = \w^* = \bmu$ } } \label{fig:intro-a}
\end{subfigure}
%%%
\begin{subfigure}[t]{0.45\columnwidth}
\centering
  \begin{tikzpicture}[font=\footnotesize,spy using outlines]
    \renewcommand{\axisdefaulttryminticks}{4} 
    \pgfplotsset{every major grid/.append style={densely dashed}}       
    \pgfplotsset{every axis legend/.append style={cells={anchor=west},fill=none, at={(0.98,0.98)}, anchor=north east, font=\footnotesize }}
    \begin{axis}[
      width=1\linewidth,
      xmin=0.05,
      ymin=0,
      xmax=0.5,
      ymax=5.5,
      bar width=3.5pt,
      grid=major,
      ymajorgrids=false,
      scaled ticks=true,
      %scale ticks above={4},
      xlabel={},
      ylabel={}
      ]
      \addplot+[ybar,mark=none,draw=white,fill=BLUE!60!white,area legend] coordinates{
      (0.097894,3.696031)(0.112775,4.452038)(0.127656,4.788041)(0.142537,4.956042)(0.157418,4.704040)(0.172298,4.284036)(0.187179,4.284036)(0.202060,4.200036)(0.216941,3.864033)(0.231822,3.780032)(0.246703,3.360028)(0.261583,3.108026)(0.276464,3.024026)(0.291345,2.772023)(0.306226,2.520021)(0.321107,2.184019)(0.335988,1.848016)(0.350868,1.596014)(0.365749,1.344011)(0.380630,0.840007)(0.395511,0.168001)(0.410392,0.000000)(0.425272,0.000000)(0.440153,0.000000)(0.455034,0.000000)(0.469915,0.084001)(0.484796,0.000000)(0.499677,0.000000)(0.514557,0.000000)(0.529438,0.000000)
      };
      \addplot[densely dashed,RED,line width=1.5pt] coordinates{
      (0.082008,0.001127)(0.083150,1.272566)(0.092293,3.373887)(0.101436,4.145756)(0.110578,4.530093)(0.119721,4.719987)(0.128864,4.796797)(0.138006,4.802782)(0.147149,4.762390)(0.156292,4.690725)(0.165434,4.597565)(0.174577,4.489421)(0.183720,4.370750)(0.192862,4.244631)(0.202005,4.113227)(0.211148,3.978042)(0.220290,3.840127)(0.229433,3.700187)(0.238576,3.558677)(0.247718,3.415862)(0.256861,3.271834)(0.266003,3.126561)(0.275146,2.979890)(0.284289,2.831542)(0.293431,2.681127)(0.302574,2.528095)(0.311717,2.371727)(0.320859,2.211060)(0.330002,2.044802)(0.339145,1.871137)(0.348287,1.687435)(0.357430,1.489597)(0.366573,1.270590)(0.375715,1.016275)(0.384858,0.687824)(0.394001,0.001073)(0.403143,0.000308)
      %(0.412286,0.000204)(0.421429,0.000156)(0.430571,0.000126)(0.439714,0.000106)(0.448857,0.000092)(0.457999,0.000080)(0.467142,0.000071)(0.476285,0.000064)(0.485427,0.000058)(0.494570,0.000053)(0.503713,0.000048)(0.512855,0.000044)(0.521998,0.000041)
      };
      \addplot+[only marks,mark=x,RED] coordinates{ (0.4673,0) };
      \coordinate (spy0) at (axis cs:0.47,0.005); \coordinate (spypoint0) at (axis cs:0.4,4);
      %\coordinate (spy1) at (axis cs:-1.2,0.005); \coordinate (spypoint1) at (axis cs:10,0.15);
    \begin{scope}
      \spy[black!50!white,size=0.8cm,circle,connect spies,magnification=3] on (spy0) in node [fill=none] at (spypoint0);
      %\spy[black!50!white,size=0.8cm,circle,connect spies,magnification=2] on (spy1) in node [fill=none] at (spypoint1);
      \end{scope}
    \end{axis}
  \end{tikzpicture}
  \caption{ { Logistic, $\w \sim \NN(\zo, \I_p/p)$ } } \label{fig:intro-b}
\end{subfigure}
\bigskip%
\begin{subfigure}[t]{0.45\columnwidth}
\centering
  \begin{tikzpicture}[font=\footnotesize,spy using outlines]
    \renewcommand{\axisdefaulttryminticks}{4} 
    \pgfplotsset{every major grid/.append style={densely dashed}}       
    \pgfplotsset{every axis legend/.style={cells={anchor=west},fill=none, at={(0,0.98)}, anchor=north west, font=\footnotesize }}
    \pgfplotsset{every major grid/.style={densely dashed}}
    \begin{axis}[
      width=1\linewidth,
      xmin=-18,
      ymin=0,
      xmax=7.5,
      ymax=0.2,
      ytick={0,0.1,0.2},
      bar width=2.5pt,
      grid=major,
      ymajorgrids=false,
      scaled ticks=true,
      %scale ticks above={4},
      xlabel={},
      ylabel={}
      ]
      \addplot+[ybar,mark=none,draw=white,fill=BLUE!60!white,area legend] coordinates{
      (-14.356836,0.000000)(-13.899090,0.000000)(-13.441344,0.002731)(-12.983598,0.000000)(-12.525852,0.000000)(-12.068106,0.000000)(-11.610360,0.000000)(-11.152614,0.000000)(-10.694868,0.000000)(-10.237122,0.002731)(-9.779376,0.000000)(-9.321630,0.002731)(-8.863884,0.002731)(-8.406138,0.010923)(-7.948392,0.010923)(-7.490646,0.016385)(-7.032900,0.016385)(-6.575154,0.024577)(-6.117408,0.030038)(-5.659662,0.030038)(-5.201916,0.040962)(-4.744170,0.043692)(-4.286424,0.062808)(-3.828678,0.060077)(-3.370932,0.076462)(-2.913186,0.087385)(-2.455441,0.103769)(-1.997695,0.120154)(-1.539949,0.142000)(-1.082203,0.163846)(-0.624457,0.188423)(-0.166711,0.191154)(0.291035,0.174769)(0.748781,0.147462)(1.206527,0.120154)(1.664273,0.087385)(2.122019,0.068269)(2.579765,0.057346)(3.037511,0.032769)(3.495257,0.027308)(3.953003,0.016385)(4.410749,0.010923)(4.868495,0.002731)(5.326241,0.002731)(5.783987,0.002731)(6.241733,0.000000)(6.699479,0.000000)(7.157225,0.002731)(7.614971,0.000000)(8.072717,0.000000)
      };
      \addplot[densely dashed,RED,line width=1.5pt] coordinates{
      %(-14.585709,0.000004)(-14.359147,0.000005)(-14.132586,0.000013)(-13.906025,0.000012)(-13.679464,0.000013)(-13.452903,0.000021)(-13.226342,0.000020)(-12.999781,0.000034)(-12.773219,0.000044)(-12.546658,0.000049)(-12.320097,0.000073)(-12.093536,0.000095)(-11.866975,0.000117)(-11.640414,0.000153)(-11.413853,0.000203)(-11.187292,0.000270)
      (-10.960730,0.000359)(-10.734169,0.000480)(-10.507608,0.000644)(-10.281047,0.000867)(-10.054486,0.001170)(-9.827925,0.001577)(-9.601364,0.002115)(-9.374802,0.002810)(-9.148241,0.003679)(-8.921680,0.004727)(-8.695119,0.005952)(-8.468558,0.007340)(-8.241997,0.008877)(-8.015436,0.010549)(-7.788875,0.012350)(-7.562313,0.014273)(-7.335752,0.016313)(-7.109191,0.018472)(-6.882630,0.020751)(-6.656069,0.023157)(-6.429508,0.025692)(-6.202947,0.028366)(-5.976385,0.031188)(-5.749824,0.034171)(-5.523263,0.037323)(-5.296702,0.040662)(-5.070141,0.044203)(-4.843580,0.047965)(-4.617019,0.051968)(-4.390458,0.056236)(-4.163896,0.060796)(-3.937335,0.065676)(-3.710774,0.070911)(-3.484213,0.076537)(-3.257652,0.082597)(-3.031091,0.089138)(-2.804530,0.096209)(-2.577968,0.103860)(-2.351407,0.112144)(-2.124846,0.121106)(-1.898285,0.130767)(-1.671724,0.141109)(-1.445163,0.152027)(-1.218602,0.163253)(-0.992041,0.174241)(-0.765479,0.183998)(-0.538918,0.190969)(-0.312357,0.193255)(-0.085796,0.189467)(0.140765,0.179813)(0.367326,0.166101)(0.593887,0.150561)(0.820449,0.134860)(1.047010,0.119921)(1.273571,0.106138)(1.500132,0.093609)(1.726693,0.082291)(1.953254,0.072084)(2.179815,0.062870)(2.406376,0.054537)(2.632938,0.046984)(2.859499,0.040122)(3.086060,0.033878)(3.312621,0.028194)(3.539182,0.023032)(3.765743,0.018372)(3.992304,0.014214)(4.218866,0.010588)(4.445427,0.007548)(4.671988,0.005142)(4.898549,0.003372)(5.125110,0.002163)(5.351671,0.001379)(5.578232,0.000883)
      %(5.804793,0.000565)(6.031355,0.000386)(6.257916,0.000270)(6.484477,0.000187)(6.711038,0.000134)(6.937599,0.000100)(7.164160,0.000065)(7.390721,0.000035)(7.617283,0.000021)(7.843844,0.000011)
      };
      \addplot+[only marks,mark=x,RED] coordinates{ (6.8673,0) (-13.1429,0) };
      \coordinate (spy0) at (axis cs:7.2,0.001); \coordinate (spypoint0) at (axis cs:5,0.1);
      \coordinate (spy1) at (axis cs:-13.4,0.001); \coordinate (spypoint1) at (axis cs:-10,0.1);
    \begin{scope}
      \spy[black!50!white,size=.8cm,circle,connect spies,magnification=3] on (spy0) in node [fill=none] at (spypoint0);
      \spy[black!50!white,size=.8cm,circle,connect spies,magnification=3] on (spy1) in node [fill=none] at (spypoint1);
      \end{scope}
    \end{axis}
  \end{tikzpicture}
  \caption{ { Phase retrieval model} } \label{fig:intro-c}
\end{subfigure}
\begin{subfigure}[t]{0.45\columnwidth}
\centering
\begin{tikzpicture}[font=\footnotesize]
  \renewcommand{\axisdefaulttryminticks}{4} 
  \pgfplotsset{every major grid/.style={densely dashed}}
  \pgfplotsset{every axis legend/.style={cells={anchor=west},fill=none,
  at={(0.02,0.98)}, anchor=north west, font=\footnotesize}}
  \begin{axis}[
  width=1\linewidth,
  height=.9\linewidth,
  xmin=0,ymin=-0.1,xmax=400,ymax=0.15,
  scaled ticks = false,
  xticklabels={},
  yticklabels={},
  grid=major,
  ymajorgrids=false,
  xlabel={},
  ylabel={}]
  \addplot[smooth,BLUE,line width=.5pt] plot coordinates{
  (1,-0.068066)(2,-0.002438)(3,-0.062203)(4,-0.028113)(5,-0.049016)(6,-0.048447)(7,-0.052379)(8,-0.017244)(9,-0.003679)(10,-0.044999)(11,-0.013293)(12,-0.052129)(13,0.005828)(14,-0.038130)(15,-0.028524)(16,-0.025445)(17,-0.029140)(18,-0.006990)(19,-0.016768)(20,-0.037026)(21,-0.056784)(22,-0.078878)(23,-0.068925)(24,-0.009263)(25,-0.046696)(26,-0.041508)(27,0.001781)(28,-0.003824)(29,-0.042494)(30,-0.023271)(31,-0.003527)(32,-0.047596)(33,-0.026156)(34,-0.025454)(35,-0.036877)(36,0.016882)(37,-0.033169)(38,-0.070355)(39,-0.041812)(40,-0.025701)(41,-0.039810)(42,-0.021903)(43,-0.005721)(44,-0.001610)(45,-0.010121)(46,-0.007551)(47,-0.036752)(48,-0.091069)(49,-0.037062)(50,-0.017188)(51,-0.033275)(52,0.004860)(53,-0.040898)(54,-0.038007)(55,-0.036887)(56,-0.041260)(57,-0.050690)(58,-0.013382)(59,-0.019561)(60,0.011264)(61,-0.030668)(62,-0.046470)(63,-0.006688)(64,0.001885)(65,-0.046180)(66,-0.046682)(67,-0.038312)(68,-0.049789)(69,-0.056946)(70,-0.047605)(71,-0.019693)(72,-0.027999)(73,-0.042925)(74,-0.029077)(75,-0.035695)(76,-0.014675)(77,0.015605)(78,-0.040141)(79,-0.011980)(80,-0.023127)(81,-0.025620)(82,-0.042054)(83,0.040233)(84,-0.007725)(85,-0.037022)(86,-0.076370)(87,0.000813)(88,-0.001764)(89,-0.059541)(90,-0.018081)(91,-0.011911)(92,-0.030531)(93,-0.035166)(94,-0.037283)(95,-0.007543)(96,-0.001938)(97,-0.039271)(98,-0.009038)(99,-0.031785)(100,-0.098116)(101,-0.018869)(102,-0.027619)(103,-0.042024)(104,0.001931)(105,-0.024456)(106,-0.041694)(107,-0.034029)(108,-0.062338)(109,-0.021818)(110,-0.030640)(111,-0.044516)(112,-0.023555)(113,-0.063339)(114,-0.021374)(115,-0.024939)(116,-0.043033)(117,0.003628)(118,-0.049878)(119,-0.008397)(120,-0.049155)(121,-0.044701)(122,-0.029437)(123,-0.006716)(124,-0.002630)(125,-0.006823)(126,-0.044472)(127,-0.028761)(128,-0.015111)(129,-0.045827)(130,-0.056456)(131,-0.007805)(132,-0.024496)(133,-0.030307)(134,0.004247)(135,-0.050600)(136,-0.054611)(137,-0.055609)(138,0.004125)(139,-0.045526)(140,-0.040812)(141,-0.015304)(142,-0.032198)(143,-0.054217)(144,-0.025745)(145,-0.024771)(146,-0.017060)(147,-0.029839)(148,-0.013281)(149,-0.031167)(150,-0.016415)(151,-0.030770)(152,-0.003462)(153,-0.012742)(154,-0.005816)(155,-0.023400)(156,-0.033553)(157,-0.034301)(158,-0.051006)(159,-0.003145)(160,0.004062)(161,0.018782)(162,0.005635)(163,-0.025618)(164,-0.031431)(165,-0.021797)(166,-0.034114)(167,-0.021043)(168,-0.066626)(169,-0.040855)(170,-0.014191)(171,-0.030306)(172,-0.050796)(173,-0.043601)(174,-0.014139)(175,-0.052981)(176,-0.029580)(177,-0.037131)(178,-0.049030)(179,-0.026540)(180,-0.007317)(181,-0.029882)(182,-0.053969)(183,-0.045359)(184,-0.026460)(185,-0.024127)(186,-0.056145)(187,-0.024412)(188,-0.023593)(189,-0.029119)(190,-0.023727)(191,-0.057076)(192,-0.031650)(193,-0.058332)(194,-0.017907)(195,-0.065452)(196,-0.026435)(197,-0.049790)(198,-0.043325)(199,-0.063201)(200,-0.026089)(201,0.029468)(202,0.037537)(203,0.065472)(204,0.028975)(205,0.014974)(206,0.069570)(207,0.051164)(208,0.016505)(209,0.015056)(210,0.011895)(211,0.032690)(212,0.058720)(213,0.042627)(214,0.023196)(215,0.013812)(216,0.017193)(217,0.034041)(218,-0.018154)(219,0.020127)(220,0.019526)(221,0.029446)(222,0.031455)(223,0.046185)(224,-0.006378)(225,0.012394)(226,0.052486)(227,0.061231)(228,0.029473)(229,0.054861)(230,0.013878)(231,0.022133)(232,0.026110)(233,0.025030)(234,0.003529)(235,0.049747)(236,0.020494)(237,0.005551)(238,0.046222)(239,0.018281)(240,-0.026886)(241,0.073762)(242,0.005923)(243,0.035296)(244,0.006282)(245,0.023055)(246,0.026209)(247,-0.018374)(248,0.055830)(249,0.053099)(250,0.042271)(251,-0.005076)(252,0.004904)(253,0.018483)(254,0.036328)(255,0.016690)(256,0.078326)(257,0.007063)(258,-0.006019)(259,0.015301)(260,0.048241)(261,0.022986)(262,0.018674)(263,0.003078)(264,-0.009121)(265,0.009337)(266,0.055592)(267,0.062021)(268,0.055716)(269,0.023899)(270,0.004854)(271,0.031868)(272,0.030514)(273,0.003514)(274,0.028414)(275,0.036804)(276,0.029785)(277,0.028202)(278,0.013018)(279,0.019374)(280,0.038951)(281,0.005850)(282,0.064373)(283,0.041114)(284,0.019199)(285,0.014922)(286,0.010083)(287,0.038213)(288,0.016255)(289,0.048470)(290,0.003967)(291,0.037075)(292,-0.008017)(293,0.038440)(294,-0.007496)(295,0.005239)(296,0.033654)(297,0.006004)(298,0.033275)(299,0.037514)(300,0.051331)(301,0.002280)(302,0.046148)(303,0.024637)(304,0.012475)(305,0.071894)(306,0.015064)(307,0.044393)(308,0.024039)(309,0.032012)(310,0.003013)(311,0.003899)(312,0.029271)(313,0.058712)(314,0.046560)(315,0.020077)(316,0.021462)(317,0.017451)(318,0.015155)(319,-0.022485)(320,0.041048)(321,0.059769)(322,-0.020070)(323,0.010858)(324,0.052880)(325,-0.027235)(326,0.031520)(327,-0.013006)(328,0.068324)(329,0.016715)(330,0.023408)(331,0.012586)(332,0.013914)(333,0.046108)(334,0.053832)(335,0.005057)(336,0.058751)(337,0.022399)(338,0.015979)(339,0.039033)(340,0.029523)(341,-0.005187)(342,0.009304)(343,0.017871)(344,-0.007481)(345,0.030683)(346,0.022422)(347,0.055456)(348,0.023010)(349,0.049084)(350,0.007001)(351,0.033181)(352,0.048009)(353,-0.018972)(354,0.029308)(355,0.014789)(356,0.051428)(357,0.046503)(358,0.061910)(359,0.038468)(360,0.031679)(361,-0.006717)(362,0.055746)(363,0.002248)(364,0.023540)(365,0.050699)(366,0.033868)(367,0.027086)(368,0.047804)(369,-0.018982)(370,0.055782)(371,0.043099)(372,0.040735)(373,-0.001587)(374,0.046187)(375,-0.010979)(376,0.043372)(377,0.083049)(378,0.020425)(379,0.042747)(380,0.028813)(381,0.022094)(382,0.023174)(383,0.034847)(384,0.021824)(385,0.034054)(386,0.013693)(387,0.026017)(388,0.080552)(389,0.027038)(390,0.024947)(391,0.020550)(392,0.023130)(393,0.004840)(394,0.037517)(395,0.037781)(396,0.064583)(397,0.042302)(398,0.077596)(399,0.010762)(400,0.038815)
  };
  \addplot[densely dashed,RED,line width=1.5pt] plot coordinates{
  (1,-0.025092)(2,-0.026478)(3,-0.026752)(4,-0.026070)(5,-0.025260)(6,-0.025353)(7,-0.025866)(8,-0.025093)(9,-0.026432)(10,-0.025398)(11,-0.026373)(12,-0.026664)(13,-0.026777)(14,-0.025454)(15,-0.025246)(16,-0.026271)(17,-0.025251)(18,-0.025203)(19,-0.025930)(20,-0.024591)(21,-0.026905)(22,-0.026932)(23,-0.026075)(24,-0.026032)(25,-0.027091)(26,-0.025110)(27,-0.026153)(28,-0.025154)(29,-0.026464)(30,-0.026158)(31,-0.026061)(32,-0.026373)(33,-0.025352)(34,-0.025789)(35,-0.026044)(36,-0.025731)(37,-0.025361)(38,-0.025525)(39,-0.025356)(40,-0.025911)(41,-0.025934)(42,-0.025360)(43,-0.026367)(44,-0.024801)(45,-0.026321)(46,-0.026491)(47,-0.024126)(48,-0.025486)(49,-0.026877)(50,-0.026678)(51,-0.025217)(52,-0.027065)(53,-0.026096)(54,-0.026807)(55,-0.024812)(56,-0.026389)(57,-0.025634)(58,-0.026148)(59,-0.026358)(60,-0.026463)(61,-0.025887)(62,-0.026773)(63,-0.026351)(64,-0.026467)(65,-0.026175)(66,-0.025324)(67,-0.026149)(68,-0.025402)(69,-0.026210)(70,-0.025450)(71,-0.026089)(72,-0.025426)(73,-0.026004)(74,-0.025965)(75,-0.026026)(76,-0.026651)(77,-0.026154)(78,-0.026574)(79,-0.026226)(80,-0.026040)(81,-0.024964)(82,-0.026484)(83,-0.026053)(84,-0.026275)(85,-0.025527)(86,-0.025594)(87,-0.025322)(88,-0.025181)(89,-0.025381)(90,-0.025705)(91,-0.025793)(92,-0.027101)(93,-0.025593)(94,-0.026329)(95,-0.026089)(96,-0.025950)(97,-0.025091)(98,-0.025993)(99,-0.026041)(100,-0.025143)(101,-0.026645)(102,-0.025604)(103,-0.026157)(104,-0.025753)(105,-0.026472)(106,-0.025001)(107,-0.024621)(108,-0.024978)(109,-0.026052)(110,-0.026887)(111,-0.025731)(112,-0.026670)(113,-0.024678)(114,-0.025047)(115,-0.025763)(116,-0.026297)(117,-0.025940)(118,-0.026183)(119,-0.025515)(120,-0.025106)(121,-0.025865)(122,-0.026460)(123,-0.025512)(124,-0.025628)(125,-0.025643)(126,-0.025121)(127,-0.025278)(128,-0.026098)(129,-0.026177)(130,-0.026367)(131,-0.026553)(132,-0.026555)(133,-0.026386)(134,-0.026110)(135,-0.026685)(136,-0.026674)(137,-0.026158)(138,-0.025109)(139,-0.025420)(140,-0.025317)(141,-0.026000)(142,-0.026769)(143,-0.025797)(144,-0.025929)(145,-0.025830)(146,-0.025529)(147,-0.025485)(148,-0.026221)(149,-0.025271)(150,-0.025390)(151,-0.026983)(152,-0.026690)(153,-0.026205)(154,-0.026489)(155,-0.025010)(156,-0.026461)(157,-0.025737)(158,-0.025401)(159,-0.025648)(160,-0.026408)(161,-0.025662)(162,-0.026635)(163,-0.026613)(164,-0.027411)(165,-0.026183)(166,-0.025861)(167,-0.025718)(168,-0.025615)(169,-0.026083)(170,-0.025548)(171,-0.026200)(172,-0.026100)(173,-0.025847)(174,-0.026021)(175,-0.025707)(176,-0.024662)(177,-0.025365)(178,-0.025654)(179,-0.026448)(180,-0.024816)(181,-0.025693)(182,-0.025439)(183,-0.026056)(184,-0.026311)(185,-0.026480)(186,-0.026211)(187,-0.024924)(188,-0.025767)(189,-0.025791)(190,-0.025448)(191,-0.025683)(192,-0.026737)(193,-0.026265)(194,-0.026499)(195,-0.026726)(196,-0.026007)(197,-0.025187)(198,-0.025409)(199,-0.025015)(200,-0.026651)(201,0.026615)(202,0.024611)(203,0.024449)(204,0.025390)(205,0.026171)(206,0.026422)(207,0.026345)(208,0.026378)(209,0.026109)(210,0.024833)(211,0.025394)(212,0.025081)(213,0.024520)(214,0.025452)(215,0.026475)(216,0.025803)(217,0.025252)(218,0.025138)(219,0.026072)(220,0.025794)(221,0.026042)(222,0.025595)(223,0.025412)(224,0.026170)(225,0.026349)(226,0.026955)(227,0.027066)(228,0.026201)(229,0.025029)(230,0.026048)(231,0.024873)(232,0.026227)(233,0.025649)(234,0.025773)(235,0.025068)(236,0.026856)(237,0.025291)(238,0.025266)(239,0.025907)(240,0.025921)(241,0.026485)(242,0.026262)(243,0.026062)(244,0.026353)(245,0.025935)(246,0.024783)(247,0.025513)(248,0.025832)(249,0.026180)(250,0.026592)(251,0.026037)(252,0.026370)(253,0.025503)(254,0.025805)(255,0.025615)(256,0.025668)(257,0.025250)(258,0.026115)(259,0.026632)(260,0.025093)(261,0.025829)(262,0.025013)(263,0.025318)(264,0.025301)(265,0.026461)(266,0.025691)(267,0.026350)(268,0.026363)(269,0.026293)(270,0.025292)(271,0.024585)(272,0.026150)(273,0.025862)(274,0.025790)(275,0.025715)(276,0.026157)(277,0.025323)(278,0.025245)(279,0.026604)(280,0.026143)(281,0.025065)(282,0.025494)(283,0.026573)(284,0.025303)(285,0.026365)(286,0.026008)(287,0.026465)(288,0.026147)(289,0.026909)(290,0.025522)(291,0.025957)(292,0.025796)(293,0.025470)(294,0.026266)(295,0.026099)(296,0.025592)(297,0.026511)(298,0.026439)(299,0.024887)(300,0.025138)(301,0.025436)(302,0.026523)(303,0.025650)(304,0.025209)(305,0.024820)(306,0.025730)(307,0.026392)(308,0.025041)(309,0.026532)(310,0.026092)(311,0.025570)(312,0.025541)(313,0.025725)(314,0.025224)(315,0.026561)(316,0.025716)(317,0.025589)(318,0.026247)(319,0.027032)(320,0.024946)(321,0.026640)(322,0.024591)(323,0.026014)(324,0.026706)(325,0.025135)(326,0.025402)(327,0.024855)(328,0.025749)(329,0.024391)(330,0.025457)(331,0.024691)(332,0.026430)(333,0.026585)(334,0.026832)(335,0.025711)(336,0.025724)(337,0.026315)(338,0.025857)(339,0.025490)(340,0.025735)(341,0.026538)(342,0.024256)(343,0.026047)(344,0.026537)(345,0.024840)(346,0.025639)(347,0.025920)(348,0.025993)(349,0.027156)(350,0.025973)(351,0.025920)(352,0.025760)(353,0.025341)(354,0.027025)(355,0.026283)(356,0.025923)(357,0.025874)(358,0.026100)(359,0.026460)(360,0.025942)(361,0.025960)(362,0.025619)(363,0.024539)(364,0.026416)(365,0.026255)(366,0.025530)(367,0.025817)(368,0.026437)(369,0.026606)(370,0.025924)(371,0.026375)(372,0.025237)(373,0.026351)(374,0.025688)(375,0.025059)(376,0.025849)(377,0.026226)(378,0.026177)(379,0.025555)(380,0.025152)(381,0.024791)(382,0.025998)(383,0.025943)(384,0.025844)(385,0.026138)(386,0.026117)(387,0.025527)(388,0.025654)(389,0.027116)(390,0.025491)(391,0.026437)(392,0.027206)(393,0.024969)(394,0.025535)(395,0.025457)(396,0.025384)(397,0.025718)(398,0.025938)(399,0.026446)(400,0.026977)
  };
  \legend{ { Top eigenvector }, {Theory} }; %{ $\w_*/\| \w_* \|$ }
  \end{axis}
  \end{tikzpicture}
  \caption{ { Top eigenvector in Figure~\ref{fig:intro-c} } } \label{fig:intro-d}
\end{subfigure}
\caption{ Illustration of our main results: eigenspectral properties of the Hessian $\H(\w)$ of G-GLM with $p=800$, $n = 6\,000$ and $\C = \I_p$. \textbf{(Top)}: presence versus absence of a right-hand side spike for different $\w$, logistic model \eqref{eq:def-logistic} with logistic loss $\ell(y, h) = \ln( 1 + e^{- y h} )$ and $\w_* = \bmu \sim \NN( \zo, \I_p/p)$. There is a nontrivial interplay between the response model \eqref{eq:def-GLM}, the feature statistics $\bmu, \C$, and $\w$, as reflected by the ``range'' of the Hessian eigenvalues and an additional right-hand spike, in Figure~\ref{fig:intro-b}~versus~\ref{fig:intro-a}, both captured by our theory. \textbf{(Bottom)}: the Hessian eigenspectrum may have a rather different shape (as opposed to the Mar\u{c}enko-Pastur-like in Figure~\ref{fig:intro-a}~and~\ref{fig:intro-b}) for the phase retrieval model \textbf{(left)} and the right-hand side isolated eigenvector is known here be an estimate of $\w_*$ \textbf{(right)}, as confirmed by our theory. With square loss $\ell(y,h) = (y - h^2)^2/4$, $\w_* = [-2 \cdot \one_{p/2};~2 \cdot \one_{p/2}]/\sqrt p$, $\w \sim \NN(\zo, \I_p/p)$ and $\bmu = \zo$.
%\michael{Can we have a self-contained and completely clear caption, not just about what we are showing, but also why we are showing it.}
}
\label{fig:intro}
\end{center}
\vskip -0.1in
\end{figure}

\subsection{Notation and organization of the paper}

\paragraph{Notations.} 
Throughout the paper, we follow the convention of denoting scalars by lowercase, vectors by lowercase boldface, and matrices by uppercase boldface letters. The notation $(\cdot)^\T$ denotes the transpose operator. The norm $\| \cdot \| $ is the Euclidean norm for vectors and the spectral norm for matrices. We use $\asto$ for almost sure convergence of random variables. We use $\Im[\cdot]$ to denote the imaginary part of a complex number and $\imath$ to present the imaginary unit.

\paragraph{Organization of the paper.} 

We discuss related previous efforts and preliminaries in Section~\ref{sec:background}. Our main technical results on the limiting Hessian eigenvalue distribution, as well as on the behavior of the Hessian isolated eigenvalues and associated eigenvectors are presented in Section~\ref{sec:main_results}. In Section~\ref{sec:discuss}, further discussions on the consequences of these technical results are made, together with numerical evaluations. We conclude the article in Section~\ref{sec:conclusion} with some possible future extensions.

\section{Background and Preliminaries}
\label{sec:background}

In this section, we discuss related previous efforts in Section~\ref{subsubsec:related} and some (technical) preliminaries on the spiked random matrix model and the deterministic equivalent technique in Section~\ref{sec:pre-spiked}~and~Section~\ref{subsec:pre-DE}, respectively.

\subsection{Related work}
\label{subsubsec:related}

\paragraph{Random matrix theory.} 
Random matrices of the type \eqref{eq:def-H} are related to the so-called \emph{separable covariance model} \cite{zhang2006spectral,couillet2014analysis}.
This model is of the form $\frac1n \C^{\frac12} \Z \D \Z^\T \C^{\frac12}$, for $\Z \in \RR^{p \times n}$ having, say, i.i.d.\@ standard Gaussian entries, and \emph{deterministic} or \emph{independent} matrices $\C \in \RR^{p \times p}$ and $ \D \in \RR^{n \times n}$.
Our results generalize this, in the sense that we allow the matrix $\D$ to \emph{depend} on the random matrix $\Z$, in a possibly nonlinear fashion, per \eqref{eq:def-H}. 
This is of direct interest for the Hessian matrix of the G-GLM.
We particularly show that, in the case where $\Z$ is a standard Gaussian random matrix, while the limiting spectral measure remains the same \emph{as if $\D$ was independent} (Theorem~\ref{theo:lsd}), spurious isolated eigenvalue may appear due to this dependence between $\D$ and $\Z$ (Theorem~\ref{theo:spike}~and~\ref{theo:eigenvector}).
This is directly related to the popular spectral initialization method used in non-convex optimization~\cite{lu2019phase}.

\paragraph{Hessian eigenspectra in ML models.}  
The eigenspectra of Hessian matrices arising in ML models (in particular, for NNs) have attracted considerable interest recently, both from a theoretical perspective \cite{sagun2016eigenvalues,sagun2017empirical,chaudhari2019entropy,fort2019emergent,granziol2020beyond,wu2020dissecting,pennington2017geometry,geiger2019jamming,jacot2019asymptotic} and from an empirical perspective \cite{dong2019hawq,shen2020q,yao2019pyhessian,yao2020adahessian}. 
For example, the Hessian eigenspectrum provides a characterization of the (local) loss curvature and therefore the so-called  ``sharpness'' (or ``flatness'') of a trained model.\footnote{Some claim that this corresponds to the generalization performance of the model~\cite{hochreiter1997flat,keskar2016large,chaudhari2019entropy,dinh2017sharp}.}
However, these investigations are either limited to empirical evaluation \cite{sagun2016eigenvalues,sagun2017empirical} or built upon somewhat unrealistic simplifying assumptions that reduce to the ``mixed'' behavior of Mar{\u c}enko–Pastur and semicircle law \cite{pennington2017geometry,choromanska2015loss}, or to that of the product of independent Gaussian matrices \cite{granziol2020beyond}. 
In contrast, we focus on the more tractable example of G-GLM in \eqref{eq:def-GLM}, and we provide \emph{precise} results on the Hessian eigenspectra for structural Gaussian feature on arbitrary loss functions. 
Also, instead of focusing solely on the limiting eigenvalue distribution, our results also shed novel light on the isolated eigenvalues (above and/or below the bulk) that are empirically observed in modern NNs \cite{sagun2016eigenvalues,fort2019emergent,granziol2020beyond,li2020hessian,papyan2020traces}.

\paragraph{Hessian eigenspectra: a statistical physics perspective.}%mechanics

Statistical physicists are interested in the energy ``landscape'' of random functions, by means of studying the Hessian of such functions \cite{fyodorov2004complexity,fyodorov2007replica,fyodorov2020manifolds,yamada2018hessian,fyodorov2018hessian}. 
This is somewhat different from the Hessian of the loss function in optimization, and it is mainly in the context of spin glass theory.
%but this work provides important background for the study of Hessians for ML models such as NNs.
Depending on the model, the associated Hessian eigenspectra may establish a phase transition behavior and take either a Mar\u{c}enko-Pastur or a semicircle shape, but a wide range of other phenomena have also been studied. 
These are known to have connections with ML systems such as NN models in high dimensions \cite{advani2013statistical,zdeborova2016statistical}.

\paragraph{Spectral initialization in non-convex problems.}

For non-convex problems, the performance of gradient-based methods can be very sensitive to initialization. 
%For instance, the trajectory of gradient descent may stop at a local maximum, a saddle point, or a (bad) local minimum near the initialization point, in which case it can yield unsatisfactory (training and test) performances. 
%One possible solution is to use a ``good'' initial point.
%When initialized within a locally convex ``valley'' that contains the global minimum, gradient descent is guaranteed to converge at some (polynomial) rate \cite{nesterov2004introductory}, unlike in the case of generic non-convex setting where it may take an exponential number of iterations \cite{du2017gradient}.
%
A popular initialization method is the \emph{spectral initialization} method, which can be traced back to the Principal Hessian Directions method \cite{li1992principal}.
Similar techniques were then proposed to provide a ``good'' initialization for gradient descent in non-convex problems such as phase retrieval \cite{fienup1982phase,netrapalli2013phase,candes2013phaselift}, matrix completion \cite{keshavan2010matrix}, low-rank matrix recovery \cite{jain2013low}, blind deconvolution \cite{lee2016blind}, and sparse coding \cite{arora2015simple}.
In \cite{lu2019phase}, which was generalized by \cite{mondelli2018fundamental}, the authors evaluated the eigenspectrum asymptotics of the generalized sample covariance matrix $\frac1n \sum_{i=1}^n f(y_i) \x_i \x_i^\T$, for some processing function $f: \RR \to \RR$ and $\x_i \sim \NN(\zo, \I_p)$. 
Their technical approach is, however, limited to the case of very homogeneous features, $\x_i \sim \NN(\zo, \I_p)$, and it was particularly focused on the phase retrieval model, instead of the Hessian (that depends on the loss) more generally. 
In a sense, we generalize the analysis in \cite{lu2019phase} to the Hessian matrix of G-GLM, by developing a systematic approach to account for the data structures ($\bmu \neq \zo$ and $ \C \neq \I_p$). 
%This extension is of more practical interest, particularly for the generalized linear model context under consideration.

\paragraph{Scalable second-order methods.}

Second order methods are among the most powerful optimization methods that have been designed, and
there have been several attempts to use their many advantages for machine learning applications, including both theoretical developments  \cite{xu2016subsampled,wang2018giant,roosta2019subsampled}, as well as for training and/or quantizing state-of-the-art NNs \cite{yao2019pyhessian,yao2020adahessian,shen2020q,dong2019hawq,xu2020second}. We expect that our precise characterization of the Hessian sheds new light on the improved design of more efficient second-order methods.
For example, insights from the proposed analysis could shed light on improved optimization methods for particular classes of models, as well as improved strategies for model post-processing and~diagnostics.

%\michael{Short par on material on stochastic second order methods. XXX.  MM TO DO THIS.}
%Include theory, mostly Fred, but include a few others.
%\cite{xu2016subsampled} % Sub-sampled Newton Methods with Non-uniform Sampling
%\cite{wang2018giant} % GAINT
%\cite{xu2020newton} % Newton-type methods for non-convex optimization under inexact Hessian information
%\cite{roosta2019subsampled} % Sub-sampled Newton methods
%Also include practice, mostly from Amir and Zhewei.  
%\cite{yao2020adahessian} % ADAHESSIAN: An Adaptive Second Order Optimizer for Machine Learning
%\cite{shen2020q} % Q-BERT: Hessian Based Ultra Low Precision Quantization of BERT
%\cite{yao2019pyhessian} % PyHessian: Neural networks through the lens of the Hessian
%\cite{dong2019hawq} % {HAWQ}: Hessian aware quantization of neural networks with mixed-precision
%\cite{xu2020second} % Second-order Optimization for Non-convex Machine Learning: an Empirical Study
%}

%
\subsection{Spiked random matrix model}
\label{sec:pre-spiked}

Define the \emph{empirical spectral measure} of a random matrix $\M \in \RR^{p \times p}$ 
\begin{equation}\label{eq:esd}
  \mu_\M \equiv \frac1p \sum_{i=1}^p \delta_{\lambda_i(\M)}
\end{equation}
as the normalized counting measure of the eigenvalues $\lambda_i(\M)$ of $\M$. If the random $\mu_\M$ converges to a (non-random) probability measure $\mu$ as $p \to \infty$, we say $\mu$ is the \emph{limiting spectral measure} of the random matrix $\M$. Examples of such limiting laws include the popular Mar\u{c}enko-Pastur law \cite{marchenko1967distribution} and Wigner's semicircle law \cite{wigner1955characteristic}.

In many applications, instead of the (limiting) eigenvalue distribution, one may be more interested in the behavior of some particular eigenvalue-eigenvector pairs. For a standard (noise-like) random matrix $\M$, the term ``\emph{spiked random matrix model}'' \cite{johnstone2001distribution,benaych2011eigenvalues,bai2012sample} is used to refer to the \emph{low rank} update of $\M$, for instance the additive perturbation of the type $\M + \A$ for some (deterministic) $\A$ such that $\rank(\A) = K$ is fixed as $p \to \infty$. This type of random matrix model plays a fundamental role in the analysis of principle component analysis (PCA) \cite{johnstone2018pca,johnstone2009sparse}, sparse PCA \cite{amini2009high,ma2013sparse,cai2015optimal} and Canonical Correlation Analysis (CCA) \cite{hotelling1936relation,bao2019canonical,johnstone2020test} in high dimensions. As a concrete example, for $\x_i \sim \NN(\bmu, \I_p)$ we can write $\X = [\x_1, \ldots, \x_n] = \bmu \one_n^\T + \Z$ for $\Z \in \RR^{p \times n}$ having i.i.d.~standard Gaussian entries, so that the sample covariance matrix $\frac1n \X \X^\T = \frac1n \Z \Z^\T + \bmu \bmu^\T + \frac1n \bmu \one_n^\T \Z^\T + \frac1n \Z \one_n \bmu^\T$ follows a spiked model that is an (at most) rank-two additive perturbation of the standard Wishart random matrix $\frac1n \Z \Z^\T$. The spiked model is particularly popular and useful in charactering signal-plus-noise type behavior in the features, by focusing on the isolated eigenvalues (i.e., the ``spikes'') outside the main eigenspectrum bulk.
Precisely, if the signal strength $\| \bmu \|$ is larger than a certain threshold (that depends on the dimension ratio $p/n$), an informative eigenvalue $\hat \lambda$ of $\frac1n\X \X^\T$ can be observed to ``jump out'' from the noisy main bulk, with the associated eigenvector $\hat \uu$ ``aligned'' to $\bmu$, in the sense that the ``cosine-similarly'' $ \hat \uu^\T \bmu/\| \bmu \| \gg 0$ as $n,p \to \infty$.\footnote{Recall that for high dimensional random vector $\vv \in \RR^p$, say with i.i.d.\@ Gaussian entries of zero-mean and $1/p$ variance (so that $\| \vv \| = O(1)$ with high probability), we have, for $\| \bmu \| = O(1)$ that $\vv^\T \bmu \to 0$ as $p \to \infty$; that is, a random vector is \emph{almost orthogonal} to any deterministic or \emph{independent} random vector in high dimensions.}
This type of \emph{phase transition} behavior has been extensively studied in the RMT literature \cite{baik2005phase,benaych2011eigenvalues,lesieur2015phase}. In this article, we show two \emph{different} types of spikes and their corresponding phase transitions: the first well-known spike due to data signal $\bmu$, which follows the classical BPP phase transition \cite{baik2005phase}; and a second less-well-known spike, which arises from the underlying (nonlinear) response model \eqref{eq:def-GLM}, even in the absence of any signal, and which follows a very different phase transition behavior. This is discussed at length with numerical evaluations in Section~\ref{sec:discuss}.

\subsection{Deterministic equivalent}
\label{subsec:pre-DE}

In this work, a unified analysis framework is proposed to assess simultaneously the limiting spectral measure and the behavior of the (possible) isolated eigenvalue-eigenvector pairs of the Hessian-type matrix in \eqref{eq:def-H}. This analysis is based on the so-called deterministic equivalent technique \cite{hachem2007deterministic,couillet2011random} that provides asymptotic characterization of the (expectation of the) resolvent $\Q(z) \equiv (\H - z \I_p)^{-1}$ of the random Hessian matrix $\H \in \RR^{p \times p}$. With additional concentration of measure arguments \cite{ledoux2001concentration,pastur2005simple}, the limiting spectral measure, the isolated eigenvalue location, as well as the eigenvector projection onto a given deterministic vector, while all being random, become asymptotically deterministic and fully accessible via the analysis of $\Q(z)$ in the $n,p \to \infty$ limit. More precisely, we have the following.
\begin{enumerate}
  \item The normalized trace $\frac1p \tr \Q(z)$ characterizes the Stieltjes transform $m_\mu(z) = \int (t-z)^{-1} \mu(dt)$ of the limiting spectral measure $\mu$ of $\H$, from which $\mu$ can be recovered, for $a,b$ continuity points of $\mu$, via $ \mu([a,b]) = \frac1{\pi} \lim_{\varepsilon \downarrow 0} \int_a^b \Im[ m_\mu(x + \imath \varepsilon)]\,dx$.
  \item Solving the determinant equation $\det(\H- \lambda \I_p) = 0$ for $\lambda \in \RR$ outside the limiting spectral support $\supp(\mu)$ (that is, not in the bulk eigenspectrum) reveals the asymptotic location of the isolated eigenvalue $\hat \lambda$ of $\H$.
  \item For $(\hat \lambda ,\hat \uu)$ an eigenvalue-eigenvector pair of interest of $\H$, with Cauchy's integral formula we have $ |\w^\T \hat \uu|^2 = -\frac1{2\pi \imath} \oint_{\Gamma_{\lambda}} \w^\T \Q(z) \w\,dz$, for a deterministic vector $\w \in \RR^p$ and $\Gamma_{\lambda}$ a positively oriented contour surrounding \emph{only} $\hat \lambda$. 
\end{enumerate}
As such, for $\bar \Q(z)$ a deterministic equivalent of $\Q(z)$, that is,
\begin{equation}\label{eq:def-DE}
	\Q(z) \leftrightarrow \bar \Q(z),\quad \text{with~}\frac1p \tr \A (\Q(z) - \bar \Q(z)) \to 0,\text{~and~}\mathbf{a}^\T (\Q(z) - \bar \Q(z)) \mathbf{b} \to 0,
\end{equation}
almost surely as $n,p \to \infty$, for $\A \in \RR^{n \times n}$ and $\mathbf{a}, \mathbf{b} \in \RR^n$ of bounded (Euclidean and spectral) norms, the limiting spectral measure (via the associated Stieltjes transform) and the behavior of isolated eigenpairs are directly accessible via the study of $\bar \Q(z)$.

\section{Main Results}
\label{sec:main_results}

In this section, we present our main results:
on the limiting Hessian eigenspectrum (in Section~\ref{subsec:lsd}); and
on the behavior of the (possible) isolated eigenvalue-eigenvector(s) (in Section~\ref{subsec:spikes}).
These two main results depend on a key technical deterministic equivalent result for the Hessian resolvent (in Section~\ref{subsec:DE}), which is of independent interest.
%[High dimensional asymptotics]
We position ourselves in the following high dimensional regime.
\begin{Assumption}[High dimensional asymptotics]\label{ass:high-dimen}
As $n,p \to \infty$ with $p/n \to c \in (0,\infty)$, we have $\max \{ \| \w \|, \| \w_* \| \} = O(1)$ and $\x_i \overset{i.i.d.}{\sim} \NN(\bmu, \C)$ with $\max\{\| \bmu \|, \| \C \| \} = O(1)$.
%\item $0 < \liminf_p \| \w \| \le  \limsup_p \| \w \|  < \infty$;
%\item for $i \in \{ 1, \ldots, n \}$, $\x_i \sim \NN(\bmu, \C)$ such that $\limsup_p \| \bmu \| < \infty$ and $\limsup_p \| \C \| < \infty$.
\end{Assumption}

\subsection{Limiting spectral measure}\label{subsec:lsd}

Our first result concerns the limiting eigenvalue distribution of the Hessian matrix, given under the form of its Stieltjes transform. This is a direct consequence of our main technical Theorem~\ref{theo:DE} in Section~\ref{subsec:DE}, and it is proven in Appendix~\ref{subsec:proof-theo-lsd}.
\begin{Theorem}[Limiting spectral measure]\label{theo:lsd}
%Under the setting and notation of Theorem~\ref{theo:DE}, the empirical spectral measure $\mu_\H$ of $\H$ converges weakly and almost surely to a probability measure $\mu$, uniquely defined through its Stieltjes transform $m(z) = \int (t-z)^{-1} \mu(dt)$ as
Let Assumption~\ref{ass:high-dimen} hold.  We have, as $n,p \to \infty$ with $p/n \to c \in (0, \infty)$, the empirical spectral measure $\mu_\H$ of the Hessian matrix $\H$ defined in \eqref{eq:def-H} converges weakly and almost surely to a probability measure $\mu$, defined through its Stieltjes transform $m(z) = \int (t-z)^{-1} \mu(dt)$ as the unique solution to\footnote{Uniqueness is ensured in such a way that $\Im[m(z)] \cdot \Im[z] > 0$ for $\Im[z] \neq 0$ and $z m(z) < 0$ for $\Im[z] = 0$, so that $(z, m(z))$ is a valid  Stieltjes transform couple; see \cite{hachem2007deterministic}.}
\begin{equation}\label{eq:def-m(z)}
	\textstyle m(z) = \frac1p \tr \bar \Q_b(z), \quad \delta(z) = \frac1n \tr \left(\C \bar \Q_b(z) \right),
  %m(z) = \frac1p \tr \left( \EE \left[ \frac{ g \cdot \C }{ 1 + g \cdot \delta(z) } \right] - z \I_p \right)^{-1}, \quad \delta(z) = \frac1n \tr \C \left( \EE \left[ \frac{g \cdot \C}{1 + g \cdot \delta(z)}\right] - z \I_p \right)^{-1}
\end{equation}
for $\bar \Q_b (z) \equiv ( \EE [\frac{g \cdot \C}{ 1 + g \cdot \delta(z) } ] - z \I_p )^{-1}$ and random variable
\begin{equation}\label{eq:def-g}
	g \equiv \partial^2 \ell(y, h)/\partial h^2,~h = \w^\T \x \sim \mathcal N(\w^\T \bmu, \w^\T \C \w) ,
\end{equation}
with $y$ and $\ell$ defined respectively in \eqref{eq:def-GLM} and \eqref{eq:optim}. Moreover, if we denote $\nu$ the law of $g$ and assume the empirical spectral measure of $\C$ converges to $\tilde \nu$ as $p \to \infty$, then \eqref{eq:def-m(z)} can be expressed~as
% \begin{align}
% 	m(z) &=  \int \Big( -z + \tilde t\int \frac{t}{1 + t \delta(z)}\,\nu (dt) \Big)^{-1}\,\tilde \nu(d \tilde t), \label{eq:lsd-measure} \\ 
% 	\delta(z) &= \int \frac{c \tilde t}{ -z + \tilde t \int \frac{t}{1 + t \delta(z)}\,\nu(dt) }\,\tilde \nu (d \tilde t).
% \end{align}
\begin{equation}
	m(z) =  \int \Big( -z + \tilde t\int \frac{t}{1 + t \delta(z)}\,\nu (dt) \Big)^{-1}\,\tilde \nu(d \tilde t), \quad \delta(z) = \int \frac{c \tilde t}{ -z + \tilde t \int \frac{t}{1 + t \delta(z)}\,\nu(dt) }\,\tilde \nu (d \tilde t). \label{eq:lsd-measure}
\end{equation}
% we can use \textstyle for the two equations above
\end{Theorem}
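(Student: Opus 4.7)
The plan is to derive Theorem~\ref{theo:lsd} as a direct corollary of the deterministic equivalent Theorem~\ref{theo:DE}, combined with standard Stieltjes transform arguments. First, I would invoke Theorem~\ref{theo:DE} to obtain $\Q(z) \leftrightarrow \bar\Q(z)$ in the sense of \eqref{eq:def-DE}. Taking the normalized trace and using almost-sure concentration of $\frac1p\tr\Q(z)$ around its expectation (via, for instance, a Gaussian Poincar\'e or Efron--Stein/Burkholder inequality), one concludes that $m_{\mu_\H}(z) - \frac1p\tr\bar\Q_b(z) \to 0$ almost surely for each $z\in\CC\setminus\RR$. Since $\frac1p\tr\bar\Q_b(z)$ is itself a valid Stieltjes transform (the sign constraints recalled in the footnote pin down the right branch), a standard tightness-plus-Stieltjes-continuity argument yields $\mu_\H \to \mu$ weakly and almost surely, with $m(z)=\frac1p\tr\bar\Q_b(z)$ and $\delta(z)=\frac1n\tr(\C\bar\Q_b(z))$ as in \eqref{eq:def-m(z)}.

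Next, I would exploit a crucial algebraic simplification: since $\C$ is deterministic and $g$ is a scalar random variable, the expectation in the definition of $\bar\Q_b(z)$ factorizes as
\begin{equation*}
  \EE\left[\frac{g\,\C}{1+g\,\delta(z)}\right] \;=\; \alpha(z)\,\C, \qquad \alpha(z) \equiv \EE\!\left[\frac{g}{1+g\,\delta(z)}\right] \;=\; \int \frac{t}{1+t\,\delta(z)}\,\nu(dt),
\end{equation*}
so $\bar\Q_b(z) = (\alpha(z)\C - z\I_p)^{-1}$. Diagonalizing $\C$ and writing traces as sums over its eigenvalues $\lambda_i(\C)$, one obtains
\begin{equation*}
  \tfrac1p\tr\bar\Q_b(z) \;=\; \tfrac1p\sum_{i=1}^p \frac{1}{\alpha(z)\lambda_i(\C)-z}, \qquad \tfrac1n\tr(\C\bar\Q_b(z)) \;=\; \tfrac{p}{n}\cdot\tfrac1p\sum_{i=1}^p \frac{\lambda_i(\C)}{\alpha(z)\lambda_i(\C)-z}.
\end{equation*}

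Passing to the limit, the assumed convergence of the empirical spectral measure of $\C$ to $\tilde\nu$, together with $p/n\to c$ and the uniform boundedness of the integrands on compacts of $\CC\setminus\RR$, converts these normalized traces into the integrals against $\tilde\nu$ displayed in \eqref{eq:lsd-measure}. Substituting $\alpha(z)=\int \tfrac{t}{1+t\,\delta(z)}\,\nu(dt)$ back in yields exactly the coupled fixed-point system \eqref{eq:lsd-measure}, and uniqueness follows from the Stieltjes sign constraints.

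The hard part has already been absorbed into Theorem~\ref{theo:DE}, where the non-trivial dependence of $\D=\diag\{\ell''(y_i,\w^\T\x_i)\}$ on $\X$ through the scalar projections $h_i=\w^\T\x_i$ must be handled by conditioning on $h_i$ and using the Gaussian decomposition $\x_i = \bmu + \frac{h_i-\w^\T\bmu}{\w^\T\C\w}\,\C\w + \bar\x_i$, with $\bar\x_i\sim\NN(\zo,\,\C-\C\w\w^\T\C/(\w^\T\C\w))$ independent of $h_i$; this shows that $d_i$ depends on $\x_i$ only through $h_i$, so that $\{d_i\}_{i=1}^n$ behaves, to leading order, as an i.i.d.~sample from the law $\nu$ of $g$ in \eqref{eq:def-g}, which is the content Theorem~\ref{theo:DE} exploits. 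Given Theorem~\ref{theo:DE}, the residual work in the present theorem is just the algebraic reduction above and the routine justification of the interchange of limit and trace, together with the weak-convergence conclusion via the Stieltjes inversion formula recalled in Section~\ref{subsec:pre-DE}.
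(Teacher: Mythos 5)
Your overall route matches the paper's: invoke the deterministic equivalent Theorem~\ref{theo:DE}, pass to normalized traces, and reduce the resulting fixed-point equations to the integral form \eqref{eq:lsd-measure} by factoring $\EE[g\C/(1+g\delta)] = \alpha(z)\,\C$ and diagonalizing $\C$. The algebraic reduction in your second paragraph is exactly right and is a useful detail the paper leaves implicit.

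There is, however, one genuine gap in your main chain of reasoning. Theorem~\ref{theo:DE} gives $\Q(z)\leftrightarrow\bar\Q(z)$, not $\Q(z)\leftrightarrow\bar\Q_b(z)$, and the two deterministic equivalents are \emph{not} the same: as noted after the proof of Theorem~\ref{theo:DE}, one has $\bar\Q^{-1}(z) = \bar\Q_b^{-1}(z) + \V\bLambda(z)\V^\T$ with $\V\in\RR^{p\times 3}$, a fixed-rank symmetric perturbation. You pass silently from $\frac1p\tr\bar\Q(z)$ to $\frac1p\tr\bar\Q_b(z)$, but this step needs an argument; the paper supplies it via Lemma~\ref{lem:rank-one-update-trace}, which (applied thrice after diagonalizing $\bLambda$) gives $\big|\tfrac1p\tr\bar\Q(z) - \tfrac1p\tr\bar\Q_b(z)\big| = O(1/p)$. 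Without this low-rank-invariance step, the equality $m(z)=\frac1p\tr\bar\Q_b(z)$ is not justified from the DE alone, even though it is ultimately an easy rank-$3$ Woodbury fact.

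A secondary issue lies in your closing remark about how Theorem~\ref{theo:DE} itself is proved. You condition only on $h_i = \w^\T\x_i$ and decompose $\x_i$ along $\C\w$ and its complement, claiming $d_i = \ell''(y_i,h_i)$ depends on $\x_i$ only through $h_i$. This is false in general: $y_i$ is drawn from $f(\cdot\mid\w_*^\T\x_i)$, so $d_i$ also depends on $\x_i$ through $\w_*^\T\x_i$. The residual $\bar\x_i$ orthogonal to $\C\w$ still carries the $\w_*$-direction, so it is \emph{not} independent of $d_i$. The paper's decomposition therefore projects $\z_i=\C^{-1/2}(\x_i-\bmu)$ onto the two-dimensional span of $\U=\C^{1/2}[\w_*,\;\w]$ via $\P_\U$, ensuring $\z_i^\perp$ is independent of $g_i$. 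This does not affect the validity of your proof of Theorem~\ref{theo:lsd} (since you take Theorem~\ref{theo:DE} as given), but your description of the mechanism behind it is inaccurate and would derail any attempt to actually carry out the DE argument.
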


%As briefly mentioned above, the Hessian spectrum characterized in Theorem~\ref{theo:lsd} is closely related to that of the \emph{separable covariance model} in the random matrix literature.
A few remarks on the consequences of Theorem~\ref{theo:lsd} are in order.

\begin{Remark}[Connection to other RMT models]\label{rem:connection-RMT}
In the form \eqref{eq:lsd-measure}, the (Stieltjes transform of the) limiting spectral measure $\mu$ is determined by the dimension ratio $c = \lim p/n$ and the two (probability) measures $\nu$ and $\tilde \nu$. As already briefly discussed in Section~\ref{subsubsec:related}, this formulation is closely connected to the separable covariance model \cite{khorunzhy1996eigenvalue,burda2005spectral,paul2009no,couillet2014analysis,yang2019edge}. Moreover, if $\nu(dt) = \delta_1(t)$ is a Dirac mass at one, this reduces to the classical sample covariance model \cite{silverstein1995empirical}; taking further $\tilde \nu(dt) = \delta_1(t)$ gives the Mar\u{c}enko-Pastur law. See Section~\ref{subsec:num-lsd} for numerical evaluations of these special cases. 
\end{Remark}

In particular, the support of the (limiting) Hessian eigenvalue distribution $\mu$ is directly linked to that of $\nu$ and $\tilde \nu$.
\begin{Remark}[Hessian eigen-support]\label{rem:limit-support}
Under Assumption~\ref{ass:high-dimen}, the (limiting) spectral measure $\tilde \nu$ of the covariance $\C$ has bounded support. However, this may not be the case for $\nu$, the law of the random variable $g$ defined in \eqref{eq:def-g}. Since the Hessian eigenvalue distribution $\mu$ is of compact support \emph{if and only if} both $\nu$ and $\tilde \nu$ have compact support \cite[Porposition~3.4]{couillet2014analysis}, $\mu$ may be of unbounded support, depending on that of $\nu$, and thus on the underlying model and loss function.

%with $g = \partial^2 \ell(y, h)/\partial h^2$ for $h \sim \mathcal N(\w^\T \bmu, \w^\T \C \w)$ and $y$ defined in \eqref{eq:def-GLM}.
% Note that, even in the simplest setting of $\C = \I_p$, depending on the random variable $g \equiv g(y, \w^\T \x)$, the limiting spectral measure of the Hessian $\H$ can have unbounded support in the $n,p \to \infty$ limit, for instance if $g$ itself has unbounded support. Indeed, for $\C = \I_p$, $m(z)$ in Theorem~\ref{theo:lsd} can be simplified as the unique solution to
% \[
%   m(z) = \left( - z + \int \frac{t \nu (dt)}{ 1 + c t m(z)} \right)^{-1}
% \]
% for $\nu$ the probability measure of $g$ that is assumed to have finite first and second order moments, which is in the same form as the limiting spectral measure of the sample covariance matrix model derived in \cite{silverstein1995empirical} and has unbounded support if $\nu$ has unbounded support.
\end{Remark}

An example of unbounded $\mu$ is the phase retrieval model with $y = (\w_*^\T \x)^2$ and square loss $\ell(y, h) = (y - h^2 )^2/4$, for which we have $g = 3 (\w^\T \x)^2 - (\w_*^\T \x)^2$ for $\x \sim \NN(\bmu, \C)$. As a consequence, with say $\w_* = \w$, $g$ follows a chi-square distribution with one degree of freedom, and has thus \emph{unbounded support}. This corresponds to Figure~\ref{fig:intro-c}, where the Hessian spectrum has a ``heavier'' tail compared to Figure~\ref{fig:intro-a}.  In this case, the empirically observed ``isolated'' eigenvalue is essentially due to a finite-dimensional effect and will be ``buried'' in the noisy main bulk as $n,p$ grow large. Therefore, aiming for an (almost surely) isolated eigenvalue-eigenvector (e.g., to recover the model parameter $\w_*$ using the top Hessian eigenvector) for $n,p$ large, some preprocessing function $f$ must be applied. This has been discussed in previous work \cite{lu2019phase,mondelli2018fundamental} and corresponds to the so-called trimming strategy widely used in phase retrieval \cite{chen2015solving} (particularly in the ``linear measurements'' $n \sim p$ regime), with for instance the truncation function $f(t) = 1_{|t|\le \epsilon}$ for some $\epsilon > 0$. 
%As shown in \cite{chen2015solving}, this preprocessing step plays a crucial role in achieving linear sample complexities (i.e., to work in the $n \sim p$ regime). %See Section~\ref{sec:discuss} for more evaluations and discussions on this model.

\medskip

Another example of unbounded $\mu$ appears when the exponential loss function \cite{freund1999short} is used. Precisely, consider the logistic model \eqref{eq:def-logistic} with the choice of loss function $\ell(y,h) = \exp(-yh)$.  In this case, we have that $g = \exp(-yh)$ for $h \sim \mathcal N(\w^\T \bmu, \w^\T \C \w)$, which follows a log-normal distribution and has \emph{unbounded support}. As such, the limiting Hessian eigenvalue distribution $\mu$ has also unbounded support.
On the other hand, with the logistic loss $\ell(y,h) = \ln(1+ e^{-yh})$, one has $g \le 1/4$ and $\mu$ is guaranteed to have bounded support. See Figure~\ref{fig:log-exp-lsd}, where the empirical Hessian eigenvalues and the limiting distributions are compared for logistic and exponential losses, with a more ``heavy-tailed'' behavior observed for the exponential loss.

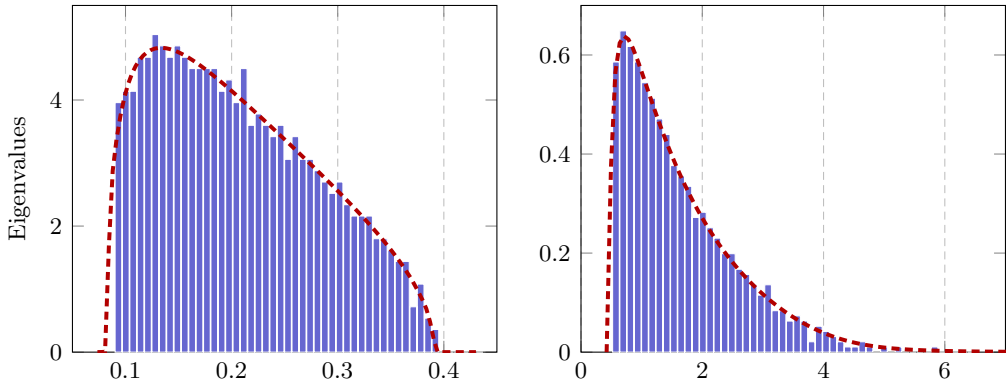
\begin{figure}[htb]
\vskip 0.1in
\begin{center}
\begin{tabular}{cc}
\centering
\begin{tikzpicture}[font=\footnotesize]
    \renewcommand{\axisdefaulttryminticks}{4} 
    \pgfplotsset{every major grid/.style={densely dashed}}       
    \pgfplotsset{every axis legend/.style={cells={anchor=west},fill=none, at={(0.98,0.98)}, anchor=north east, font=\footnotesize }}
    \begin{axis}[
      width=.45\columnwidth,
      xmin=0.05,
      xmax=.45,
      ymin=0,
      ymax=5.5,
      bar width=2.5pt,
      grid=major,
      ymajorgrids=false,
      scaled ticks=true,
      %scale ticks above={4},
      xlabel={},
      ylabel={Eigenvalues}
      ]
      \addplot+[ybar,mark=none,draw=white,fill=BLUE!60!white,area legend] coordinates{
      (0.093341,3.958695)(0.100287,4.138636)(0.107234,4.138636)(0.114181,4.678458)(0.121128,4.678458)(0.128074,5.038339)(0.135021,4.858399)(0.141968,4.678458)(0.148915,4.858399)(0.155861,4.678458)(0.162808,4.498517)(0.169755,4.498517)(0.176702,4.498517)(0.183648,4.498517)(0.190595,4.138636)(0.197542,4.318577)(0.204488,3.958695)(0.211435,4.498517)(0.218382,3.598814)(0.225329,3.778755)(0.232275,3.598814)(0.239222,3.418873)(0.246169,3.598814)(0.253116,3.058992)(0.260062,3.418873)(0.267009,3.058992)(0.273956,3.058992)(0.280903,2.879051)(0.287849,2.699110)(0.294796,2.519170)(0.301743,2.699110)(0.308689,2.339229)(0.315636,2.159288)(0.322583,2.159288)(0.329530,2.159288)(0.336476,1.799407)(0.343423,1.799407)(0.350370,1.619466)(0.357317,1.439526)(0.364263,1.439526)(0.371210,0.719763)(0.378157,1.079644)(0.385104,0.539822)(0.392050,0.359881)(0.398997,0.000000)(0.405944,0.000000)(0.412890,0.000000)(0.419837,0.000000)(0.426784,0.000000)(0.433731,0.000000)
      };
      \addplot[densely dashed,RED,line width=1.5pt] coordinates{
      (0.073528,0.001131)(0.077131,0.001779)(0.080735,0.005458)(0.084338,1.683898)(0.087941,2.876420)(0.091544,3.377044)(0.095148,3.741451)(0.098751,4.028896)(0.102354,4.247640)(0.105958,4.411551)(0.109561,4.538441)(0.113164,4.634179)(0.116768,4.705863)(0.120371,4.757662)(0.123974,4.793249)(0.127578,4.815306)(0.131181,4.826102)(0.134784,4.827379)(0.138388,4.820602)(0.141991,4.806975)(0.145594,4.787490)(0.149198,4.762982)(0.152801,4.734154)(0.156404,4.701599)(0.160008,4.665822)(0.163611,4.627259)(0.167214,4.586274)(0.170818,4.543188)(0.174421,4.498276)(0.178024,4.451775)(0.181628,4.403888)(0.185231,4.354793)(0.188834,4.304643)(0.192438,4.253579)(0.196041,4.201709)(0.199644,4.149137)(0.203248,4.095956)(0.206851,4.042233)(0.210454,3.988040)(0.214058,3.933431)(0.217661,3.878456)(0.221264,3.823156)(0.224868,3.767561)(0.228471,3.711707)(0.232074,3.655614)(0.235678,3.599303)(0.239281,3.542786)(0.242884,3.486075)(0.246488,3.429176)(0.250091,3.372094)(0.253694,3.314826)(0.257298,3.257374)(0.260901,3.199728)(0.264504,3.141881)(0.268108,3.083821)(0.271711,3.025534)(0.275314,2.967001)(0.278918,2.908205)(0.282521,2.849118)(0.286124,2.789716)(0.289728,2.729968)(0.293331,2.669840)(0.296934,2.609297)(0.300537,2.548295)(0.304141,2.486782)(0.307744,2.424716)(0.311347,2.362028)(0.314951,2.298646)(0.318554,2.234523)(0.322157,2.169547)(0.325761,2.103600)(0.329364,2.036632)(0.332967,1.968499)(0.336571,1.899020)(0.340174,1.828030)(0.343777,1.755326)(0.347381,1.680810)(0.350984,1.604039)(0.354587,1.524858)(0.358191,1.442358)(0.361794,1.357005)(0.365397,1.263878)(0.369001,1.171825)(0.372604,1.062719)(0.376207,0.950988)(0.379811,0.829762)(0.383414,0.691383)(0.387017,0.479956)(0.390621,0.192251)(0.394224,0.000643)(0.397827,0.000406)(0.401431,0.000312)(0.405034,0.000257)(0.408637,0.000221)(0.412241,0.000195)(0.415844,0.000174)(0.419447,0.000158)(0.423051,0.000144)(0.426654,0.000133)(0.430257,0.000124)
      };
      %\legend{ {Empirical eigenvalues}, {Limiting law} };
    \end{axis}
  \end{tikzpicture}
  &
  \begin{tikzpicture}[font=\footnotesize]
    \renewcommand{\axisdefaulttryminticks}{4} 
    \pgfplotsset{every major grid/.append style={densely dashed}}       
    %\pgfplotsset{every axis legend/.append style={cells={anchor=west},fill=none, at={(0.98,0.98)}, anchor=north east, font=\footnotesize }}
    \pgfplotsset{every axis legend/.style={cells={anchor=west},fill=none, at={(0.98,0.98)}, anchor=north east, font=\footnotesize}}
    \begin{axis}[
      width=.45\columnwidth,
      xmin=0,
      xmax=7,
      ymin=0,
      ymax=.7,
      bar width=2.5pt,
      grid=major,
      ymajorgrids=false,
      scaled ticks=true,
      %scale ticks above={4},
      xlabel={},
      ylabel={}
      ]
      \addplot+[ybar,mark=none,draw=white,fill=BLUE!60!white,area legend] coordinates{
      (0.576366,0.585924)(0.695835,0.648701)(0.815305,0.617312)(0.934774,0.585924)(1.054244,0.544072)(1.173713,0.512683)(1.293183,0.470832)(1.412652,0.439443)(1.532122,0.376665)(1.651591,0.355739)(1.771060,0.334814)(1.890530,0.272036)(2.009999,0.282499)(2.129469,0.251110)(2.248938,0.230184)(2.368408,0.198796)(2.487877,0.198796)(2.607347,0.167407)(2.726816,0.156944)(2.846286,0.136018)(2.965755,0.115092)(3.085225,0.136018)(3.204694,0.083703)(3.324164,0.083703)(3.443633,0.062778)(3.563103,0.073240)(3.682572,0.062778)(3.802042,0.020926)(3.921511,0.052315)(4.040981,0.041852)(4.160450,0.031389)(4.279920,0.020926)(4.399389,0.010463)(4.518859,0.010463)(4.638328,0.020926)(4.757798,0.010463)(4.877267,0.000000)(4.996737,0.010463)(5.116206,0.000000)(5.235676,0.010463)(5.355145,0.000000)(5.474615,0.000000)(5.594084,0.000000)(5.713554,0.000000)(5.833023,0.010463)(5.952492,0.000000)(6.071962,0.000000)(6.191431,0.000000)(6.310901,0.000000)(6.430370,0.000000)
      };
      \addplot[densely dashed,RED,line width=1.5pt] coordinates{
      (0.422698,0.000030)(0.494478,0.374551)(0.566258,0.564286)(0.638038,0.624011)(0.709819,0.638258)(0.781599,0.631572)(0.853379,0.614430)(0.925159,0.592009)(0.996939,0.567076)(1.068719,0.541189)(1.140500,0.515241)(1.212280,0.489753)(1.284060,0.465012)(1.355840,0.441186)(1.427620,0.418346)(1.499400,0.396526)(1.571180,0.375716)(1.642961,0.355896)(1.714741,0.337032)(1.786521,0.319082)(1.858301,0.302003)(1.930081,0.285750)(2.001861,0.270282)(2.073642,0.255554)(2.145422,0.241526)(2.217202,0.228162)(2.288982,0.215423)(2.360762,0.203278)(2.432542,0.191696)(2.504322,0.180644)(2.576103,0.170098)(2.647883,0.160033)(2.719663,0.150425)(2.791443,0.141253)(2.863223,0.132499)(2.935003,0.124142)(3.006783,0.116168)(3.078564,0.108561)(3.150344,0.101306)(3.222124,0.094393)(3.293904,0.087811)(3.365684,0.081548)(3.437464,0.075595)(3.509245,0.069944)(3.581025,0.064590)(3.652805,0.059522)(3.724585,0.054737)(3.796365,0.050228)(3.868145,0.045990)(3.939925,0.042018)(4.011706,0.038306)(4.083486,0.034848)(4.155266,0.031638)(4.227046,0.028668)(4.298826,0.025932)(4.370606,0.023420)(4.442387,0.021124)(4.514167,0.019031)(4.585947,0.017132)(4.657727,0.015413)(4.729507,0.013862)(4.801287,0.012466)(4.873067,0.011213)(4.944848,0.010090)(5.016628,0.009085)(5.088408,0.008187)(5.160188,0.007384)(5.231968,0.006668)(5.303748,0.006028)(5.375529,0.005457)(5.447309,0.004946)(5.519089,0.004489)(5.590869,0.004080)(5.662649,0.003714)(5.734429,0.003386)(5.806209,0.003090)(5.877990,0.002825)(5.949770,0.002586)(6.021550,0.002371)(6.093330,0.002176)(6.165110,0.002000)(6.236890,0.001841)(6.308671,0.001696)(6.380451,0.001565)(6.452231,0.001446)(6.524011,0.001337)(6.595791,0.001238)(6.667571,0.001147)(6.739351,0.001065)(6.811132,0.000989)(6.882912,0.000920)(6.954692,0.000856)(7.026472,0.000797)(7.098252,0.000744)(7.170032,0.000694)(7.241812,0.000648)(7.313593,0.000606)(7.385373,0.000567)(7.457153,0.000531)(7.528933,0.000498)
      };
    \end{axis}
  \end{tikzpicture}
\end{tabular}
 \caption{ 
 Impact of loss function: bounded \textbf{(left)} versus unbounded \textbf{(right)} Hessian eigenvalue distribution, with $p=800$, $n = 6\,000$, logistic model in \eqref{eq:def-logistic} with $\bmu = \mathbf{0}$, $\C = \I_p$, $\w_* = \zo$ and $\w = [-\one_{p/2},~\one_{p/2}]/\sqrt p$. Emphasis on the \emph{bounded} Hessian eigen-support with the logistic loss \textbf{(left)} versus the \emph{unbounded} support with the exponential loss \textbf{(right)}.
 %Impact of loss function: bounded \textbf{(left)} versus unbounded \textbf{(right)} Hessian eigenvalue distribution, with $p=800$, $n = 6\,000$, generalized linear model in \eqref{eq:optim-ERM} with $\bmu = \mathbf{0}$, $\w = [-\one_{p/2},~\one_{p/2}]/\sqrt p$ and  $\C = \I_p$. Emphasis on the \emph{bounded} Hessian eigen-support with the logistic loss \textbf{(left)} versus the \emph{unbounded} support with the exponential loss \textbf{(right)}.
 } 
\label{fig:log-exp-lsd}
\end{center}
\vskip -0.1in
\end{figure}

Clearly, depending on the measures $\nu$ (of the random variable $g$, which depends on the feature statistics $\bmu, 
\C$, the loss and the underlying model) and $\tilde \nu$ (of the eigenvalue distribution of feature covariance $\C$), the Hessian spectrum can have very different forms. See Figure~\ref{fig:logistic-lsd}, where we compare the empirical Hessian eigenvalues with their limiting behaviors, given in Theorem~\ref{theo:lsd} for different feature covariance structures.\footnote{Covariance describes the joint variability or the ``correlation'' between entries of the feature vector, and it is of great significance in the analysis of image (with local structure) and time series data.} 
In particular, we may observe a single main bulk with more ``compact'' Hessian eigenvalues as in Figure~\ref{fig:logistic-lsd}-(left) or multiple bulks (two in the case of Figure~\ref{fig:logistic-lsd}-right) with Hessian eigenvalues more ``spread-out'', depending on the feature covariance structure and the measure $\tilde \nu$. In the form of \eqref{eq:lsd-measure}, the condition for the existence of multi-bulk eigenspectrum has been throughly discussed in \cite[Section 3.2--3.4]{couillet2014analysis} and can be numerically evaluated with ease.

\smallskip

As a side remark, the ``multi-bulk'' behavior similar to Figure~\ref{fig:logistic-lsd}-(right) has been empirically observed in Hessians of NNs in \cite{li2020hessian,papyan2020traces}, suggesting the (quantitative) effectiveness of the proposed theory beyond simple G-GLM.

\begin{figure}[htb]
\vskip 0.1in
\begin{center}
\begin{tabular}{cc}
\centering
\begin{tikzpicture}[font=\footnotesize]
    \renewcommand{\axisdefaulttryminticks}{4} 
    \pgfplotsset{every major grid/.style={densely dashed}}       
    \pgfplotsset{every axis legend/.style={cells={anchor=west},fill=none, at={(0.98,0.98)}, anchor=north east, font=\footnotesize }}
    \begin{axis}[
      width=.45\columnwidth,
      xmin=0,
      xmax=.7,
      ymin=0,
      ymax=4.5,
      bar width=2.5pt,
      grid=major,
      ymajorgrids=false,
      scaled ticks=true,
      %scale ticks above={4},
      xlabel={},
      ylabel={Eigenvalues}
      ]
      \addplot+[ybar,mark=none,draw=white,fill=BLUE!60!white,area legend] coordinates{
      (0.092168,3.402000)(0.103558,3.950710)(0.114949,4.170194)(0.126339,4.389678)(0.137729,3.950710)(0.149120,3.731226)(0.160510,3.621484)(0.171900,3.621484)(0.183291,2.963032)(0.194681,2.743549)(0.206072,2.524065)(0.217462,2.194839)(0.228852,1.755871)(0.240243,1.646129)(0.251633,2.085097)(0.263023,1.755871)(0.274414,1.975355)(0.285804,2.194839)(0.297194,1.755871)(0.308585,1.646129)(0.319975,2.085097)(0.331366,1.865613)(0.342756,1.865613)(0.354146,1.646129)(0.365537,1.755871)(0.376927,1.755871)(0.388317,1.426645)(0.399708,1.426645)(0.411098,1.426645)(0.422488,1.536387)(0.433879,1.536387)(0.445269,1.207161)(0.456659,1.316903)(0.468050,1.207161)(0.479440,1.207161)(0.490831,1.097419)(0.502221,0.987677)(0.513611,0.877936)(0.525002,0.987677)(0.536392,0.877936)(0.547782,0.658452)(0.559173,0.438968)(0.570563,0.438968)(0.581953,0.438968)(0.593344,0.000000)(0.604734,0.000000)(0.616124,0.000000)(0.627515,0.000000)(0.638905,0.000000)(0.650296,0.000000)
      };
      \addplot[densely dashed,RED,line width=1.5pt] coordinates{
      (0.070750,0.001075)(0.076547,0.003150)(0.082343,2.187235)(0.088140,3.116822)(0.093936,3.611148)(0.099733,3.900471)(0.105529,4.068199)(0.111326,4.156300)(0.117122,4.189209)(0.122919,4.182293)(0.128715,4.145926)(0.134512,4.087194)(0.140308,4.011153)(0.146105,3.921426)(0.151901,3.820610)(0.157697,3.710627)(0.163494,3.592773)(0.169290,3.467939)(0.175087,3.336608)(0.180883,3.198948)(0.186680,3.054765)(0.192476,2.903500)(0.198273,2.744148)(0.204069,2.575093)(0.209866,2.393929)(0.215662,2.197246)(0.221459,1.982226)(0.227255,1.766291)(0.233051,1.679566)(0.238848,1.728831)(0.244644,1.788385)(0.250441,1.835873)(0.256237,1.871243)(0.262034,1.896616)(0.267830,1.913923)(0.273627,1.924664)(0.279423,1.930008)(0.285220,1.930863)(0.291016,1.927940)(0.296813,1.921812)(0.302609,1.912943)(0.308405,1.901714)(0.314202,1.888419)(0.319998,1.873336)(0.325795,1.856686)(0.331591,1.838648)(0.337388,1.819371)(0.343184,1.799011)(0.348981,1.777667)(0.354777,1.755440)(0.360574,1.732419)(0.366370,1.708670)(0.372167,1.684265)(0.377963,1.659266)(0.383760,1.633690)(0.389556,1.607598)(0.395352,1.581024)(0.401149,1.553980)(0.406945,1.526499)(0.412742,1.498601)(0.418538,1.470281)(0.424335,1.441571)(0.430131,1.412457)(0.435928,1.382942)(0.441724,1.353036)(0.447521,1.322705)(0.453317,1.291974)(0.459114,1.260790)(0.464910,1.229170)(0.470706,1.197065)(0.476503,1.164451)(0.482299,1.131283)(0.488096,1.097530)(0.493892,1.063155)(0.499689,1.028072)(0.505485,0.992206)(0.511282,0.955512)(0.517078,0.917851)(0.522875,0.879097)(0.528671,0.839128)(0.534468,0.797755)(0.540264,0.754722)(0.546060,0.709717)(0.551857,0.662383)(0.557653,0.612103)(0.563450,0.558134)(0.569246,0.499271)(0.575043,0.433454)(0.580839,0.356895)(0.586636,0.260243)(0.592432,0.094528)(0.598229,0.000217)(0.604025,0.000133)(0.609822,0.000102)(0.615618,0.000085)(0.621415,0.000073)(0.627211,0.000064)(0.633007,0.000058)(0.638804,0.000053)(0.644600,0.000048)
      };
      %\legend{ {Empirical eigenvalues}, {Limiting law} };
    \end{axis}
  \end{tikzpicture}
  &
  \begin{tikzpicture}[font=\footnotesize]
    \renewcommand{\axisdefaulttryminticks}{4} 
    \pgfplotsset{every major grid/.append style={densely dashed}}       
    %\pgfplotsset{every axis legend/.append style={cells={anchor=west},fill=none, at={(0.98,0.98)}, anchor=north east, font=\footnotesize }}
    \pgfplotsset{every axis legend/.style={cells={anchor=west},fill=none, at={(0.98,0.98)}, anchor=north east, font=\footnotesize}}
    \begin{axis}[
      width=.45\columnwidth,
      xmin=0,
      ymin=0,
      xmax=1.2,
      ymax=4.5,
      bar width=2.5pt,
      grid=major,
      ymajorgrids=false,
      scaled ticks=true,
      %scale ticks above={4},
      xlabel={},
      ylabel={}
      ]
      \addplot+[ybar,mark=none,draw=white,fill=BLUE!60!white,area legend] coordinates{
      (0.090868,3.813746)(0.113483,4.200647)(0.136099,3.924289)(0.158714,3.371572)(0.181330,2.818855)(0.203946,2.155595)(0.226561,1.050162)(0.249177,0.000000)(0.271792,0.000000)(0.294408,0.000000)(0.317023,0.000000)(0.339639,0.110543)(0.362255,0.497445)(0.384870,0.663260)(0.407486,0.829075)(0.430101,0.939618)(0.452717,0.884347)(0.475332,0.994890)(0.497948,0.939618)(0.520563,0.939618)(0.543179,0.994890)(0.565795,0.939618)(0.588410,0.994890)(0.611026,0.939618)(0.633641,0.773803)(0.656257,0.829075)(0.678872,0.939618)(0.701488,0.773803)(0.724104,0.718532)(0.746719,0.773803)(0.769335,0.829075)(0.791950,0.663260)(0.814566,0.663260)(0.837181,0.607988)(0.859797,0.552717)(0.882412,0.663260)(0.905028,0.442173)(0.927644,0.386902)(0.950259,0.552717)(0.972875,0.386902)(0.995490,0.331630)(1.018106,0.221087)(1.040721,0.165815)(1.063337,0.110543)(1.085953,0.055272)(1.108568,0.000000)(1.131184,0.000000)(1.153799,0.000000)(1.176415,0.000000)(1.199030,0.000000)
      };
      \addplot[densely dashed,RED,line width=1.5pt] coordinates{
      (0.065095,0.001370)(0.076434,2.674794)(0.087774,3.879693)(0.099114,4.230877)(0.110453,4.270192)(0.121793,4.159312)(0.133133,3.967206)(0.144472,3.727421)(0.155812,3.457002)(0.167152,3.164106)(0.178491,2.851284)(0.189831,2.516498)(0.201171,2.152223)(0.212510,1.740820)(0.223850,1.235106)(0.235190,0.334440)(0.246529,0.000408)(0.257869,0.000247)(0.269209,0.000184)(0.280548,0.000151)(0.291888,0.000133)(0.303228,0.000126)(0.314567,0.000129)(0.325907,0.000149)(0.337247,0.000238)(0.348586,0.293843)(0.359926,0.518582)(0.371266,0.647223)(0.382606,0.736083)(0.393945,0.801349)(0.405285,0.850623)(0.416625,0.888254)(0.427964,0.917019)(0.439304,0.938823)(0.450644,0.955041)(0.461983,0.966696)(0.473323,0.974574)(0.484663,0.979292)(0.496002,0.981341)(0.507342,0.981119)(0.518682,0.978952)(0.530021,0.975108)(0.541361,0.969812)(0.552701,0.963252)(0.564040,0.955587)(0.575380,0.946951)(0.586720,0.937460)(0.598059,0.927212)(0.609399,0.916291)(0.620739,0.904770)(0.632078,0.892711)(0.643418,0.880167)(0.654758,0.867185)(0.666097,0.853803)(0.677437,0.840055)(0.688777,0.825971)(0.700116,0.811574)(0.711456,0.796884)(0.722796,0.781918)(0.734135,0.766688)(0.745475,0.751204)(0.756815,0.735473)(0.768155,0.719499)(0.779494,0.703283)(0.790834,0.686825)(0.802174,0.670120)(0.813513,0.653160)(0.824853,0.635937)(0.836193,0.618436)(0.847532,0.600643)(0.858872,0.582535)(0.870212,0.564089)(0.881551,0.545274)(0.892891,0.526054)(0.904231,0.506387)(0.915570,0.486218)(0.926910,0.465484)(0.938250,0.444106)(0.949589,0.421985)(0.960929,0.398999)(0.972269,0.374982)(0.983608,0.349724)(0.994948,0.322929)(1.006288,0.294171)(1.017627,0.262803)(1.028967,0.227744)(1.040307,0.186919)(1.051646,0.135187)(1.062986,0.043300)(1.074326,0.000054)(1.085665,0.000035)(1.097005,0.000027)(1.108345,0.000022)(1.119685,0.000019)(1.131024,0.000017)(1.142364,0.000015)(1.153704,0.000014)(1.165043,0.000013)(1.176383,0.000012)(1.187723,0.000011)
      };
    \end{axis}
  \end{tikzpicture}
\end{tabular}
 \caption{ Impact of feature covariance: Hessian spectrum of single- \textbf{(left)} versus multi-bulk \textbf{(right)} , with $p=800$, $n = 6\,000$, logistic model in \eqref{eq:def-logistic} with $\w^* = \zo_p$, $\w = \bmu \sim \NN(\zo, \I_p/p)$; for $\C = \diag[\one_{p/2};~2 \cdot \one_{p/2}]$ \textbf{(left)} and $\C = \diag[\one_{p/2};~4\cdot \one_{p/2}]$ \textbf{(right)}. }  
\label{fig:logistic-lsd}
\end{center}
\vskip -0.1in
\end{figure}

\subsection{Isolated eigenvalues and eigenvectors}\label{subsec:spikes}

As discussed in Remark~\ref{rem:limit-support}, the Hessian has bounded (limiting) eigen-support if and only if $\nu$, the law of $g$, has bounded support. Under this assumption (or, after the application of some function $f$ so that $f(g)$ is bounded), we can then talk about the (possible) isolated Hessian eigenvalues, as in the following result, the proof of which is in Appendix~\ref{subsec:proof-theo-spike}.

\begin{Theorem}[Isolated eigenvalues]\label{theo:spike}
In the setting of Theorem~\ref{theo:lsd}, assume that the probability measure $\nu$ of the random variable $g$ defined in \eqref{eq:def-g} is of bounded supported, and define
\begin{equation}\label{eq:def-G(z)}
    \G(z) = \I_3 + \bLambda(z) \V^\T \bar \Q_b(z) \V \in \RR^{3 \times 3},
\end{equation}
with $\bar \Q_b(z),\delta(z)$ defined in \eqref{eq:def-m(z)}, $\V \equiv [\bmu,~\C \w_*,~\C \w] \in \RR^{p \times 3}$, $\U \equiv \C^{\frac12} [\w_*,~\w] \in \RR^{p \times 2}$ and 
% \begin{equation}\label{eq:def-V-Lambda}
%   \bLambda \equiv \EE \frac{g}{1+g \cdot \delta(z)} \begin{bmatrix} 1 & \z^\T \U (\U^\T \U)^{-1} \\ (\U^\T \U)^{-1} \U^\T \z & (\U^\T \U)^{-1} ( \U^\T \z \z^\T \U - \U^\T \U ) (\U^\T \U)^{-1} \end{bmatrix}.
% \end{equation}
% \begin{equation}\label{eq:def-V-Lambda}
%   \bLambda \equiv \EE \frac{g}{1+g \cdot \delta(z)} \begin{bmatrix} 1 & \rr^\T (\U^\T \U)^{-\frac12} \\ (\U^\T \U)^{-\frac12} \rr & (\U^\T \U)^{-\frac12} ( \rr \rr^\T- \I_2 ) (\U^\T \U)^{-\frac12} \end{bmatrix}.
% \end{equation}
\begin{equation}\label{eq:def-V-Lambda}
  %\bLambda \equiv \EE \frac{g}{1+g \cdot \delta(z)} \begin{bmatrix} 1 & \z^\T \U (\U^\T \U)^{-1} \\ (\U^\T \U)^{-1} \U^\T \z & (\U^\T \U)^{-1} ( \U^\T \z \z^\T \U - \U^\T \U ) (\U^\T \U)^{-1} \end{bmatrix}
  \bLambda(z) \equiv \EE \left[ \frac{g}{1+g \cdot \delta(z)} \begin{bmatrix} 1 & (\U^+ \z)^\T \\ \U^+ \z & \U^+ \z (\U^+ \z)^\T - (\U^\T \U)^+ \end{bmatrix} \right],
\end{equation}
for $\z = \C^{-\frac12} (\x - \bmu) \sim \mathcal N(\mathbf{0},\I_p)$, where we denote $\U^+$ the Moore-Penrose pseudoinverse of $\U$ with $\U^+ = (\U^\T \U)^{-1} \U^\T$ for invertible $\U^\T \U$. Then, for $\lambda$ such that $\G(\lambda)$ has a zero eigenvalue (of multiplicity one), there exists an eigenvalue $\hat \lambda$ of $\H$ such that $ \hat \lambda - \lambda~\asto~0$.
\end{Theorem}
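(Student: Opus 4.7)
The plan is to reduce the isolated-eigenvalue problem to the vanishing of a $3\times 3$ determinant by isolating the finite-dimensional dependence between $\D$ and $\X$ and then invoking Theorem~\ref{theo:DE}. Write $\x_i = \bmu + \C^{1/2}\z_i$ with $\z_i \sim \NN(\zo,\I_p)$, and decompose $\z_i = \U\U^+\z_i + (\I_p - \U\U^+)\z_i \equiv \bar\z_i + \tilde\z_i$ for $\U = \C^{1/2}[\w_*,\w]$. The key observation is that $g_i \equiv \ell''(y_i,\w^\T\x_i)$ depends on $\z_i$ only through $\U^\T\z_i$, so by Gaussianity $g_i$ is independent of $\tilde\z_i$. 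Substituting into $\H = \frac1n\sum_i g_i \x_i\x_i^\T$ and using the identity $\C^{1/2}\bar\z_i = [\C\w_*,\C\w](\U^+\z_i)$ yields a decomposition $\H = \H_\perp + \V\M_n\V^\T + E_n$, where $\H_\perp = \frac1n\sum_i g_i\,\C^{1/2}\tilde\z_i\tilde\z_i^\T\C^{1/2}$ carries the bulk (conditionally on $\{g_i\}$ it matches the Theorem~\ref{theo:lsd} setting with independent $\D$), $E_n$ gathers zero-mean cross terms of vanishing spectral norm, and $\M_n = \frac1n\sum_i g_i \begin{bmatrix} 1 & (\U^+\z_i)^\T \\ \U^+\z_i & \U^+\z_i(\U^+\z_i)^\T \end{bmatrix}$ collects all rank-three contributions in the subspace $\mathrm{span}(\V)$.

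The next step is a Schur/Woodbury reduction. For $\lambda$ outside the limiting support $\supp(\mu)$, the resolvent $(\H_\perp - \lambda\I_p)^{-1}$ is bounded and its bilinear forms against $\V$ converge to those of $\bar\Q_b(\lambda)$ by Theorem~\ref{theo:DE}, so $\lambda$ is asymptotically an eigenvalue of $\H$ precisely when the $3\times 3$ matrix $\I_3 + \M_n\, \V^\T (\H_\perp - \lambda\I_p)^{-1}\V$ is singular. The shared dependence on $g_i$ across $\H_\perp$ and $\M_n$ is what converts $\M_n$ into its dressed version $\bLambda(\lambda)$: applying Gaussian integration by parts (Stein's identity) to $g_i \cdot (\H_\perp - \lambda \I_p)^{-1}$ produces the factor $g/(1 + g\delta(\lambda))$ at leading order, while the identity $\EE[(\U^+\z)(\U^+\z)^\T] = (\U^\T\U)^+$ shows that this part of the second moment is already absorbed by the bulk via $\delta(\lambda)$, leaving only the centered fluctuation $(\U^+\z)(\U^+\z)^\T - (\U^\T\U)^+$ in the bottom-right block of $\bLambda(\lambda)$. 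Passing to the limit through the deterministic-equivalent bilinear convergence gives $\det(\G(\lambda)) = 0$ as the asymptotic spike equation.

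The final step converts the determinantal condition into an actual eigenvalue of $\H$. Since $\lambda \mapsto \G(\lambda)$ is analytic on $\CC \setminus \supp(\mu)$, at a simple zero $\lambda_0$ of $\det \G$ one applies Rouch\'e's theorem (equivalently, a contour integration of the resolvent around a small loop $\Gamma_{\lambda_0}$ enclosing only $\lambda_0$) to the finite-$n$ counterpart of $\det \G$; its uniform almost-sure convergence to $\det \G$ on $\Gamma_{\lambda_0}$ then forces exactly one eigenvalue $\hat\lambda$ of $\H$ inside $\Gamma_{\lambda_0}$ for all large $n,p$, whence $\hat\lambda - \lambda_0 \asto 0$. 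The main obstacle is the Stein's-identity step that identifies $\bLambda(\lambda)$: the $1/(1+g\delta(\lambda))$ dressing and the centering correction $-(\U^\T\U)^+$ must emerge simultaneously from a careful leave-one-out analysis, and the residual $E_n$ together with all finite-$n$ corrections to $\M_n\, \V^\T \bar\Q_b(\lambda) \V$ must be controlled uniformly in $\lambda$ over the contour, which is where the technical heavy lifting lies.
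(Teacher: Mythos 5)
Your starting point---the orthogonal decomposition $\z_i = \P_\U \z_i + \tilde\z_i$ and the observation that $g_i$ is independent of $\tilde\z_i$---is the same structural idea that underpins the paper's analysis, and your identity $\balpha_i\balpha_i^\T = \V\begin{bmatrix}1 & (\U^+\z_i)^\T \\ \U^+\z_i & \U^+\z_i(\U^+\z_i)^\T\end{bmatrix}\V^\T$ is correct. However, the route you take after that is genuinely different from the paper's, and it has a gap at exactly the place you flag as ``technical heavy lifting.''

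The paper does not work with the decomposition $\H = \H_\perp + \V\M_n\V^\T + E_n$ at all when proving this theorem. Instead, the entire burden is carried by Theorem~\ref{theo:DE}, whose proof uses leave-one-out plus Sherman--Morrison applied directly to $\Q = (\H - z\I_p)^{-1}$; the quantity $\delta(z)$, the dressing $g/(1+g\delta)$, and the centering $-(\U^\T\U)^+$ all emerge there. The proof of Theorem~\ref{theo:spike} is then a two-line algebraic consequence: one writes $\bar\Q(z) = (\bar\Q_b^{-1}(z) + \V\bLambda(z)\V^\T)^{-1}$ and applies Sylvester's determinant identity, $\det(\I_p + \V\bLambda\V^\T\bar\Q_b) = \det(\I_3 + \bLambda\V^\T\bar\Q_b\V)$, so the spike locations are the zeros of $\det\G(z)$ outside the bulk. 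Your Rouch\'e/contour step at the end is a reasonable way to upgrade this to an actual eigenvalue of $\H$; the paper leaves that conversion implicit.

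The gap: you claim that for $\lambda$ outside $\supp(\mu)$, the bilinear forms $\V^\T(\H_\perp - \lambda\I_p)^{-1}\V$ ``converge to those of $\bar\Q_b(\lambda)$ by Theorem~\ref{theo:DE}.'' Theorem~\ref{theo:DE} is a statement about $\Q = (\H - z\I_p)^{-1}$, not about $\Q_\perp = (\H_\perp - z\I_p)^{-1}$, so it cannot be invoked here. Moreover the claim is false as stated: conditionally on $\{g_i\}$, $\H_\perp$ is a separable model with \emph{population covariance} $\C^{1/2}(\I_p - \P_\U)\C^{1/2}$, so its deterministic equivalent is $\bar\Q_\perp(z) = \bigl(\EE[\tfrac{g\,\C^{1/2}(\I_p-\P_\U)\C^{1/2}}{1+g\delta(z)}] - z\I_p\bigr)^{-1}$, and $\bar\Q_\perp^{-1} = \bar\Q_b^{-1} - \EE[\tfrac{g}{1+g\delta}]\,\C^{1/2}\P_\U\C^{1/2}$ is a rank-two perturbation of $\bar\Q_b^{-1}$ supported precisely in the subspace spanned by $[\C\w_*, \C\w] \subset \mathrm{span}(\V)$. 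Hence $\V^\T\bar\Q_\perp(\lambda)\V$ and $\V^\T\bar\Q_b(\lambda)\V$ differ by an $O(1)$ amount, and the naive limit of $\I_3 + \M_n\V^\T\Q_\perp\V$ is $\I_3 + \EE[\M_n]\V^\T\bar\Q_\perp\V$, not $\I_3 + \bLambda\V^\T\bar\Q_b\V$. The centering $-(\U^\T\U)^+$ in $\bLambda$ is \emph{exactly} the algebraic compensation for having replaced $\bar\Q_\perp$ by $\bar\Q_b$: one has $\C^{1/2}\P_\U\C^{1/2} = \V\begin{bmatrix}0&0\\0&(\U^\T\U)^{-1}\end{bmatrix}\V^\T$, and folding this rank-two piece out of the bulk and into the low-rank part is what produces the centered block. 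Your proposal instead attributes the centering to the identity $\EE[(\U^+\z)(\U^+\z)^\T] = (\U^\T\U)^+$ being ``absorbed via $\delta(\lambda)$,'' while simultaneously keeping $\bar\Q_b$ in place---these two bookkeeping choices are inconsistent, and a careful derivation must pick one. Likewise, the appearance of the $1/(1+g\delta)$ factor does not come from Stein's identity applied to $g_i \cdot (\H_\perp - \lambda\I_p)^{-1}$; in the paper it comes from Sherman--Morrison on $\Q$ together with the concentration of $\tfrac1n\x_i^\T\Q_{-i}\x_i$ around $\theta$. A Gaussian-integration-by-parts route is conceivable, but you would still need to disentangle the $g_i$ that sits in $\M_n$ from the same $g_i$ inside $\Q_\perp$, and your sketch does not show how the two corrections (dressing and centering) arise jointly and consistently. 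In short, the proposal has the right decomposition and the right final object $\G(\lambda)$, but the passage from the exact finite-$n$ condition $\det(\I_3 + \M_n\V^\T\Q_\perp\V) = 0$ to $\det\G(\lambda) = 0$ rests on an incorrect convergence claim and an unjustified Stein's-identity step; the paper avoids all of this by never splitting $\Q$ in the first place.
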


\begin{Remark}[Connection to isolated eigenvalue phase transition]\label{rem:connection-isolated-eigenvalue-phase-transition}
Theorem~\ref{theo:spike} provides an asymptotic characterization of the possible isolated Hessian eigenvalues by computing the determinant of the much smaller (three-by-three) deterministic matrix $\G$ closely related to the key quantity $\delta(z)$.
Note that Theorem~\ref{theo:spike} does not provide, at least explicitly, the \emph{phase transition} condition under which these eigenvalues/spikes become ``isolated'' from the main bulk. As we shall see in more detail in Section~\ref{sec:discuss}, two types of quantitatively different phase transitions can be characterized: one due to the data ``signal'' $\bmu$; and one due to the structure of the underlying model (even in the absence of any signal in the data).
\end{Remark}

%While in general the (asymptotic) locations of these isolated eigenvalues take no explicit form, in Section~\ref{sec:discuss} we discuss some special cases where explicit solutions can be obtained.
%The presence of isolated eigenvalues thus depends of the interplay between the law of the random variable $g$, the underlying response model \eqref{eq:def-GLM}, the statistics $\bmu, \C$, as well as the dimension ratio $p/n$. 

%%\zhenyu{Michael, in the following par I try to provide some intuition why the isolated eigenvector should ``align'' to the subspace span by the columns of $\V$ (which I think is not trivial). Does this make sense to you? Should we discuss more or less here?}

\medskip

We can also analyze the associated eigenvectors. First note that, for $n \to \infty$ with $p$ fixed, we have, by the strong law of large numbers, that $\H~\asto~\EE[\H]$, with
% \begin{align*}
%   &\EE[\H] = \EE[ g(y, \w^\T \x) \x \x^\T ] = \EE \left[ g \cdot (\bmu + \C^{\frac12} \P_{\U} \z) (\bmu + \C^{\frac12} \P_{\U} \z)^\T \right] + \EE [g] \cdot \C^{\frac12} \left( \I_p - \P_\U \right) \C^{\frac12} \\ 
%   &= \EE[g] \cdot \C + \begin{bmatrix} \bmu & \C^{\frac12} \U \end{bmatrix} \begin{bmatrix} 1 & \EE[g \cdot \z^\T \U (\U^\T \U)^{-1}] \\ \EE[g \cdot (\U^\T \U)^{-1} \U^\T \z] & (\U^\T \U)^{-1} \U^\T \EE[g(\z \z^\T - \I_p)] \U (\U^\T \U)^{-1} \end{bmatrix} \begin{bmatrix} \bmu^\T \\ \U^\T \C^{\frac12} \end{bmatrix} \\ 
%   &= \EE[g] \cdot \C + \V \begin{bmatrix} 1 & \EE[g \cdot \U^+ \z]^\T \\ \EE[g \cdot \U^+ \z] & \U^+ \EE[g(\z \z^\T - \I_p)] (\U^+)^\T \end{bmatrix} \V^\T
% \end{align*}
\begin{align*}
  \EE[\H] &= \EE[ \ell''(y, \w^\T \x) \x \x^\T ] =  \EE[g] \cdot \C + \V \begin{bmatrix} 1 & \EE[g \cdot \U^+ \z]^\T \\ \EE[g \cdot \U^+ \z] & \U^+ \EE[g \cdot (\z \z^\T - \I_p)] (\U^+)^\T \end{bmatrix} \V^\T
\end{align*}
with $\z = \C^{-\frac12} (\x - \bmu) \sim \mathcal N(\mathbf{0},\I_p)$ and $g$ defined in \eqref{eq:def-g}. 
As a consequence, it is expected that in the large $n,p \to \infty$ limit, the top eigenvectors of $\H$ should be related to the columns of $\V$. 
This is the case in Figure~\ref{fig:intro-d}, where the top eigenvector is observed to be a ``noisy'' version of the model parameter $\w_*$. More precisely, for $(\hat \lambda, \hat \uu)$ an isolated eigenpair of $\H$, the \emph{random} projection $\V^\T \hat \uu \hat \uu^\T \V \in \RR^{3 \times 3}$ can be shown to be asymptotically close to a \emph{deterministic} matrix as $n,p \to \infty$. 
This measures the ``cosine-similarly'' between the Hessian isolated eigenvector $\hat \uu$ with any column of $\V$ and consequently the performance of using $\hat \uu$ as an estimate of, for instance the model parameter $\w_*$ for $\C = \I_p$. 
This result is given in the following theorem, which is proven in Appendix~\ref{subsec:proof-theo-eigenvector}.

\begin{Theorem}[Isolated eigenvectors]\label{theo:eigenvector}
In the setting of Theorem~\ref{theo:spike}, for an isolated eigenvalue-eigenvector pair $(\hat \lambda, \hat \uu)$ of $\H$ and $\lambda$ the asymptotic position given in Theorem~\ref{theo:spike}, it holds that
\[
  \V^\T \hat \uu \hat \uu^\T \V = -\V^\T \bar \Q_b(\lambda) \V \cdot \Xi(\lambda) + o(1),%\Xi(\lambda) = -\sum_{i=1}^{m_{\bar \lambda}} \frac{(\V_{r,\bar \lambda})_i (\V_{l,\bar \lambda})_i^\T }{ (\V_{l,\bar \lambda})_i^\T \G'(\bar \lambda) (\V_{r,\bar \lambda})_i } 
\]
with $\Xi(\lambda) = \frac{\vv_{r,\G} \vv_{l,\G}^\T }{ \vv_{l,\G}^\T \G'(\lambda) \vv_{r,\G} } $, for $\bar \Q_b(z)$ and $\G(z)$ defined in \eqref{eq:def-m(z)}~and~\eqref{eq:def-G(z)}, $\vv_{l,\G}, \vv_{r,\G} \in \RR^3$ the left and right eigenvectors of $\G(\lambda)$ associated with eigenvalue zero, and $\G'(z)$ the derivative of $\G(z)$ with respect to $z$ evaluated at $z = \lambda$.
\end{Theorem}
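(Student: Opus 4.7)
The proof rests on a contour-integral representation of the spectral projector onto the eigenvector $\hat\uu$. For a small positively oriented contour $\Gamma_\lambda\subset\CC$ enclosing only the isolated eigenvalue $\hat\lambda$ (such a $\Gamma_\lambda$ exists almost surely for large $n,p$, since by Theorem~\ref{theo:spike} one has $\hat\lambda\to\lambda$ and $\lambda$ lies strictly outside $\supp(\mu)$), Cauchy's formula yields
\[
\hat\uu\hat\uu^\T \;=\; -\frac{1}{2\pi\imath}\oint_{\Gamma_\lambda}\Q(z)\,dz,\qquad \V^\T\hat\uu\hat\uu^\T\V \;=\; -\frac{1}{2\pi\imath}\oint_{\Gamma_\lambda}\V^\T\Q(z)\V\,dz.
\]

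First, I would extend Theorem~\ref{theo:DE} from the normalized-trace deterministic equivalent to a bilinear-form deterministic equivalent for $\V^\T\Q(z)\V\in\CC^{3\times 3}$, uniformly on compact subsets of $\CC\setminus\supp(\mu)$. The rank-three spike carried by $\V=[\bmu,\C\w_*,\C\w]$ together with the coupling of $\D$ to $\X$ through the two linear functionals $\w^\T\x_i$ and $\w_*^\T\x_i$ should, via a Woodbury-type manipulation, produce a deterministic equivalent of the form
\[
\V^\T\bar\Q(z)\V \;=\; \M(z)\;-\;\M(z)\,\G(z)^{-1}\,\bLambda(z)\,\M(z),\qquad \M(z)\equiv\V^\T\bar\Q_b(z)\V,
\]
with $\G(z)$ as in \eqref{eq:def-G(z)}. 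On a neighborhood of $\lambda$, the factors $\M(z),\bLambda(z),\bar\Q_b(z)$ are analytic, so the only singularity of the integrand inside $\Gamma_\lambda$ is the simple pole of $\G(z)^{-1}$ at $z=\lambda$ guaranteed by the vanishing-eigenvalue hypothesis of Theorem~\ref{theo:spike}.

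The residue computation is then standard analytic perturbation of a $3\times 3$ matrix pencil. Since $\G(\lambda)$ has a simple zero eigenvalue with right and left eigenvectors $\vv_{r,\G}$ and $\vv_{l,\G}$, the vanishing eigenvalue of $\G(z)$ behaves as $(z-\lambda)\,\vv_{l,\G}^\T\G'(\lambda)\vv_{r,\G}/(\vv_{l,\G}^\T\vv_{r,\G})+O((z-\lambda)^2)$ and its spectral projector is $\vv_{r,\G}\vv_{l,\G}^\T/(\vv_{l,\G}^\T\vv_{r,\G})$, giving $\Res_{z=\lambda}\G(z)^{-1}=\Xi(\lambda)$. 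The analytic term $\M(z)$ integrates to zero, and uniform-in-$z$ convergence of the deterministic equivalent on $\Gamma_\lambda$ allows the residue to be extracted in the limit, yielding $\V^\T\hat\uu\hat\uu^\T\V = \M(\lambda)\,\Xi(\lambda)\,\bLambda(\lambda)\,\M(\lambda) + o(1)$. The defining left identity $\vv_{l,\G}^\T\G(\lambda)=0$, equivalently $\vv_{l,\G}^\T\bLambda(\lambda)\M(\lambda)=-\vv_{l,\G}^\T$, collapses the trailing product $\Xi(\lambda)\bLambda(\lambda)\M(\lambda)$ to $-\Xi(\lambda)$, producing the stated identity $-\V^\T\bar\Q_b(\lambda)\V\cdot\Xi(\lambda)+o(1)$.

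The main obstacle will be the first step: pinning down the precise Woodbury form of the full-resolvent deterministic equivalent with the exact matrix $\bLambda(z)$ of \eqref{eq:def-V-Lambda}, including the cross blocks $(\U^+\z)^\T,\U^+\z$ and the outer-product block $\U^+\z(\U^+\z)^\T-(\U^\T\U)^+$, and doing so with uniform-in-$z$ error estimates on $\Gamma_\lambda$. Because $\D$ is a nonlinear function of $\X$ through $\w^\T\x_i$ and $\w_*^\T\x_i$, the classical separable-covariance machinery underlying Theorem~\ref{theo:DE} does not apply directly. The natural workaround is the Gaussian decomposition $\x_i=\bmu+\C^{1/2}\P_\U\z_i+\C^{1/2}(\I_p-\P_\U)\z_i$ with $\P_\U=\U(\U^\T\U)^{-1}\U^\T$: the $(\I_p-\P_\U)\z_i$ component is independent of $\D$ and is handled by the separable deterministic equivalent, while the $\P_\U\z_i$ component contributes exactly the cross and outer-product blocks that appear in $\bLambda(z)$. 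The careful bookkeeping needed to preserve a uniform-in-$z$ rate on the compact contour $\Gamma_\lambda$, so that the residue limit can be taken rigorously, is the most delicate piece of the argument.
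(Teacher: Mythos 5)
Your proof is correct and follows essentially the same route as the paper: Cauchy's integral representation of $\hat\uu\hat\uu^\T$, replacement of $\Q(z)$ by the deterministic equivalent of Theorem~\ref{theo:DE}, a Woodbury manipulation isolating $\G(z)$, and a residue computation at the simple pole of $\G(z)^{-1}$. Two small remarks: the paper uses the more compact Woodbury form $\V^\T\bar\Q(z)\V = \V^\T\bar\Q_b(z)\V\,\G(z)^{-1}$, which lands on the stated identity directly and avoids your extra simplification step $\Xi(\lambda)\bLambda(\lambda)\M(\lambda) = -\Xi(\lambda)$; and the ``main obstacle'' you flag --- upgrading the deterministic equivalent to bilinear forms with uniform-in-$z$ control --- is less of an obstacle than you suggest, since the paper's proof of Theorem~\ref{theo:DE} already establishes $\|\EE[\Q(z)] - \bar\Q(z)\| \to 0$ in spectral norm (not merely normalized-trace convergence), which directly gives the bilinear-form estimate; what remains is the concentration $\mathbf{a}^\T(\Q - \EE\Q)\mathbf{b} \to 0$ and uniformity on the compact contour, both standard.
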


\subsection{Technical tool: deterministic equivalent}\label{subsec:DE}

As already discussed in Section~\ref{subsec:pre-DE}, our main technical tool to derive Theorems~\ref{theo:lsd},~\ref{theo:spike}~and~\ref{theo:eigenvector} is the so-called deterministic equivalent approach \cite{hachem2007deterministic,couillet2011random} for the Hessian resolvent $\Q(z) = (\H - z \I_p)^{-1}$, from which follows our main results on the Hessian limiting eigenvalue distribution and the behavior of the possible isolated eigenpairs. 
% Precisely, the normalized trace $\tr \Q(z)/p$ gives the Stieltjes transform $m_\H(z) = \int (t-z)^{-1} \mu_\H(dt)$ of the empirical spectral measure $\mu_\H$ of $\H$, from which $\mu_\H$ can be recovered. Also, for $(\hat \lambda, \hat \uu)$ an eigenpair of interest, with Cauchy's integral formula we have $ |\w^\T \hat \uu|^2 = -\frac1{2\pi \imath} \oint_{\Gamma_{\lambda}} \w^\T \Q(z) \w\,dz$, for a deterministic vector $\w \in \RR^p$ and $\Gamma_{\lambda}$ a positively oriented contour surrounding \emph{only} $\hat \lambda$. As such, for $\bar \Q(z)$ a deterministic equivalent of $\Q(z)$, that is, $\Q(z) \leftrightarrow \bar \Q(z)$ with $ \tr \A (\Q(z) - \bar \Q(z))/p \to 0$ and $\mathbf{a}^\T (\Q(z) - \bar \Q(z)) \mathbf{b} \to 0$ almost surely as $n,p \to \infty$, for $\A \in \RR^{p \times p}$ and $\mathbf{a}, \mathbf{b} \in \RR^p$ of bounded (Euclidean and spectral) norms, the limiting spectral measure (via the associated Stieltjes transform) and the isolated eigenpairs of $\H$ are directly accessible via the study of the deterministic equivalent $\bar \Q(z)$. 
This result is given in the following theorem, with the proof deferred to Appendix~\ref{sec:SM-proof-theo-DE}.

% \begin{Theorem}[Deterministic equivalent for $\Q$]\label{theo:DE}
% For $z \in \CC$ not an eigenvalue of $\H$ in \eqref{eq:def-H}, let $\Q(z) \equiv (\H - z \I_p)^{-1}$ be the resolvent of $\H$. Then, under Assumption~\ref{ass:high-dimen}, we have, as $n,p \to \infty$ with $p/n \to c \in (0, \infty)$ that
% \[
%   \| \EE[\Q (z)] - \bar \Q (z) \| \to 0
% \]
% with $\bar \Q (z) = \left( \EE \frac{ g \cdot \left( \C +  (\bmu + \C^{1/2} \P_{\U} \z) (\bmu + \C^{1/2} \P_{\U} \z)^\T - \C^{1/2} \P_\U \C^{1/2} \right) }{1 + g \cdot \delta(z) } - z \I_p \right)^{-1}$, with the random variable $g \equiv g(y, \w^\T \bmu + \w^\T \C^{1/2} \z)$ for $\z \sim \NN(\zo, \I_p)$, $y$ defined in \eqref{eq:def-GLM}, $\P_\U \equiv \U (\U^\T \U)^{-1} \U^\T$ the projection onto the subspace spanned by the columns of $\U = \C^{\frac12} \begin{bmatrix} \w_* & \w \end{bmatrix}$ for $\w_*, \w$ linearly independent, where the expectation is taken with respect to $\U^\T \z$, and $\delta(z)$ the unique solution in $\CC^+$ to
% \begin{equation}\label{eq:def-delta}
%   \delta(z) = \frac1n \tr \C \left( \EE_g \frac{g}{1 + g \cdot \delta(z)} \C - z \I_p \right)^{-1}.
% \end{equation}
% %such that $\Im[z] \cdot \Im[\delta(z)] \ge 0$.
% \end{Theorem}

\begin{Theorem}[Deterministic equivalent]\label{theo:DE}
Let $\Q(z) \equiv (\H - z \I_p)^{-1}$ be the resolvent of the Hessian $\H$ defined in \eqref{eq:def-H}. Then, under Assumption~\ref{ass:high-dimen}, we have, as $n,p \to \infty$ with $p/n \to c \in (0, \infty)$, that $\Q(z) \leftrightarrow \bar \Q(z)$, with
\[
   %\bar \Q(z)= \left( \EE  \left[ \frac{g}{1+g \cdot \delta(z)} ( \C^{\frac12} (\I_p - \P_\U) \C^{\frac12} +  \balpha \balpha^\T) \right] - z \I_p \right)^{-1}
   \bar \Q^{-1}(z) = \textstyle \EE \frac{g}{1+g \cdot \delta(z)} ( \C^{\frac12} (\I_p - \P_\U) \C^{\frac12} +  \balpha \balpha^\T) - z \I_p,
   % add Eqn number here?
\]
for random vector $\balpha \equiv \bmu + \C^{\frac12} \P_\U \z \in \RR^p$ and $g = \ell''(y, \w^\T \bmu + \w^\T \C^{\frac12} \z)$ for $\z \sim \NN(\zo, \I_p)$ as in \eqref{eq:def-g}, $y$ defined in \eqref{eq:def-GLM}, $\P_\U$ the projection onto the subspace spanned by the columns of $\U$ so that $\P_\U = \U (\U^\T \U)^{-1} \U^\T$ for $\U^\T \U$ invertible, and $\delta(z)$ the unique solution to \eqref{eq:def-m(z)}.
\end{Theorem}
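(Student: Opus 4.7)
}
The central difficulty is that the weight $g_i = \ell''(y_i,\w^\T\x_i)$ in the sum $\H = \frac1n\sum_i g_i\x_i\x_i^\T$ is a (generally nonlinear) function of $\x_i$, so $\D$ and $\X$ are not independent and the classical separable-covariance deterministic equivalents do not apply directly. My plan is to decouple this dependence by exploiting the Gaussianity of $\x_i$ and the fact that $g_i$ only depends on $\x_i$ through the two linear forms $\w_*^\T\x_i$ and $\w^\T\x_i$. Concretely, write $\x_i = \bmu + \C^{\frac12}\z_i$ with $\z_i\sim\NN(\zo,\I_p)$, and split $\z_i = \P_\U\z_i + (\I_p-\P_\U)\z_i$ where $\U=\C^{\frac12}[\w_*,\w]$. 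This produces the orthogonal decomposition $\x_i = \balpha_i + \bbeta_i$ with $\balpha_i = \bmu+\C^{\frac12}\P_\U\z_i$ and $\bbeta_i = \C^{\frac12}(\I_p-\P_\U)\z_i$, where $\bbeta_i$ is Gaussian with covariance $\C^{\frac12}(\I_p-\P_\U)\C^{\frac12}$ and, crucially, is \emph{independent} of $g_i$ (and of $\balpha_i$, and hence of the leave-one-out resolvent $\Q_{-i}$).

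Next I would run the standard leave-one-out + Sherman–Morrison machinery on top of this decomposition. Set $\Q_{-i}=(\H-\tfrac{g_i}{n}\x_i\x_i^\T-z\I_p)^{-1}$; then
\[
  \tfrac{g_i}{n}\,\x_i\x_i^\T\Q \;=\; \frac{(g_i/n)\,\x_i\x_i^\T\Q_{-i}}{1+(g_i/n)\,\x_i^\T\Q_{-i}\x_i}.
\]
Expanding the denominator $\tfrac{1}{n}\x_i^\T\Q_{-i}\x_i = \tfrac{1}{n}\bbeta_i^\T\Q_{-i}\bbeta_i + \tfrac{2}{n}\balpha_i^\T\Q_{-i}\bbeta_i + \tfrac{1}{n}\balpha_i^\T\Q_{-i}\balpha_i$, the second and third pieces are $O(n^{-1/2})$ and $O(n^{-1})$ respectively because $\balpha_i$ lives in a subspace of bounded dimension with $\|\balpha_i\|=O(1)$, whereas by Hanson–Wright applied conditionally on $\Q_{-i}$, $\tfrac{1}{n}\bbeta_i^\T\Q_{-i}\bbeta_i \asto \tfrac{1}{n}\tr\bigl(\C^{\frac12}(\I_p-\P_\U)\C^{\frac12}\Q_{-i}\bigr) = \tfrac{1}{n}\tr(\C\Q_{-i}) + O(1/n)$, the last step using that $\C-\C^{\frac12}(\I_p-\P_\U)\C^{\frac12}$ has rank at most two. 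Combined with the consistency estimate $\tfrac{1}{n}\tr(\C\Q_{-i}) \to \delta(z)$ (interlacing giving $\tfrac{1}{n}\tr\C(\Q-\Q_{-i})=o(1)$), the quadratic form concentrates on $\delta(z)$ and the denominator becomes $1+g_i\delta(z)+o(1)$.

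I would then close the argument via the resolvent identity $\Q^{-1}\Q=\I_p$, i.e.\ $\bigl(\tfrac{1}{n}\sum_i g_i\x_i\x_i^\T - z\I_p\bigr)\Q = \I_p$. Substituting the Sherman–Morrison expression into each term gives
\[
  \sum_{i=1}^n \frac{(g_i/n)\,\x_i\x_i^\T}{1+g_i\delta(z)}\,\Q_{-i} - z\,\Q \;=\; \I_p + o_{\|\cdot\|}(1).
\]
Using the independence of $\bbeta_i$ from $(g_i,\balpha_i,\Q_{-i})$ and a conditional law-of-large-numbers/exchangeability argument, the empirical average asymptotically behaves like $\EE\bigl[\tfrac{g}{1+g\delta(z)}\bigl(\balpha\balpha^\T + \C^{\frac12}(\I_p-\P_\U)\C^{\frac12}\bigr)\bigr]$ times $\bar\Q$, where the $\C_\perp$ piece comes from averaging the quadratic $\bbeta_i\bbeta_i^\T$ against the independent factor $g_i/(1+g_i\delta)$, and the $\balpha\balpha^\T$ piece from averaging the remaining $\balpha_i\balpha_i^\T+\balpha_i\bbeta_i^\T+\bbeta_i\balpha_i^\T$ (the cross terms vanish in expectation). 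Rearranging yields exactly $\bar\Q^{-1}(z) = \EE\bigl[\tfrac{g}{1+g\delta(z)}(\C^{\frac12}(\I_p-\P_\U)\C^{\frac12}+\balpha\balpha^\T)\bigr]-z\I_p$.

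The bookkeeping obstacles I expect to fight hardest with are: (i) upgrading the heuristic ``$\Q_{-i}\approx\bar\Q$'' to a genuine self-consistent deterministic equivalent in the sense of \eqref{eq:def-DE}, which requires controlling both the normalized trace and bilinear forms $\ba^\T(\Q-\bar\Q)\bb$; and (ii) existence/uniqueness of the fixed-point $\delta(z)$, which I would establish by showing that the defining map is a contractive Stieltjes-like transform on $\{z:\Im z>0\}$ (as in \cite{hachem2007deterministic,couillet2014analysis}) using that $g\ge 0$ on a convex-loss regime or, more generally, exploiting analyticity off the real line together with the sign condition $\Im[\delta(z)]\cdot\Im z>0$. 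The $\balpha\balpha^\T$ contribution, although of rank two and hence irrelevant for the Stieltjes transform (Theorem~\ref{theo:lsd}), must be tracked carefully here since it is precisely this piece that feeds the isolated-spike analysis of Theorems~\ref{theo:spike}–\ref{theo:eigenvector}.
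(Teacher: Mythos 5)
Your proposal follows essentially the same route as the paper: the orthogonal decomposition $\z_i = \P_\U\z_i + (\I_p-\P_\U)\z_i$ exploiting Gaussianity to decouple $g_i$ from the perpendicular component, then leave-one-out plus Sherman--Morrison, concentration of the quadratic form $\tfrac1n\x_i^\T\Q_{-i}\x_i$ around $\delta(z)$, and averaging against the independent factor $g/(1+g\delta)$ to produce $\C^{\frac12}(\I_p-\P_\U)\C^{\frac12}+\balpha\balpha^\T$. The paper organizes the bookkeeping by introducing an intermediate target $\bar\Q_\theta$ with $\theta=\tfrac1n\tr\C\,\EE[\Q_{-1}]$ before passing to the self-consistent $\delta(z)$, whereas you work more directly from the identity $(\H-z\I_p)\Q=\I_p$; these are equivalent, and your flagged obstacles (i) and (ii) are exactly where the paper spends its effort.
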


\section{Empirical Evaluation and Discussion}
\label{sec:discuss}

In this section, we provide further discussion on the consequences of Theorems~\ref{theo:lsd},~\ref{theo:spike}~and~\ref{theo:eigenvector}, together with  numerical evaluations. 
The implications of Theorem~\ref{theo:lsd} on the (limiting) Hessian eigenvalue distribution is discussed in Section~\ref{subsec:num-lsd}. 
Then, we discuss the consequences of Theorems~\ref{theo:spike}~and~\ref{theo:eigenvector} on possible isolated eigenpairs, and we show that two fundamentally different phase transition behaviors occur.
%where we observe, as a function of the loss and the data statistics, the Hessian spectrum may behave qualitatively differently and, as mentioned in Remark~\ref{rem:limit-support}, can have unbounded support.
%
Recall that classical spiked models, extensively studied in RMT literature \cite{baik2005phase,benaych2011eigenvalues,lesieur2015phase}, exhibit the following: (i) the isolated spike appears due to the presence of some statistical ``signal'' in the data; and (ii) a ``monotonic'' phase transition behavior, as a function of the signal strength, can be characterized.
Here, we show that, in addition to this classical ``signal'' spike, another type of spike may arise due to the underlying G-GLM model itself (i.e., $\w_*$ and $\w$), and it exhibits a very different transition behavior.
In particular,
in Section~\ref{subsec:spike-mu}, we discuss eigenvalue spikes due to data signal; and
in Section~\ref{subsec:spike-w}, we discuss eigenvalue spikes due to model structure, even when there is no signal in the data.

\subsection{Hessian eigenvalue distribution}
\label{subsec:num-lsd}

%\subsubsection{Special cases}

Here, for a better interpretation of Theorem~\ref{theo:lsd} on the Hessian spectrum, we consider  the special case where $\C = \I_p$, in which case \eqref{eq:def-m(z)} can be further simplified as $z \delta(z) = 1-c - \EE [1/(1 + g \cdot \delta(z))]$, with $m(z) = \delta(z)/c$ and $c = \lim p/n$.

Let us start with the simplest setting where the random variable $g$ in \eqref{eq:def-g} is constant, say $g = 1$.
This happens, for instance, in the case where the square loss $\ell(y,h) = (y - h)^2/2$ is employed for the logistic model \eqref{eq:def-logistic}. In this case, the Hessian is \emph{independent} of the parameters $\w,\w_*$, and we have the Stieltjes transform $m(z)$ as the solution to
\begin{equation}\label{eq:MP-equation}
  z c m^2(z) - (1 - c - z) m(z) + 1 = 0,
\end{equation}
that corresponds to the popular Mar\u{c}enko-Pastur law. %\cite{marchenko1967distribution}. 
%$\mu(dx)= (1-c^{-1})^+ \delta_0(x) + \frac1{2\pi c x} \sqrt{ (x- E_-)^+ ( E_+ - x )^+ }$, with $E_\pm \equiv (1 \pm \sqrt c)^2$ and $(x)^+ = \max(x,0)$.

As long as $g$ is \emph{not} constant, the limiting Hessian spectrum is, a priori, different from the Mar\u{c}enko-Pastur law, even in the simple $\C = \I_p$ setting, since the associated Stieltjes transform $m(z)$ is different from the solution of \eqref{eq:MP-equation}. However, we see in Figure~\ref{fig:logistic-fit-MP}-(top left) that, for the logistic model \eqref{eq:def-logistic} with logistic loss, the Hessian spectrum is close, at least visually, to a rescaled Mar\u{c}enko-Pastur law. This is due to the fact that, the distribution of $g$ is more ``concentrated'' (around some constant, see Figure~\ref{fig:logistic-fit-MP}-(top right) versus (bottom right) for a comparison between different cases) in this specific model. This is in sharp contrast to, for example, Figure~\ref{fig:logistic-fit-MP}-(bottom) where with the exponential loss, the law of $g$ has a much larger ``spread'', and the Hessian eigenspectrum is therefore far away from a Mar\u{c}enko-Pastur-shape.

\begin{figure}[!htb]
\vskip 0.1in
\begin{center}
\begin{tabular}{cc}
\centering
  \begin{tikzpicture}[font=\footnotesize]
    \renewcommand{\axisdefaulttryminticks}{4} 
    \pgfplotsset{every major grid/.append style={densely dashed}}       
    %\pgfplotsset{every axis legend/.append style={cells={anchor=west},fill=none, at={(0.98,0.98)}, anchor=north east, font=\footnotesize }}
    \pgfplotsset{every axis legend/.style={cells={anchor=west},fill=none, at={(1.02,1.02)}, anchor=north east, font=\footnotesize }}
    \begin{axis}[
      width=.45\columnwidth,
      xmin=0,
      ymin=0,
      xmax=0.45,
      ymax=6,
      ymajorticks=false,
      bar width=2.5pt,
      grid=major,
      ymajorgrids=false,
      scaled ticks=true,
      %scale ticks above={4},
      xlabel={},
      ylabel={Eigenvalues}
      ]
      \addlegendimage{smooth,line width=1pt,color=GREEN};
      \addlegendentry{{ Rescaled MP }};
      \addplot+[ybar,mark=none,draw=white,fill=BLUE!60!white,area legend] coordinates{
      (0.077668,0.000000)(0.084827,1.920632)(0.091987,3.492059)(0.099146,4.190471)(0.106305,4.365074)(0.113464,4.714279)(0.120623,4.888882)(0.127782,4.888882)(0.134941,4.888882)(0.142100,4.365074)(0.149259,5.238088)(0.156418,4.190471)(0.163578,5.063485)(0.170737,4.539677)(0.177896,4.190471)(0.185055,4.190471)(0.192214,4.539677)(0.199373,4.190471)(0.206532,4.015868)(0.213691,3.841265)(0.220850,3.666662)(0.228009,4.190471)(0.235169,3.492059)(0.242328,3.317456)(0.249487,3.492059)(0.256646,2.968250)(0.263805,3.492059)(0.270964,2.968250)(0.278123,2.968250)(0.285282,2.619044)(0.292441,2.619044)(0.299600,2.619044)(0.306760,2.444441)(0.313919,2.269838)(0.321078,2.095235)(0.328237,2.095235)(0.335396,1.920632)(0.342555,1.746029)(0.349714,1.746029)(0.356873,1.571426)(0.364032,1.047618)(0.371191,1.047618)(0.378351,1.047618)(0.385510,0.523809)(0.392669,0.000000)(0.399828,0.000000)(0.406987,0.000000)(0.414146,0.000000)(0.421305,0.000000)(0.428464,0.000000)
      };
      \addplot[densely dashed,RED,line width=1.5pt] coordinates{
      (0.074089,0.001198)(0.077632,0.001941)(0.081176,0.008712)(0.084719,1.638756)(0.088262,2.830310)(0.091806,3.419787)(0.095349,3.746725)(0.098893,4.028202)(0.102436,4.259827)(0.105979,4.417524)(0.109523,4.537202)(0.113066,4.631219)(0.116610,4.704540)(0.120153,4.755335)(0.123696,4.791691)(0.127240,4.813478)(0.130783,4.825494)(0.134326,4.827450)(0.137870,4.822090)(0.141413,4.809600)(0.144957,4.791234)(0.148500,4.768107)(0.152043,4.740594)(0.155587,4.709298)(0.159130,4.674768)(0.162674,4.637569)(0.166217,4.597842)(0.169760,4.556014)(0.173304,4.512368)(0.176847,4.467129)(0.180391,4.420489)(0.183934,4.372581)(0.187477,4.323653)(0.191021,4.273748)(0.194564,4.223067)(0.198108,4.171643)(0.201651,4.119584)(0.205194,4.066984)(0.208738,4.013905)(0.212281,3.960398)(0.215825,3.906512)(0.219368,3.852291)(0.222911,3.797770)(0.226455,3.742982)(0.229998,3.687954)(0.233541,3.632708)(0.237085,3.577258)(0.240628,3.521614)(0.244172,3.465780)(0.247715,3.409762)(0.251258,3.353565)(0.254802,3.297190)(0.258345,3.240635)(0.261889,3.183897)(0.265432,3.126968)(0.268975,3.069831)(0.272519,3.012447)(0.276062,2.954788)(0.279606,2.896935)(0.283149,2.838823)(0.286692,2.780270)(0.290236,2.721544)(0.293779,2.662380)(0.297323,2.602692)(0.300866,2.542611)(0.304409,2.482098)(0.307953,2.421215)(0.311496,2.359412)(0.315040,2.297197)(0.318583,2.233752)(0.322126,2.169782)(0.325670,2.105492)(0.329213,2.039237)(0.332756,1.973098)(0.336300,1.904447)(0.339843,1.834324)(0.343387,1.762073)(0.346930,1.692066)(0.350473,1.615979)(0.354017,1.536775)(0.357560,1.452672)(0.361104,1.367961)(0.364647,1.279695)(0.368190,1.182076)(0.371734,1.101511)(0.375277,0.976339)(0.378821,0.876267)(0.382364,0.761410)(0.385907,0.533967)(0.389451,0.291993)(0.392994,0.000865)(0.396538,0.000461)(0.400081,0.000340)(0.403624,0.000276)(0.407168,0.000234)(0.410711,0.000205)(0.414255,0.000183)(0.417798,0.000165)(0.421341,0.000151)(0.424885,0.000138)
      };
      \addplot[smooth,GREEN,line width=1pt] coordinates{
      (0.074089,0.000000)(0.077632,0.000000)(0.081176,0.000000)(0.084719,1.086935)(0.088262,2.357726)(0.091806,3.028529)(0.095349,3.485937)(0.098893,3.821527)(0.102436,4.076164)(0.105979,4.272643)(0.109523,4.425305)(0.113066,4.543879)(0.116610,4.635312)(0.120153,4.704754)(0.123696,4.756131)(0.127240,4.792504)(0.130783,4.816301)(0.134326,4.829478)(0.137870,4.833626)(0.141413,4.830057)(0.144957,4.819860)(0.148500,4.803946)(0.152043,4.783082)(0.155587,4.757919)(0.159130,4.729011)(0.162674,4.696831)(0.166217,4.661787)(0.169760,4.624229)(0.173304,4.584460)(0.176847,4.542745)(0.180391,4.499311)(0.183934,4.454357)(0.187477,4.408057)(0.191021,4.360562)(0.194564,4.312004)(0.198108,4.262499)(0.201651,4.212148)(0.205194,4.161039)(0.208738,4.109248)(0.212281,4.056842)(0.215825,4.003879)(0.219368,3.950409)(0.222911,3.896475)(0.226455,3.842112)(0.229998,3.787352)(0.233541,3.732219)(0.237085,3.676734)(0.240628,3.620914)(0.244172,3.564769)(0.247715,3.508309)(0.251258,3.451537)(0.254802,3.394455)(0.258345,3.337060)(0.261889,3.279347)(0.265432,3.221308)(0.268975,3.162930)(0.272519,3.104200)(0.276062,3.045098)(0.279606,2.985603)(0.283149,2.925691)(0.286692,2.865334)(0.290236,2.804498)(0.293779,2.743148)(0.297323,2.681242)(0.300866,2.618735)(0.304409,2.555572)(0.307953,2.491698)(0.311496,2.427044)(0.315040,2.361538)(0.318583,2.295094)(0.322126,2.227618)(0.325670,2.158999)(0.329213,2.089112)(0.332756,2.017811)(0.336300,1.944925)(0.339843,1.870253)(0.343387,1.793558)(0.346930,1.714549)(0.350473,1.632873)(0.354017,1.548088)(0.357560,1.459635)(0.361104,1.366779)(0.364647,1.268530)(0.368190,1.163500)(0.371734,1.049626)(0.375277,0.923615)(0.378821,0.779565)(0.382364,0.604677)(0.385907,0.356015)(0.389451,0.000000)(0.392994,0.000000)(0.396538,0.000000)(0.400081,0.000000)(0.403624,0.000000)(0.407168,0.000000)(0.410711,0.000000)(0.414255,0.000000)(0.417798,0.000000)(0.421341,0.000000)(0.424885,0.000000)
      };
    \end{axis}
  \end{tikzpicture}
&
  \begin{tikzpicture}[font=\footnotesize]
    \renewcommand{\axisdefaulttryminticks}{4} 
    \pgfplotsset{every major grid/.append style={densely dashed}}       
    %\pgfplotsset{every axis legend/.append style={cells={anchor=west},fill=none, at={(0.98,0.98)}, anchor=north east, font=\footnotesize }}
    \pgfplotsset{every axis legend/.style={cells={anchor=west},fill=none, at={(0.02,1.02)}, anchor=north west, font=\footnotesize }}
    \begin{axis}[
      width=.45\columnwidth,
      xmin=0,
      ymin=0,
      xmax=0.27,
      ymax=30,
      xtick={0,0.1,0.2},
      ymajorticks=false,
      bar width=2.5pt,
      grid=major,
      ymajorgrids=false,
      scaled ticks=true,
      %scale ticks above={4},
      xlabel={},
      ylabel={Distribution of $g$}
      ]
      \addplot+[ybar,mark=none,draw=white,fill=BLUE!60!white,area legend] coordinates{
      (0.020649,0.031780)(0.025893,0.000000)(0.031138,0.063560)(0.036382,0.158901)(0.041626,0.190681)(0.046871,0.349583)(0.052115,0.286022)(0.057360,0.381363)(0.062604,0.540264)(0.067848,0.826286)(0.073093,0.444923)(0.078337,0.730946)(0.083581,0.635605)(0.088826,0.889847)(0.094070,1.334770)(0.099314,0.858067)(0.104559,1.525452)(0.109803,1.430111)(0.115047,1.557232)(0.120292,1.811474)(0.125536,1.557232)(0.130780,2.161056)(0.136025,2.161056)(0.141269,2.447079)(0.146513,2.764881)(0.151758,3.019123)(0.157002,3.178024)(0.162246,3.209804)(0.167491,4.194992)(0.172735,4.163212)(0.177980,3.718288)(0.183224,4.385673)(0.188468,4.798817)(0.193713,5.243740)(0.198957,6.133587)(0.204201,5.911125)(0.209446,6.133587)(0.214690,8.389984)(0.219934,8.676006)(0.225179,8.517105)(0.230423,10.201458)(0.235667,12.267173)(0.240912,15.635879)(0.246156,27.108546)(0.251400,20.657157)(0.256645,0.000000)(0.261889,0.000000)(0.267133,0.000000)(0.272378,0.000000)(0.277622,0.000000)
      };
      \addplot[smooth,GREEN,line width=3pt] coordinates{
      (0.2081,0)(0.2081,30)
      };
      \legend{ {Empirical $g$}, { Scale factor} };
    \end{axis}
  \end{tikzpicture}
  \\
  \begin{tikzpicture}[font=\footnotesize]
    \renewcommand{\axisdefaulttryminticks}{4} 
    \pgfplotsset{every major grid/.append style={densely dashed}}       
    %\pgfplotsset{every axis legend/.append style={cells={anchor=west},fill=none, at={(0.98,0.98)}, anchor=north east, font=\footnotesize }}
    \pgfplotsset{every axis legend/.style={cells={anchor=west},fill=none, at={(0.98,0.98)}, anchor=north east, font=\footnotesize}}
    \begin{axis}[
      width=.45\columnwidth,
      xmin=0,
      ymin=0,
      xmax=9,
      ymax=0.7,
      ymajorticks=false,
      bar width=2.5pt,
      grid=major,
      ymajorgrids=false,
      scaled ticks=true,
      %scale ticks above={4},
      xlabel={},
      ylabel={Eigenvalues}
      ]
      \addplot+[ybar,mark=none,draw=white,fill=BLUE!60!white,area legend] coordinates{
      (0.506334,0.339290)(0.679490,0.613609)(0.852646,0.613609)(1.025802,0.555857)(1.198958,0.483668)(1.372114,0.447574)(1.545270,0.375384)(1.718426,0.332071)(1.891582,0.295976)(2.064738,0.259881)(2.237893,0.223787)(2.411049,0.209349)(2.584205,0.158816)(2.757361,0.144379)(2.930517,0.137160)(3.103673,0.108284)(3.276829,0.086627)(3.449985,0.086627)(3.623141,0.064970)(3.796296,0.050532)(3.969452,0.036095)(4.142608,0.036095)(4.315764,0.036095)(4.488920,0.014438)(4.662076,0.014438)(4.835232,0.007219)(5.008388,0.007219)(5.181544,0.007219)(5.354700,0.000000)(5.527855,0.000000)(5.701011,0.000000)(5.874167,0.007219)(6.047323,0.007219)(6.220479,0.000000)(6.393635,0.000000)(6.566791,0.000000)(6.739947,0.007219)(6.913103,0.000000)(7.086259,0.000000)(7.259414,0.000000)(7.432570,0.000000)(7.605726,0.000000)(7.778882,0.000000)(7.952038,0.000000)(8.125194,0.007219)(8.298350,0.000000)(8.471506,0.000000)(8.644662,0.000000)(8.817818,0.000000)(8.990973,0.000000)
      };
      \addplot[densely dashed,RED,line width=1.5pt] coordinates{
      (0.419756,0.000029)(0.505460,0.421962)(0.591163,0.592947)(0.676867,0.635292)(0.762570,0.634662)(0.848274,0.615869)(0.933977,0.589040)(1.019680,0.558927)(1.105384,0.527905)(1.191087,0.497209)(1.276791,0.467478)(1.362494,0.439025)(1.448198,0.411987)(1.533901,0.386398)(1.619604,0.362240)(1.705308,0.339457)(1.791011,0.317989)(1.876715,0.297757)(1.962418,0.278686)(2.048122,0.260707)(2.133825,0.243747)(2.219528,0.227739)(2.305232,0.212623)(2.390935,0.198342)(2.476639,0.184845)(2.562342,0.172082)(2.648046,0.160011)(2.733749,0.148591)(2.819452,0.137789)(2.905156,0.127569)(2.990859,0.117904)(3.076563,0.108768)(3.162266,0.100135)(3.247969,0.091986)(3.333673,0.084302)(3.419376,0.077066)(3.505080,0.070264)(3.590783,0.063884)(3.676487,0.057913)(3.762190,0.052341)(3.847893,0.047158)(3.933597,0.042357)(4.019300,0.037928)(4.105004,0.033860)(4.190707,0.030142)(4.276411,0.026762)(4.362114,0.023706)(4.447817,0.020958)(4.533521,0.018501)(4.619224,0.016313)(4.704928,0.014375)(4.790631,0.012664)(4.876335,0.011159)(4.962038,0.009839)(5.047741,0.008683)(5.133445,0.007673)(5.219148,0.006790)(5.304852,0.006019)(5.390555,0.005345)(5.476259,0.004756)(5.561962,0.004240)(5.647665,0.003787)(5.733369,0.003390)(5.819072,0.003041)(5.904776,0.002733)(5.990479,0.002461)(6.076182,0.002221)(6.161886,0.002008)(6.247589,0.001818)(6.333293,0.001650)(6.418996,0.001499)(6.504700,0.001365)(6.590403,0.001245)(6.676106,0.001137)(6.761810,0.001040)(6.847513,0.000953)(6.933217,0.000874)(7.018920,0.000803)(7.104624,0.000739)(7.190327,0.000681)(7.276030,0.000628)(7.361734,0.000580)(7.447437,0.000536)(7.533141,0.000496)(7.618844,0.000460)(7.704548,0.000427)(7.790251,0.000396)(7.875954,0.000368)(7.961658,0.000342)(8.047361,0.000319)(8.133065,0.000297)(8.218768,0.000277)(8.304472,0.000259)(8.390175,0.000242)(8.475878,0.000226)(8.561582,0.000212)(8.647285,0.000198)(8.732989,0.000186)(8.818692,0.000175)(8.904395,0.000164)
      };
      \addplot[smooth,GREEN,line width=1pt] coordinates{
      (0.419756,0.000000)(0.505460,0.000000)(0.591163,0.000000)(0.676867,0.000000)(0.762570,0.000000)(0.848274,0.000000)(0.933977,0.000000)(1.019680,0.000000)(1.105384,0.000000)(1.191087,0.000000)(1.276791,0.000000)(1.362494,0.000000)(1.448198,0.000000)(1.533901,0.000000)(1.619604,0.000000)(1.705308,0.000000)(1.791011,0.000000)(1.876715,0.000000)(1.962418,0.000000)(2.048122,0.000000)(2.133825,0.050118)(2.219528,0.095516)(2.305232,0.120908)(2.390935,0.138501)(2.476639,0.151543)(2.562342,0.161526)(2.648046,0.169295)(2.733749,0.175383)(2.819452,0.180159)(2.905156,0.183884)(2.990859,0.186754)(3.076563,0.188919)(3.162266,0.190496)(3.247969,0.191579)(3.333673,0.192242)(3.419376,0.192548)(3.505080,0.192547)(3.590783,0.192282)(3.676487,0.191788)(3.762190,0.191095)(3.847893,0.190230)(3.933597,0.189213)(4.019300,0.188063)(4.105004,0.186796)(4.190707,0.185428)(4.276411,0.183968)(4.362114,0.182429)(4.447817,0.180819)(4.533521,0.179146)(4.619224,0.177417)(4.704928,0.175639)(4.790631,0.173817)(4.876335,0.171954)(4.962038,0.170057)(5.047741,0.168128)(5.133445,0.166170)(5.219148,0.164187)(5.304852,0.162180)(5.390555,0.160153)(5.476259,0.158106)(5.561962,0.156041)(5.647665,0.153960)(5.733369,0.151864)(5.819072,0.149754)(5.904776,0.147630)(5.990479,0.145493)(6.076182,0.143345)(6.161886,0.141184)(6.247589,0.139012)(6.333293,0.136829)(6.418996,0.134634)(6.504700,0.132427)(6.590403,0.130209)(6.676106,0.127978)(6.761810,0.125735)(6.847513,0.123478)(6.933217,0.121208)(7.018920,0.118923)(7.104624,0.116623)(7.190327,0.114307)(7.276030,0.111972)(7.361734,0.109619)(7.447437,0.107246)(7.533141,0.104850)(7.618844,0.102431)(7.704548,0.099986)(7.790251,0.097513)(7.875954,0.095009)(7.961658,0.092471)(8.047361,0.089897)(8.133065,0.087283)(8.218768,0.084623)(8.304472,0.081914)(8.390175,0.079150)(8.475878,0.076325)(8.561582,0.073430)(8.647285,0.070456)(8.732989,0.067393)(8.818692,0.064227)(8.904395,0.060941)
      };
      %\legend{ {Eigenvalues}, {Limiting law}, {Rescaled MP} };
    \end{axis}
  \end{tikzpicture}
&
  \begin{tikzpicture}[font=\footnotesize]
    \renewcommand{\axisdefaulttryminticks}{4} 
    \pgfplotsset{every major grid/.append style={densely dashed}}       
    %\pgfplotsset{every axis legend/.append style={cells={anchor=west},fill=none, at={(0.98,0.98)}, anchor=north east, font=\footnotesize }}
    \pgfplotsset{every axis legend/.style={cells={anchor=west},fill=none, at={(0,0.98)}, anchor=north west, font=\footnotesize}}
    \begin{axis}[
      width=.45\columnwidth,
      xmin=0,
      ymin=0,
      xmax=10,
      ymax=.5,
      ymajorticks=false,
      bar width=2.5pt,
      grid=major,
      ymajorgrids=false,
      scaled ticks=true,
      %scale ticks above={4},
      xlabel={},
      ylabel={Distribution of $g$}
      ]
      \addplot+[ybar,mark=none,draw=white,fill=BLUE!60!white,area legend] coordinates{
      (0.138467,0.370366)(0.346370,0.650144)(0.554273,0.610061)(0.762175,0.506647)(0.970078,0.387200)(1.177981,0.327076)(1.385883,0.290200)(1.593786,0.231679)(1.801689,0.187588)(2.009591,0.130670)(2.217494,0.153117)(2.425397,0.109827)(2.633300,0.091389)(2.841202,0.083372)(3.049105,0.069744)(3.257008,0.057719)(3.464910,0.044091)(3.672813,0.052909)(3.880716,0.048901)(4.088618,0.032066)(4.296521,0.032066)(4.504424,0.027256)(4.712327,0.028860)(4.920229,0.019240)(5.128132,0.020041)(5.336035,0.021645)(5.543937,0.024851)(5.751840,0.012025)(5.959743,0.012025)(6.167645,0.016033)(6.375548,0.012025)(6.583451,0.011223)(6.791353,0.006413)(6.999256,0.007215)(7.207159,0.009620)(7.415062,0.008818)(7.622964,0.003207)(7.830867,0.004810)(8.038770,0.008818)(8.246672,0.003207)(8.454575,0.002405)(8.662478,0.008818)(8.870380,0.004008)(9.078283,0.005612)(9.286186,0.004008)(9.494088,0.004008)(9.701991,0.003207)(9.909894,0.002405)(10.117797,0.005612)(10.325699,0.001603)(10.533602,0.001603)(10.741505,0.001603)(10.949407,0.002405)(11.157310,0.001603)(11.365213,0.003207)(11.573115,0.003207)(11.781018,0.002405)(11.988921,0.002405)(12.196824,0.002405)(12.404726,0.000802)(12.612629,0.002405)(12.820532,0.000000)(13.028434,0.000802)(13.236337,0.000802)(13.444240,0.000000)(13.652142,0.000802)(13.860045,0.001603)(14.067948,0.000000)(14.275850,0.000000)(14.483753,0.000000)(14.691656,0.000802)(14.899559,0.000000)(15.107461,0.000000)(15.315364,0.002405)(15.523267,0.000802)(15.731169,0.000000)(15.939072,0.001603)(16.146975,0.000802)(16.354877,0.000802)(16.562780,0.000000)(16.770683,0.001603)(16.978585,0.000000)(17.186488,0.000000)(17.394391,0.000000)(17.602294,0.000000)(17.810196,0.000000)(18.018099,0.000802)(18.226002,0.000000)(18.433904,0.000000)(18.641807,0.000802)(18.849710,0.000000)(19.057612,0.000000)(19.265515,0.000000)(19.473418,0.000000)(19.681320,0.000000)(19.889223,0.000000)(20.097126,0.000000)(20.305029,0.000000)(20.512931,0.000000)(20.720834,0.000000)(20.928737,0.000000)(21.136639,0.000000)(21.344542,0.000000)(21.552445,0.000000)(21.760347,0.000000)(21.968250,0.000000)(22.176153,0.000000)(22.384056,0.000000)(22.591958,0.000802)(22.799861,0.000802)(23.007764,0.000802)(23.215666,0.000000)(23.423569,0.000000)(23.631472,0.000000)(23.839374,0.000000)(24.047277,0.000000)(24.255180,0.000000)(24.463082,0.000000)(24.670985,0.000000)(24.878888,0.000000)(25.086791,0.000000)(25.294693,0.000000)(25.502596,0.000000)(25.710499,0.000000)(25.918401,0.000802)(26.126304,0.000000)(26.334207,0.000000)(26.542109,0.000000)(26.750012,0.000000)(26.957915,0.000000)(27.165817,0.000000)(27.373720,0.000000)(27.581623,0.000000)(27.789526,0.000000)(27.997428,0.000000)(28.205331,0.000000)(28.413234,0.000000)(28.621136,0.000000)(28.829039,0.000802)(29.036942,0.000000)(29.244844,0.000000)(29.452747,0.000000)(29.660650,0.000000)(29.868553,0.000000)(30.076455,0.000000)(30.284358,0.000000)(30.492261,0.000000)(30.700163,0.000000)(30.908066,0.000000)(31.115969,0.000000)(31.323871,0.000000)(31.531774,0.000000)(31.739677,0.000000)(31.947579,0.000000)(32.155482,0.000000)(32.363385,0.000000)(32.571288,0.000000)(32.779190,0.000000)(32.987093,0.000000)(33.194996,0.000000)(33.402898,0.000000)(33.610801,0.000000)(33.818704,0.000000)(34.026606,0.000000)(34.234509,0.000000)(34.442412,0.000000)(34.650314,0.000000)(34.858217,0.000000)(35.066120,0.000000)(35.274023,0.000000)(35.481925,0.000802)(35.689828,0.000000)(35.897731,0.000000)(36.105633,0.000000)(36.313536,0.000000)(36.521439,0.000000)(36.729341,0.000000)(36.937244,0.000000)(37.145147,0.000000)(37.353050,0.000000)(37.560952,0.000000)(37.768855,0.000000)(37.976758,0.000000)(38.184660,0.000000)(38.392563,0.000000)(38.600466,0.000000)(38.808368,0.000000)(39.016271,0.000000)(39.224174,0.000000)(39.432076,0.000000)(39.639979,0.000000)(39.847882,0.000000)(40.055785,0.000000)(40.263687,0.000000)(40.471590,0.000000)(40.679493,0.000000)(40.887395,0.000000)(41.095298,0.000000)(41.303201,0.000000)(41.511103,0.000802)
      };
      \addplot[smooth,GREEN,line width=3pt] coordinates{
      (5.22,0)(5.22,30)
      };
     % \legend{ {Distribution of $g$}, { Estimated scaling factor} };
    \end{axis}
  \end{tikzpicture}
\end{tabular}
 \caption{ Comparison of Hessian eigenspectra with the Mar\u{c}enko-Pastur law in the setting of Figure~\ref{fig:log-exp-lsd}. \textbf{(Top)}: Mar\u{c}enko-Pastur-like Hessian with logistic loss; \textbf{(left)} Hessian eigenvalues and \textbf{(right)} empirical distribution of the $g_i$s versus the scaling factor (empirically obtained from the distance between the maximum and minimum Hessian eigenvalues to fit the Mar\u{c}enko-Pastur law). 
 Comparing to Figure~\ref{fig:log-exp-lsd}-(right) and Figure~\ref{fig:logistic-lsd}, we see that changes in the choice of loss and/or feature statistics (e.g., heterogeneous features) result in very different and non-Mar\u{c}enko-Pastur-like Hessian spectral behavior. 
 \textbf{(Bottom)}: an example of a very non-Mar\u{c}enko-Pastur-like Hessian with exponential loss \textbf{(left)} and the associated $g_i$s \textbf{(right)}. 
 Note that the scales of the axes are different in different subfigures.
}
\label{fig:logistic-fit-MP}
\end{center}
\vskip -0.1in
\end{figure}
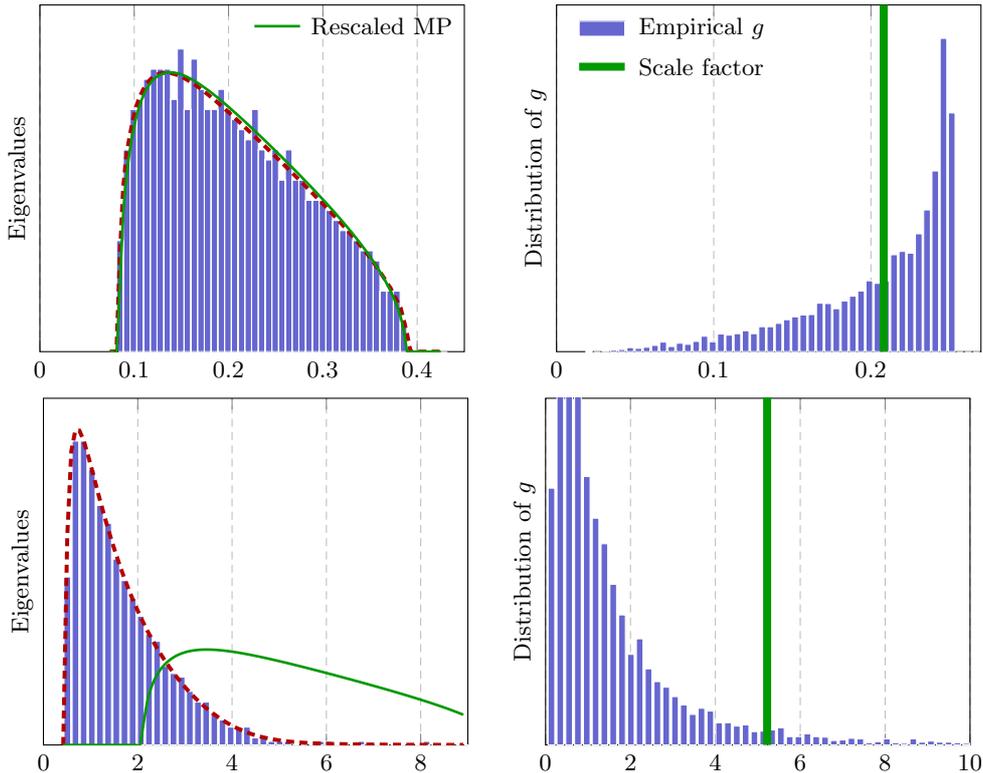

\medskip

It should be mentioned that this ``empirical fit'' has already been observed in \cite[Figure~5]{pedregosa2020average}, where acceleration methods proposed for a Mar\u{c}enko-Pastur distributed Hessian (in linear least squares) work reasonably well on logistic regression problems. 
Our theory proposes a convincing explanation of this empirical observation on logistic regression, and possibly for others more involved models such as NNs.

However, it must be pointed out that this ``visual approximation'' by  Mar\u{c}enko-Pastur law is \emph{not robust}, in the sense that it ``visually'' holds only for (yet still is formally different from) the case of (i) logistic model with (ii) logistic loss and (iii) identity covariance $\C = \I_p$.
Any change in the response model (e.g., with the phase retrieval model in Figure~\ref{fig:intro-c}), the choice of loss function (e.g., with the exponential loss in Figure~\ref{fig:log-exp-lsd}-right), or going beyond the simple identity covariance (that models very homogeneous features) setting, as in Figure~\ref{fig:logistic-lsd}, would induce a Hessian spectrum that is \emph{very} different from the Mar\u{c}enko-Pastur law. 
In this vein, our Theorem~\ref{theo:lsd} goes beyond such a ``loose'' Mar\u{c}enko-Pastur approximation.  
It provides a first example toward the understanding of Hessians of deep NNs and other much more realistic models that exploit nonlinear transformations (such as activation function) and feature statistics.

\medskip

In the proofs of Theorem~\ref{theo:lsd}--\ref{theo:DE}, we explicitly used the fact that, for jointly Gaussian random variables, uncorrelatedness implies independence, and therefore our main results (on the limiting Hessian eigenvalue distribution and the possible isolated eigenpairs) are proven to hold only for Gaussian features. On the other hand, universality phenomena commonly arise in random matrix theory, and it has been proven for various models that the limiting spectral measure \cite{tao2010random}, the spectrum edges \cite{bourgade2014edge,yang2019edge}, and even involved functionals of the eigenvalues \cite{bai2008clt} \emph{do not} depend on the distribution of the random matrix, so long as its entries are independent and have zero mean, unit variance, and finite higher, say fourth, order moment.\footnote{Counterexample exists, though, for instance it has been shown in \cite{fan2019spectral,liao2020sparse} that, even in the presence of signal, nonlinear random matrices may produce ``noisy spikes'' (that are purely random and carry no information about the data signal and) that, in particular, depends on the fourth moment of the distribution.}
%This dependence on the fourth moment has also been observed in CLT-type results for high-dimensional least squares estimators in \cite{li2020provable}.}

Figure~\ref{fig:lsd-non-gauss} compares the Hessian eigenvalues for feature following Gaussian, Bernoulli, and Student-t distribution, where we empirically confirm that Theorem~\ref{theo:lsd} provides consistent predictions for non-Gaussian features with reasonably large $n,p$.
We conjecture that Theorem~\ref{theo:lsd}, proven here only for Gaussian distribution, holds more generally. 

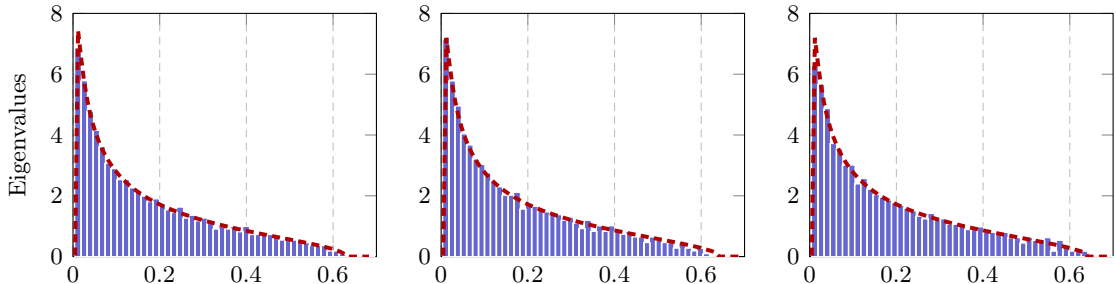
\begin{figure}[htb]
\centering
\begin{tabular}{ccc}
\centering
  \begin{tikzpicture}[font=\footnotesize]
    \renewcommand{\axisdefaulttryminticks}{4} 
    \pgfplotsset{every major grid/.append style={densely dashed}}       
    %\pgfplotsset{every axis legend/.append style={cells={anchor=west},fill=none, at={(0.98,0.98)}, anchor=north east, font=\footnotesize }}
    \pgfplotsset{every axis legend/.style={cells={anchor=west},fill=none, at={(0.98,0.98)}, anchor=north east, font=\footnotesize}}
    \begin{axis}[
      width=.35\textwidth,
      xmin=0,
      xmax=0.7,
      ymin=0,
      ymax=8,
      bar width=2.5pt,
      grid=major,
      ymajorgrids=false,
      scaled ticks=true,
      %scale ticks above={4},
      xlabel={},
      ylabel={Eigenvalues}
      ]
      \addplot+[ybar,mark=none,draw=white,fill=BLUE!60!white,area legend] coordinates{
      (0.011721,6.864374)(0.025561,5.780526)(0.039400,4.786998)(0.053240,4.154753)(0.067080,3.612828)(0.080919,3.070904)(0.094759,2.890263)(0.108598,2.528980)(0.122438,2.528980)(0.136277,2.258018)(0.150117,2.167697)(0.163957,1.987056)(0.177796,1.806414)(0.191636,1.896735)(0.205475,1.716094)(0.219315,1.535452)(0.233154,1.445131)(0.246994,1.625773)(0.260834,1.264490)(0.274673,1.354811)(0.288513,1.174169)(0.302352,1.264490)(0.316192,1.174169)(0.330031,0.903207)(0.343871,0.993528)(0.357711,0.903207)(0.371550,0.993528)(0.385390,0.812886)(0.399229,0.993528)(0.413069,0.722566)(0.426908,0.722566)(0.440748,0.722566)(0.454588,0.722566)(0.468427,0.632245)(0.482267,0.541924)(0.496106,0.632245)(0.509946,0.541924)(0.523785,0.541924)(0.537625,0.451604)(0.551465,0.361283)(0.565304,0.361283)(0.579144,0.361283)(0.592983,0.180641)(0.606823,0.180641)(0.620663,0.090321)(0.634502,0.000000)(0.648342,0.000000)(0.662181,0.000000)(0.676021,0.000000)(0.689860,0.000000)
      };
      \addplot[densely dashed,RED,line width=1.5pt] coordinates{
      (0.004802,0.036413)(0.011651,7.429039)(0.018501,6.559183)(0.025351,5.795190)(0.032201,5.202344)(0.039051,4.736158)(0.045901,4.359785)(0.052751,4.048462)(0.059601,3.785683)(0.066451,3.560097)(0.073300,3.363713)(0.080150,3.190709)(0.087000,3.036761)(0.093850,2.898576)(0.100700,2.773587)(0.107550,2.659798)(0.114400,2.555589)(0.121250,2.459659)(0.128100,2.370924)(0.134949,2.288513)(0.141799,2.211668)(0.148649,2.139767)(0.155499,2.072276)(0.162349,2.008741)(0.169199,1.948762)(0.176049,1.891997)(0.182899,1.838149)(0.189749,1.786958)(0.196598,1.738189)(0.203448,1.691640)(0.210298,1.647125)(0.217148,1.604491)(0.223998,1.563587)(0.230848,1.524282)(0.237698,1.486456)(0.244548,1.450009)(0.251398,1.414835)(0.258247,1.380856)(0.265097,1.347979)(0.271947,1.316141)(0.278797,1.285274)(0.285647,1.255308)(0.292497,1.226189)(0.299347,1.197863)(0.306197,1.170282)(0.313047,1.143400)(0.319896,1.117175)(0.326746,1.091566)(0.333596,1.066538)(0.340446,1.042047)(0.347296,1.018066)(0.354146,0.994564)(0.360996,0.971516)(0.367846,0.948884)(0.374696,0.926642)(0.381545,0.904775)(0.388395,0.883244)(0.395245,0.862033)(0.402095,0.841124)(0.408945,0.820479)(0.415795,0.800092)(0.422645,0.779932)(0.429495,0.759987)(0.436345,0.740222)(0.443194,0.720625)(0.450044,0.701174)(0.456894,0.681847)(0.463744,0.662627)(0.470594,0.643476)(0.477444,0.624391)(0.484294,0.605332)(0.491144,0.586276)(0.497994,0.567202)(0.504843,0.548072)(0.511693,0.528854)(0.518543,0.509519)(0.525393,0.490019)(0.532243,0.470312)(0.539093,0.450342)(0.545943,0.430060)(0.552793,0.409366)(0.559643,0.388190)(0.566492,0.366394)(0.573342,0.343871)(0.580192,0.320455)(0.587042,0.295679)(0.593892,0.269462)(0.600742,0.240502)(0.607592,0.208800)(0.614442,0.171695)(0.621292,0.126701)(0.628141,0.062560)(0.634991,0.000110)(0.641841,0.000063)(0.648691,0.000049)(0.655541,0.000041)(0.662391,0.000035)(0.669241,0.000032)(0.676091,0.000029)(0.682941,0.000026)
      };
      %\legend{ {Empirical eigenvalues}, {Limiting law} };
    \end{axis}
  \end{tikzpicture}
&
  \begin{tikzpicture}[font=\footnotesize]
    \renewcommand{\axisdefaulttryminticks}{4} 
    \pgfplotsset{every major grid/.append style={densely dashed}}       
    \pgfplotsset{every axis legend/.append style={cells={anchor=west},fill=none, at={(0.98,0.98)}, anchor=north east, font=\footnotesize }}
    \begin{axis}[
      width=.35\textwidth,
      xmin=0,
      xmax=0.7,
      ymin=0,
      ymax=8,
      bar width=2.5pt,
      grid=major,
      ymajorgrids=false,
      scaled ticks=true,
      %scale ticks above={4},
      xlabel={},
      ylabel={}
      ]
      \addplot+[ybar,mark=none,draw=white,fill=BLUE!60!white,area legend] coordinates{
      (0.011377,7.154081)(0.025006,5.778296)(0.038634,4.952826)(0.052263,4.035636)(0.065891,3.668760)(0.079520,3.210165)(0.093148,3.026727)(0.106777,2.751570)(0.120406,2.476413)(0.134034,2.292975)(0.147663,2.017818)(0.161291,2.017818)(0.174920,2.109537)(0.188549,1.559223)(0.202177,1.742661)(0.215806,1.650942)(0.229434,1.559223)(0.243063,1.467504)(0.256691,1.375785)(0.270320,1.375785)(0.283949,1.192347)(0.297577,1.284066)(0.311206,1.100628)(0.324834,0.917190)(0.338463,1.192347)(0.352092,0.825471)(0.365720,1.008909)(0.379349,0.825471)(0.392977,1.008909)(0.406606,0.825471)(0.420234,0.733752)(0.433863,0.825471)(0.447492,0.642033)(0.461120,0.642033)(0.474749,0.458595)(0.488377,0.642033)(0.502006,0.550314)(0.515635,0.458595)(0.529263,0.458595)(0.542892,0.275157)(0.556520,0.458595)(0.570149,0.275157)(0.583777,0.183438)(0.597406,0.275157)(0.611035,0.091719)(0.624663,0.000000)(0.638292,0.000000)(0.651920,0.000000)(0.665549,0.000000)(0.679178,0.000000)
      };
      \addplot[densely dashed,RED,line width=1.5pt] coordinates{
      (0.004939,0.030255)(0.011931,7.196261)(0.018924,6.370512)(0.025916,5.642833)(0.032908,5.074135)(0.039900,4.625377)(0.046892,4.262261)(0.053884,3.961409)(0.060876,3.707114)(0.067868,3.488583)(0.074860,3.298139)(0.081852,3.130226)(0.088844,2.980685)(0.095836,2.846348)(0.102828,2.724766)(0.109820,2.614002)(0.116812,2.512499)(0.123804,2.419003)(0.130796,2.332481)(0.137788,2.252066)(0.144780,2.177048)(0.151772,2.106821)(0.158765,2.040872)(0.165757,1.978752)(0.172749,1.920082)(0.179741,1.864530)(0.186733,1.811812)(0.193725,1.761667)(0.200717,1.713876)(0.207709,1.668242)(0.214701,1.624590)(0.221693,1.582759)(0.228685,1.542603)(0.235677,1.504009)(0.242669,1.466850)(0.249661,1.431031)(0.256653,1.396448)(0.263645,1.363026)(0.270637,1.330686)(0.277629,1.299347)(0.284621,1.268950)(0.291613,1.239434)(0.298606,1.210740)(0.305598,1.182816)(0.312590,1.155616)(0.319582,1.129101)(0.326574,1.103220)(0.333566,1.077934)(0.340558,1.053214)(0.347550,1.029020)(0.354542,1.005315)(0.361534,0.982078)(0.368526,0.959280)(0.375518,0.936881)(0.382510,0.914870)(0.389502,0.893208)(0.396494,0.871883)(0.403486,0.850862)(0.410478,0.830123)(0.417470,0.809652)(0.424462,0.789418)(0.431454,0.769404)(0.438447,0.749587)(0.445439,0.729948)(0.452431,0.710463)(0.459423,0.691112)(0.466415,0.671879)(0.473407,0.652728)(0.480399,0.633655)(0.487391,0.614618)(0.494383,0.595603)(0.501375,0.576580)(0.508367,0.557515)(0.515359,0.538384)(0.522351,0.519152)(0.529343,0.499778)(0.536335,0.480227)(0.543327,0.460436)(0.550319,0.440359)(0.557311,0.419916)(0.564303,0.399038)(0.571295,0.377655)(0.578288,0.355673)(0.585280,0.332791)(0.592272,0.308707)(0.599264,0.283251)(0.606256,0.257320)(0.613248,0.225176)(0.620240,0.198472)(0.627232,0.164826)(0.634224,0.127548)(0.641216,0.000254)(0.648208,0.000075)(0.655200,0.000054)(0.662192,0.000043)(0.669184,0.000037)(0.676176,0.000033)(0.683168,0.000029)(0.690160,0.000027)(0.697152,0.000025)
      };
    \end{axis}
  \end{tikzpicture}
  &
  \begin{tikzpicture}[font=\footnotesize]
    \renewcommand{\axisdefaulttryminticks}{4} 
    \pgfplotsset{every major grid/.append style={densely dashed}}       
    \pgfplotsset{every axis legend/.append style={cells={anchor=west},fill=none, at={(0.98,0.98)}, anchor=north east, font=\footnotesize }}
    \begin{axis}[
      width=.35\textwidth,
      xmin=0,
      xmax=0.7,
      ymin=0,
      ymax=8,
      bar width=2.5pt,
      grid=major,
      ymajorgrids=false,
      scaled ticks=true,
      %scale ticks above={4},
      xlabel={},
      ylabel={}
      ]
      \addplot+[ybar,mark=none,draw=white,fill=BLUE!60!white,area legend] coordinates{
      (0.012003,6.282388)(0.026130,5.662997)(0.040256,4.866638)(0.054383,3.716342)(0.068510,3.539373)(0.082637,3.008467)(0.096764,3.008467)(0.110890,2.389077)(0.125017,2.566046)(0.139144,2.212108)(0.153271,2.035140)(0.167398,1.946655)(0.181524,1.858171)(0.195651,1.769687)(0.209778,1.681202)(0.223905,1.592718)(0.238032,1.504234)(0.252158,1.327265)(0.266285,1.238781)(0.280412,1.415749)(0.294539,1.238781)(0.308666,1.238781)(0.322792,1.061812)(0.336919,1.061812)(0.351046,1.061812)(0.365173,0.884843)(0.379299,0.884843)(0.393426,0.973328)(0.407553,0.796359)(0.421680,0.796359)(0.435807,0.796359)(0.449933,0.796359)(0.464060,0.619390)(0.478187,0.619390)(0.492314,0.442422)(0.506441,0.619390)(0.520567,0.530906)(0.534694,0.530906)(0.548821,0.619390)(0.562948,0.176969)(0.577075,0.530906)(0.591201,0.353937)(0.605328,0.176969)(0.619455,0.176969)(0.633582,0.176969)(0.647709,0.000000)(0.661835,0.000000)(0.675962,0.000000)(0.690089,0.000000)(0.704216,0.000000)
      };
      \addplot[densely dashed,RED,line width=1.5pt] coordinates{
      (0.004939,0.030255)(0.011931,7.196261)(0.018924,6.370512)(0.025916,5.642833)(0.032908,5.074135)(0.039900,4.625377)(0.046892,4.262261)(0.053884,3.961409)(0.060876,3.707114)(0.067868,3.488583)(0.074860,3.298139)(0.081852,3.130226)(0.088844,2.980685)(0.095836,2.846348)(0.102828,2.724766)(0.109820,2.614002)(0.116812,2.512499)(0.123804,2.419003)(0.130796,2.332481)(0.137788,2.252066)(0.144780,2.177048)(0.151772,2.106821)(0.158765,2.040872)(0.165757,1.978752)(0.172749,1.920082)(0.179741,1.864530)(0.186733,1.811812)(0.193725,1.761667)(0.200717,1.713876)(0.207709,1.668242)(0.214701,1.624590)(0.221693,1.582759)(0.228685,1.542603)(0.235677,1.504009)(0.242669,1.466850)(0.249661,1.431031)(0.256653,1.396448)(0.263645,1.363026)(0.270637,1.330686)(0.277629,1.299347)(0.284621,1.268950)(0.291613,1.239434)(0.298606,1.210740)(0.305598,1.182816)(0.312590,1.155616)(0.319582,1.129101)(0.326574,1.103220)(0.333566,1.077934)(0.340558,1.053214)(0.347550,1.029020)(0.354542,1.005315)(0.361534,0.982078)(0.368526,0.959280)(0.375518,0.936881)(0.382510,0.914870)(0.389502,0.893208)(0.396494,0.871883)(0.403486,0.850862)(0.410478,0.830123)(0.417470,0.809652)(0.424462,0.789418)(0.431454,0.769404)(0.438447,0.749587)(0.445439,0.729948)(0.452431,0.710463)(0.459423,0.691112)(0.466415,0.671879)(0.473407,0.652728)(0.480399,0.633655)(0.487391,0.614618)(0.494383,0.595603)(0.501375,0.576580)(0.508367,0.557515)(0.515359,0.538384)(0.522351,0.519152)(0.529343,0.499778)(0.536335,0.480227)(0.543327,0.460436)(0.550319,0.440359)(0.557311,0.419916)(0.564303,0.399038)(0.571295,0.377655)(0.578288,0.355673)(0.585280,0.332791)(0.592272,0.308707)(0.599264,0.283251)(0.606256,0.257320)(0.613248,0.225176)(0.620240,0.198472)(0.627232,0.164826)(0.634224,0.127548)(0.641216,0.000254)(0.648208,0.000075)(0.655200,0.000054)(0.662192,0.000043)(0.669184,0.000037)(0.676176,0.000033)(0.683168,0.000029)(0.690160,0.000027)(0.697152,0.000025)
      };
    \end{axis}
  \end{tikzpicture}
\end{tabular}
 \caption{ While proved here only in the Gaussian case, the empirical Hessian eigenvalue distribution appears rather ``stable'' beyond Gaussian distribution: with $p=800$, $n = 1\,200$, logistic model in \eqref{eq:def-logistic} with $\w_* = \zo_p$, $\w = \bmu \sim \NN(\mathbf{0}, \I_p/p)$ and  $\C = \I_p$, for Gaussian \textbf{(left)}, symmetric Bernoulli \textbf{(middle)}, and Student-t distribution with $7$ degrees of freedom \textbf{(right)}. }  
\label{fig:lsd-non-gauss}
\end{figure}

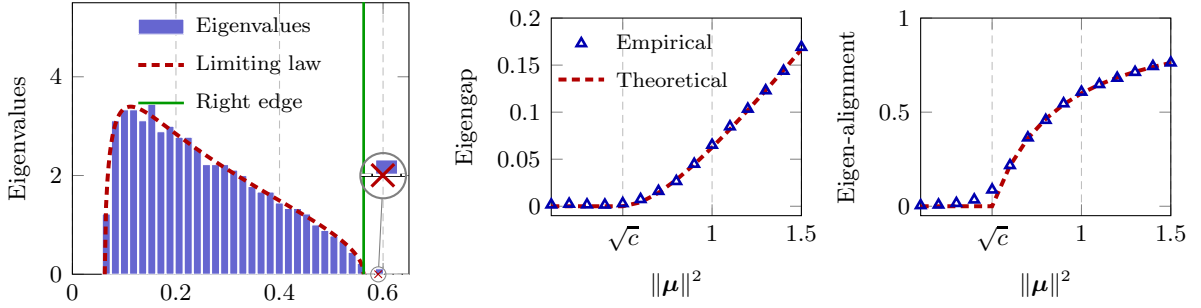
\begin{figure*}[htb]
\vskip 0.1in
  \begin{center}
  \begin{minipage}[c]{0.37\linewidth}
  \centering
  \begin{tikzpicture}[font=\footnotesize,spy using outlines]
    \renewcommand{\axisdefaulttryminticks}{4} 
    \pgfplotsset{every major grid/.append style={densely dashed}}          
    \tikzstyle{every axis y label}+=[yshift=-10pt] 
    \tikzstyle{every axis x label}+=[yshift=5pt]
    \pgfplotsset{every axis legend/.style={cells={anchor=west},fill=none,at={(.8,1.02)}, anchor=north east, font=\scriptsize}}
    \begin{axis}[
    width=1.02\linewidth,
    %height=.7\linewidth,
      xmin=0,
      ymin=0,
      xmax=0.65,
      ymax=5.5,
      bar width=3pt,
      grid=major,
      ymajorgrids=false,
      %scaled ticks=true,
      %scale ticks above={4},
      xlabel={},
      ylabel={Eigenvalues}
      ]
      \addplot+[ybar,mark=none,draw=white,fill=BLUE!60!white,area legend] coordinates{
      (0.065405,1.221631)(0.082991,3.109606)(0.100578,3.331721)(0.118165,3.331721)(0.135751,3.109606)(0.153338,3.442778)(0.170924,2.887491)(0.188511,2.998549)(0.206098,2.776434)(0.223684,2.776434)(0.241271,2.554319)(0.258858,2.221147)(0.276444,2.221147)(0.294031,2.221147)(0.311618,2.110090)(0.329204,1.999033)(0.346791,1.776918)(0.364377,1.665860)(0.381964,1.665860)(0.399551,1.443746)(0.417137,1.332688)(0.434724,1.332688)(0.452311,1.221631)(0.469897,0.999516)(0.487484,0.888459)(0.505071,0.777402)(0.522657,0.666344)(0.540244,0.444229)(0.557830,0.222115)(0.575417,0.000000)(0.593004,0.111057)(0.610590,0.000000)(0.628177,0.000000)(0.645764,0.000000)(0.663350,0.000000)
      };
      \def\cc{1/4}
      \addplot[densely dashed,samples=200,domain=1/16:9/16,RED,line width=1.5pt] {2/(pi*\cc*x)*sqrt(max(((1+sqrt(\cc))^2/4-x)*(x-(1-sqrt(\cc))^2/4),0))};
      \addplot[smooth,GREEN,line width=1pt] coordinates{ ((1+sqrt(\cc))^2/4,0)((1+sqrt(\cc))^2/4,8)};
      \addplot+[only marks,mark=x,RED] coordinates{ (0.5906,0) };
      \coordinate (spy0) at (axis cs:0.5906,0.001); \coordinate (spypoint0) at (axis cs:0.6,2);
    \begin{scope}
      \spy[black!50!white,size=.6cm,circle,connect spies,magnification=3] on (spy0) in node [fill=none] at (spypoint0);
      \end{scope}
    \legend{ {Eigenvalues}, {Limiting law}, {Right edge} };
    \end{axis}
  \end{tikzpicture}
  \end{minipage}%
  \hfill{}%
  \begin{minipage}[c]{0.3\linewidth}
  \centering
  	\begin{tikzpicture}[font=\footnotesize]
    \renewcommand{\axisdefaulttryminticks}{4} 
    \pgfplotsset{every major grid/.append style={densely dashed}}          
    \tikzstyle{every axis y label}+=[yshift=-10pt] 
    \tikzstyle{every axis x label}+=[yshift=5pt]
    \pgfplotsset{every axis legend/.style={cells={anchor=west},fill=none,at={(-.02,1.02)}, anchor=north west, font=\scriptsize}}
    \begin{axis}[
    %ybar,
    width=1.02\linewidth,
    %height=.75\linewidth,
    xmin=.1,xmax=1.5,
    ymin=-0.01,ymax=0.2,
    ylabel={ Eigengap },
    xlabel={$ \| \bmu \|^2 $},
    xtick={0,0.5,1,1.5},
    xticklabels={0,$\sqrt{c}$,1,1.5},
    grid=major,
    ymajorgrids=false,
    yticklabel style={
    /pgf/number format/fixed,
    /pgf/number format/precision=2},
	scaled y ticks=false,
    ]
    %\addplot[mark=triangle,BLUE,mark size=2pt,line width=1pt] plot coordinates{
    \addplot+[mark=triangle,only marks,BLUE,mark size=2pt,line width=1pt] plot coordinates{
    (0.10,0.001601)(0.20,0.002402)(0.30,0.001647)(0.40,0.001435)(0.50,0.003028)(0.60,0.007345)(0.70,0.015996)(0.80,0.026511)(0.90,0.044835)(1.00,0.065055)(1.10,0.084634)(1.20,0.103394)(1.30,0.122933)(1.40,0.143734)(1.50,0.169093)
    };
    \addlegendentry{ Empirical };
    \addplot[densely dashed,RED,line width=1.5pt] plot coordinates{
    (0.10,0.000000)(0.20,0.000000)(0.30,0.000000)(0.40,0.000000)(0.50,0.000000)(0.60,0.004167)(0.70,0.014286)(0.80,0.028125)(0.90,0.044444)(1.00,0.062500)(1.10,0.081818)(1.20,0.102083)(1.30,0.123077)(1.40,0.144643)(1.50,0.166667)
    };
    \addlegendentry{ Theoretical };
    \end{axis}
  \end{tikzpicture}
  \end{minipage}%
  \hfill{}%
  \begin{minipage}[c]{0.3\linewidth}
  \centering
  \begin{tikzpicture}[font=\footnotesize]
    \renewcommand{\axisdefaulttryminticks}{4} 
    \pgfplotsset{every major grid/.append style={densely dashed}}          
    \tikzstyle{every axis y label}+=[yshift=-10pt] 
    \tikzstyle{every axis x label}+=[yshift=5pt]
    \pgfplotsset{every axis legend/.style={cells={anchor=west},fill=none,at={(-.02,1.02)}, anchor=north west, font=\scriptsize}}
    \begin{axis}[
    %ybar,
    width=1.02\linewidth,
    %height=.75\linewidth,
    xmin=.1,xmax=1.5,
    ymin=-0.05,ymax=1,
    ylabel={ Eigen-alignment },
    xlabel={$ \| \bmu \|^2$},
    xtick={0,0.5,1,1.5},
    xticklabels={0,$\sqrt{c}$,1,1.5},
    grid=major,
    ymajorgrids=false,
    scaled ticks=true,
    ]
    %\addplot[mark=triangle,BLUE,mark size=2pt,line width=1pt] plot coordinates{
    \addplot+[mark=triangle,only marks,BLUE,mark size=2pt,line width=1pt] plot coordinates{
    (0.10,0.004287)(0.20,0.007537)(0.30,0.015833)(0.40,0.034488)(0.50,0.087469)(0.60,0.217211)(0.70,0.364159)(0.80,0.457585)(0.90,0.545469)(1.00,0.607250)(1.10,0.647698)(1.20,0.681405)(1.30,0.712360)(1.40,0.743657)(1.50,0.762516)
    };
    %\addlegendentry{ Empirical };
    \addplot[densely dashed,RED,line width=1.5pt] plot coordinates{
    (0.10,0.000000)(0.20,0.000000)(0.30,0.000000)(0.40,0.000000)(0.50,0.000000)(0.60,0.215686)(0.70,0.360902)(0.80,0.464286)(0.90,0.541063)(1.00,0.600000)(1.10,0.646465)(1.20,0.683908)(1.30,0.714640)(1.40,0.740260)(1.50,0.761905)
    };
    %\addlegendentry{ Theoretical };
    \end{axis}
  \end{tikzpicture}
  \end{minipage}
  \caption{ Spike (on right-hand side of bulk) due to data signal in Corollary~\ref{coro:special-logistic-w=0}: theory versus practice of \textbf{(left)} Hessian eigenspectrum with $\| \bmu \|^2 = 0.8$, \textbf{(middle)} eigengap $\dist(\lambda_{\bmu}, \supp(\mu))$, and \textbf{(right)} top eigenvector alignment $\alpha$ in \eqref{eq:mu-phase-transition-align}, as a function of the signal strength $ \| \bmu \|^2$, on logistic model with logistic loss, for $\bmu \propto [-\mathbf{1}_{p/2},~\mathbf{1}_{p/2}]$, $\w = \w_* = \mathbf{0}$, $\C = \I_p$, $p=512$ and $n = 2\,048$. Results averaged over $50$ runs. }
  \label{fig:mu-spikes}
  \end{center}
  \vskip -0.1in
\end{figure*}

\begin{figure*}[htb]
\vskip 0.1in
\begin{center}
  \begin{minipage}[c]{0.37\linewidth}
  \centering
  \begin{tikzpicture}[font=\footnotesize,spy using outlines]
    \renewcommand{\axisdefaulttryminticks}{4} 
    \pgfplotsset{every major grid/.append style={densely dashed}}          
    \tikzstyle{every axis y label}+=[yshift=-10pt] 
    \tikzstyle{every axis x label}+=[yshift=5pt]
    \pgfplotsset{every axis legend/.style={cells={anchor=west},fill=none,at={(1.02,1.02)}, anchor=north east, font=\scriptsize}}
    \begin{axis}[
    width=1.02\linewidth,
    %height=.7\linewidth,
    xmin=0,
      ymin=0,
      xmax=0.35,
      ymax=10,
      bar width=2pt,
      grid=major,
      ymajorgrids=false,
      %scaled ticks=true,
      %scale ticks above={4},
      xlabel={},
      ylabel={Eigenvalues}
      ]
      \addplot+[ybar,mark=none,draw=white,fill=BLUE!60!white,area legend] coordinates{
      (0.050684,0.163217)(0.058342,0.000000)(0.066001,2.121820)(0.073659,5.059725)(0.081318,6.528678)(0.088976,6.691895)(0.096635,7.018329)(0.104293,6.855112)(0.111952,6.691895)(0.119610,6.855112)(0.127269,6.528678)(0.134927,6.202244)(0.142586,6.202244)(0.150244,5.712593)(0.157903,5.712593)(0.165561,5.386159)(0.173220,4.896508)(0.180878,4.896508)(0.188537,4.570075)(0.196196,4.570075)(0.203854,3.917207)(0.211513,3.917207)(0.219171,3.590773)(0.226830,3.101122)(0.234488,3.101122)(0.242147,2.937905)(0.249805,2.285037)(0.257464,1.958603)(0.265122,1.632169)(0.272781,1.142519)(0.280439,0.326434)(0.288098,0.000000)(0.295756,0.000000)(0.303415,0.000000)(0.311073,0.000000)
      };
      \addplot[densely dashed,RED,line width=1.5pt] coordinates{
      (0.057375,0.001967)(0.060005,0.002880)(0.062636,0.005778)(0.065266,2.047917)(0.067896,3.680700)(0.070526,4.595776)(0.073156,5.223225)(0.075787,5.681680)(0.078417,6.026003)(0.081047,6.287711)(0.083677,6.486965)(0.086307,6.637584)(0.088938,6.749508)(0.091568,6.830139)(0.094198,6.885142)(0.096828,6.918943)(0.099458,6.935058)(0.102089,6.936316)(0.104719,6.925026)(0.107349,6.903085)(0.109979,6.872067)(0.112609,6.833287)(0.115240,6.787851)(0.117870,6.736694)(0.120500,6.680609)(0.123130,6.620274)(0.125760,6.556266)(0.128391,6.489085)(0.131021,6.419157)(0.133651,6.346851)(0.136281,6.272486)(0.138911,6.196339)(0.141542,6.118648)(0.144172,6.039620)(0.146802,5.959435)(0.149432,5.878246)(0.152062,5.796189)(0.154693,5.713378)(0.157323,5.629913)(0.159953,5.545879)(0.162583,5.461346)(0.165213,5.376376)(0.167844,5.291019)(0.170474,5.205316)(0.173104,5.119299)(0.175734,5.032994)(0.178364,4.946419)(0.180995,4.859585)(0.183625,4.772498)(0.186255,4.685159)(0.188885,4.597561)(0.191515,4.509693)(0.194146,4.421540)(0.196776,4.333082)(0.199406,4.244292)(0.202036,4.155137)(0.204666,4.065584)(0.207297,3.975588)(0.209927,3.885102)(0.212557,3.794072)(0.215187,3.702434)(0.217817,3.610122)(0.220447,3.517055)(0.223078,3.423147)(0.225708,3.328298)(0.228338,3.232396)(0.230968,3.135315)(0.233598,3.036909)(0.236229,2.937013)(0.238859,2.835434)(0.241489,2.731950)(0.244119,2.626298)(0.246749,2.518167)(0.249380,2.407185)(0.252010,2.292896)(0.254640,2.174737)(0.257270,2.051985)(0.259900,1.923733)(0.262531,1.788752)(0.265161,1.645238)(0.267791,1.491001)(0.270421,1.320848)(0.273051,1.132808)(0.275682,0.910229)(0.278312,0.606618)(0.280942,0.002874)(0.283572,0.001035)(0.286202,0.000723)(0.288833,0.000573)(0.291463,0.000481)(0.294093,0.000417)(0.296723,0.000369)(0.299353,0.000332)(0.301984,0.000302)(0.304614,0.000277)(0.307244,0.000256)
      };
      \addplot[smooth,GREEN,line width=1pt] coordinates{ (0.062636,0)(0.062636,10)};
      \addplot+[only marks,mark=x,RED] coordinates{ (0.0501,0) };
      \coordinate (spy0) at (axis cs:0.0501,0.001); \coordinate (spypoint0) at (axis cs:0.03,4);
    \begin{scope}
      \spy[black!50!white,size=.6cm,circle,connect spies,magnification=3] on (spy0) in node [fill=none] at (spypoint0);
      \end{scope}
    \legend{ {Eigenvalues}, {Limiting law}, {Left edge} };
    \end{axis}
  \end{tikzpicture}
  \end{minipage}%
  \hfill{}%
  \begin{minipage}[c]{0.3\linewidth}
  \centering
  	\begin{tikzpicture}[font=\footnotesize]
    \renewcommand{\axisdefaulttryminticks}{4} 
    \pgfplotsset{every major grid/.append style={densely dashed}}          
    \tikzstyle{every axis y label}+=[yshift=-10pt] 
    \tikzstyle{every axis x label}+=[yshift=5pt]
    \pgfplotsset{every axis legend/.style={cells={anchor=east},fill=none,at={(1.02,1.02)}, anchor=north east, font=\scriptsize}}
    \begin{axis}[
    %ybar,
    width=1.02\linewidth,
    %height=.75\linewidth,
    xmin=.1,xmax=8,
    ymin=0,ymax=0.035,
    ylabel={ Eigengap },
    xlabel={$\| \w \| $},
    grid=major,
    ymajorgrids=false,
    yticklabel style={
    /pgf/number format/fixed,
    /pgf/number format/precision=2},
	scaled y ticks=false,
    ]
    %\addplot[mark=triangle,BLUE,mark size=2pt,line width=1pt] plot coordinates{
    \addplot+[mark=triangle,only marks,BLUE,mark size=2pt,line width=1pt] plot coordinates{
    (0.10,0.001152)(0.37,0.001226)(0.64,0.001076)(0.92,0.000961)(1.19,0.000947)(1.46,0.002860)(1.73,0.008010)(2.01,0.012796)(2.28,0.015863)(2.55,0.017782)(2.82,0.019014)(3.10,0.019470)(3.37,0.019547)(3.64,0.019091)(3.91,0.018495)(4.19,0.017718)(4.46,0.017029)(4.73,0.016258)(5.00,0.015375)(5.28,0.014538)(5.55,0.013768)(5.82,0.013081)(6.09,0.012317)(6.37,0.011527)(6.64,0.010926)(6.91,0.010408)(7.18,0.009847)(7.46,0.009283)(7.73,0.008791)(8.00,0.008277)
    };
    \addlegendentry{ Empirical };
    \addplot[densely dashed,RED,line width=1.5pt] plot coordinates{
    (0.10,0.0000)(0.37,0.0000)(0.64,0.0000)(0.92,0.0000)(1.19,0.0000)(1.46,0.0020)(1.73,0.0072)(2.01,0.0116)(2.28,0.0147)(2.55,0.0175)(2.82,0.0186)(3.10,0.0190)(3.37,0.0190)(3.64,0.0187)(3.91,0.0181)(4.19,0.0174)(4.46,0.0166)(4.73,0.0158)(5.00,0.0150)(5.28,0.0142)(5.55,0.0134)(5.82,0.0127)(6.09,0.0119)(6.37,0.0112)(6.64,0.0105)(6.91,0.0099)(7.18,0.0092)(7.46,0.0087)(7.73,0.0081)(8.00,0.0076)
    };
    \addlegendentry{ Theoretical };
    \end{axis}
  \end{tikzpicture}
  \end{minipage}%
  \hfill{}%
  \begin{minipage}[c]{0.3\linewidth}
  \centering
  \begin{tikzpicture}[font=\footnotesize]
    \renewcommand{\axisdefaulttryminticks}{4} 
    \pgfplotsset{every major grid/.append style={densely dashed}}          
    \tikzstyle{every axis y label}+=[yshift=-10pt] 
    \tikzstyle{every axis x label}+=[yshift=5pt]
    \pgfplotsset{every axis legend/.style={cells={anchor=west},fill=none,at={(-.02,1.02)}, anchor=north west, font=\scriptsize}}
    \begin{axis}[
    %ybar,
    width=1.02\linewidth,
    %height=.75\linewidth,
    xmin=.1,xmax=8,
    ymin=0,ymax=1,
    ylabel={ Eigen-alignment },
    xlabel={$\| \w \|$},
    grid=major,
    ymajorgrids=false,
    scaled ticks=true,
    ]
    %\addplot[mark=triangle,BLUE,mark size=2pt,line width=1pt] plot coordinates{
    \addplot+[mark=triangle,only marks,BLUE,mark size=2pt,line width=1pt] plot coordinates{
    (0.10,0.001344)(0.37,0.001143)(0.64,0.002564)(0.92,0.007487)(1.19,0.055744)(1.46,0.534765)(1.73,0.759985)(2.01,0.852001)(2.28,0.898468)(2.55,0.923821)(2.82,0.939363)(3.10,0.949487)(3.37,0.958220)(3.64,0.962837)(3.91,0.967790)(4.19,0.971214)(4.46,0.973757)(4.73,0.975565)(5.00,0.977312)(5.28,0.978893)(5.55,0.979770)(5.82,0.980935)(6.09,0.981913)(6.37,0.982530)(6.64,0.983232)(6.91,0.983864)(7.18,0.984328)(7.46,0.985102)(7.73,0.985395)(8.00,0.985530)
    };
    %\addlegendentry{ Empirical };
    \addplot[densely dashed,RED,line width=1.5pt] plot coordinates{
    (0.10,0.0000)(0.37,0.0000)(0.64,0.0000)(0.92,0.0000)(1.19,0.000)(1.46,0.5376)(1.73,0.7677)(2.01,0.8540)(2.28,0.8981)(2.55,0.9250)(2.82,0.9407)(3.10,0.9509)(3.37,0.9581)(3.64,0.9653)(3.91,0.9690)(4.19,0.9711)(4.46,0.9744)(4.73,0.9772)(5.00,0.9779)(5.28,0.9808)(5.55,0.9822)(5.82,0.9843)(6.09,0.9828)(6.37,0.9841)(6.64,0.9841)(6.91,0.9863)(7.18,0.9864)(7.46,0.9883)(7.73,0.9871)(8.00,0.9872)
    };
    %\addlegendentry{ Theoretical };
    \end{axis}
  \end{tikzpicture}
  \end{minipage}
  \caption{ Spike (on left-hand side of bulk) due to response model in Corollary~\ref{coro:special-logistic-mu=0} \emph{in the absence of data signal}: \textbf{(left)} Hessian eigenspectrum for $\| \w \| = 2$, with an emphasis on the \emph{left} isolated eigenvalue $\hat \lambda_\w$, \textbf{(middle)} eigengap $\dist(\lambda_{\w}, \supp(\mu))$, and \textbf{(right)} dominant eigenvector alignment, as a function $ \| \w \|$ with $\w \propto [-\mathbf{1}_{p/2},~\mathbf{1}_{p/2}]$, $\w_* = \bmu = \zo$, $\C = \I_p$, $p=800$ and $n = 8\,000$. Results averaged over $50$ runs. }
  \label{fig:logistic-spike-w}
  \end{center}
  \vskip -0.1in
\end{figure*}

\subsection{Isolated eigenvalues: Spike due to data signal}
\label{subsec:spike-mu}

To study the spike due to data ``signal'' $\bmu$, as well as its phase transition behavior, we focus here on the case $\w_* = \w = \mathbf{0}$. This, in the case of logistic model \eqref{eq:def-logistic}, gives rise to a much simpler form of limiting eigenspectrum (per Theorem~\ref{theo:lsd}) and possible isolated eigenpairs (per Theorem~\ref{theo:spike}~and~\ref{theo:eigenvector}), as summarized in the following corollary, the proof of which is given in Appendix~\ref{subsec:proof-special-logistic-w=0}.

\begin{Corollary}[Spike due to data signal: logistic model]\label{coro:special-logistic-w=0}
Consider the logistic model in \eqref{eq:def-logistic} with logistic loss, for $\w = \w_* = \mathbf{0}$ and $\C = \I_p$, we have $g = 1/4$ and the limiting Hessian eigenvalue distribution is the Mar\u{c}enko-Pastur law, but rescaled by a factor of $1/4$. 
Moreover, it follows from Theorem~\ref{theo:spike}~and~\ref{theo:eigenvector} that there is \emph{at most one} isolated eigenpair $(\hat \lambda_{\bmu}, \hat \uu_{\bmu})$ of $\H$ 
and it satisfies
\begin{equation}%\label{eq:mu-phase-transition}
  \hat \lambda_{\bmu}~\asto~\left\{
        \begin{array}{ll}
            \lambda_{\bmu} = \frac14 ( 1 + \rho + c \cdot \frac{ \rho + 1}{\rho} ) &\rho >\sqrt{c},  \\
            \frac14 (1+\sqrt{c})^2 &\rho \leq\sqrt{c};
        \end{array}
        \right.
\end{equation}
and
\begin{equation}\label{eq:mu-phase-transition-align}
  \frac{ |\bmu^\T \hat \uu_{\bmu}|^2}{\| \bmu \|^2}~\asto~\left\{
        \begin{array}{ll}
            \alpha =\frac{\rho^2 - c}{\rho^2 + c \rho} &\rho >\sqrt{c},  \\
            0 &\rho \leq\sqrt{c};
        \end{array}
        \right. 
\end{equation}
with the signal strength $\rho = \lim \| \bmu \|^2$ and $c = \lim p/n$.
\end{Corollary}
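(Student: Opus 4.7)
The first step is to compute the Hessian weight $g = \ell''(y,h)$ at the prescribed parameter $\w = \zo$. For the logistic loss one has $\ell''(y,h) = e^{yh}/(1+e^{yh})^2$; evaluated at $h = \w^\T\x = 0$ this gives $g \equiv 1/4$ \emph{deterministically}, independently of the label $y$. Consequently $\H = \frac{1}{4n}\X\X^\T$, i.e.\@ $1/4$ times a classical spiked sample covariance matrix built from $\x_i = \bmu + \z_i$ with $\z_i \sim \NN(\zo,\I_p)$. The whole task then reduces to specializing Theorems~\ref{theo:lsd}--\ref{theo:eigenvector} to this degenerate setting and checking that the result matches the familiar BBP phase transition.

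For the bulk law, I would substitute $\nu = \delta_{1/4}$ and $\tilde\nu = \delta_1$ into \eqref{eq:lsd-measure}, obtaining the scalar system $m(z) = (\tfrac{1}{4+\delta(z)} - z)^{-1}$ with $\delta(z) = c\,m(z)$. A direct check shows that $m(z) = 4\,m_{\mathrm{MP}}(4z)$, with $m_{\mathrm{MP}}$ the Stieltjes transform of the standard Mar\u{c}enko--Pastur law of ratio $c$, solves this pair; by the Stieltjes inversion formula, $\mu$ is therefore the Mar\u{c}enko--Pastur law rescaled by $1/4$, whose right edge is $(1+\sqrt c)^2/4$.

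Next I would invoke Theorem~\ref{theo:spike}. Since $\w_* = \w = \zo$, the matrix $\U = \C^{1/2}[\w_*,\w]$ vanishes, so the terms $\U^+\z$ and $(\U^\T\U)^+$ in \eqref{eq:def-V-Lambda} drop out and $\bLambda(z)$ collapses to the $3\times 3$ matrix with single nonzero entry $\bLambda(z)_{11} = \EE[g/(1+g\delta(z))] = 1/(4+\delta(z))$. Likewise $\V = [\bmu,\zo,\zo]$, so $\V^\T\bar\Q_b(z)\V$ is nonzero only in the $(1,1)$ slot, with value $\bmu^\T\bar\Q_b(z)\bmu$. The determinantal equation $\det \G(\lambda)=0$ therefore reduces to the scalar relation $1 + \bmu^\T\bar\Q_b(\lambda)\bmu /(4+\delta(\lambda)) = 0$. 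Using $\bar\Q_b(\lambda) = (\tfrac{1}{4+\delta(\lambda)} - \lambda)^{-1}\I_p$ with $\C = \I_p$ and $\delta = cm$, a short algebraic manipulation yields $\lambda_{\bmu}(4+\delta(\lambda_{\bmu})) = 1+\rho$ and ultimately $\lambda_{\bmu} = \tfrac{1}{4}(1+\rho)(1+c/\rho)$. Comparing with the right edge gives $\lambda_{\bmu} - (1+\sqrt c)^2/4 = (\rho-\sqrt c)^2/(4\rho)$, so $\lambda_{\bmu}$ is genuinely isolated from the support exactly when $\rho > \sqrt c$, matching the stated threshold; below it, Theorem~\ref{theo:spike} produces no admissible solution outside $\supp(\mu)$, so the spike has been absorbed into the bulk edge $(1+\sqrt c)^2/4$.

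Finally, for the eigenvector alignment I would apply Theorem~\ref{theo:eigenvector}. The block structure identified above forces both the left and the right null vectors of $\G(\lambda_{\bmu})$ to coincide with the first canonical basis vector of $\RR^3$, so $\Xi(\lambda_{\bmu})$ is itself supported on the $(1,1)$ entry with scalar value $1/\phi'(\lambda_{\bmu})$, where $\phi(z) \equiv 1 + \bmu^\T\bar\Q_b(z)\bmu/(4+\delta(z))$. Extracting the $(1,1)$ block of $\V^\T\hat\uu_{\bmu}\hat\uu_{\bmu}^\T\V$ then yields $|\bmu^\T\hat\uu_{\bmu}|^2 \asto -\bmu^\T\bar\Q_b(\lambda_{\bmu})\bmu /\phi'(\lambda_{\bmu})$. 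The main obstacle I anticipate is the implicit differentiation of $\delta(z)$ through its self-consistent equation $\delta = c m$; once that is handled, the identity $\lambda_{\bmu}(4+\delta(\lambda_{\bmu})) = 1+\rho$ together with routine algebra reduces the right-hand side to $(\rho^2-c)/(\rho^2+c\rho)$, confirming the classical BBP alignment formula for the rescaled spiked Wishart ensemble.
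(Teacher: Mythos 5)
Your approach is essentially the paper's: compute $g=1/4$ deterministically, specialize Theorem~\ref{theo:lsd} to obtain the rescaled Mar\u{c}enko--Pastur law, observe that $\U=\zo$ collapses $\G(z)$ to the scalar $1+\|\bmu\|^2 m(z)/(4+c\,m(z))$, solve the determinant equation of Theorem~\ref{theo:spike}, and apply Theorem~\ref{theo:eigenvector} for the alignment. Your intermediate identities $\lambda_{\bmu}(4+\delta(\lambda_{\bmu}))=1+\rho$, $\lambda_{\bmu}=\tfrac14(1+\rho)(\rho+c)/\rho$ and the final alignment $(\rho^2-c)/(\rho^2+c\rho)$ all check out against the paper's formulas.

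The one place where the reasoning has a genuine gap is the phase-transition threshold. The identity $\lambda_{\bmu}-(1+\sqrt c)^2/4=(\rho-\sqrt c)^2/(4\rho)$ is correct but \emph{nonnegative for every} $\rho>0$, so it places the formal solution at or above the bulk edge on both sides of the threshold; it cannot, by itself, distinguish $\rho>\sqrt c$ from $\rho<\sqrt c$, and ``genuinely isolated exactly when $\rho>\sqrt c$'' does not follow from it. What actually enforces the threshold is a branch-validity constraint on the Stieltjes transform: the determinant equation demands $m(\lambda_{\bmu})=-4/(c+\rho)$, but the genuine Stieltjes transform of the rescaled Mar\u{c}enko--Pastur law, restricted to $\bigl((1+\sqrt c)^2/4,\infty\bigr)$, is a strictly increasing bijection onto the interval $\bigl(-4/(\sqrt c+c),\,0\bigr)$. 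The prescribed value $-4/(c+\rho)$ lies in this interval if and only if $\rho>\sqrt c$; for $\rho\le\sqrt c$ the equation has no solution on the admissible branch outside the support, so no isolated eigenvalue exists and the largest eigenvalue sticks to the edge. You assert this conclusion (``Theorem~\ref{theo:spike} produces no admissible solution''), but you do not derive it, and the gap identity you cite in its place actually points the wrong way. The paper's own proof invokes the constraint $zm(z)<0$ as shorthand for this branch selection; that step deserves to be made explicit rather than inferred from the edge-distance computation.
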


The isolated eigenvalue-eigenvector behavior described in Corollary~\ref{coro:special-logistic-w=0} follows the classical phase transition \cite{baik2006eigenvalues,baik2005phase,paul2007asymptotics}.
In particular, it exhibits the following properties: (i) the isolated eigenvalue always appears on the right-hand side of the main (Mar\u{c}enko-Pastur) bulk; and (ii) the eigenvalue location and associated eigenvector alignment is ``monotonic'' with respect to the signal strength $\| \bmu \|^2$.
By ``monotonic,'' we mean that, for a fixed dimension ratio $c$, the largest Hessian eigenvalue is bound to become `isolated'' once $\| \bmu \|^2$ exceeds $\sqrt c$, and also that its value, as well as the eigenvector alignment, increase monotonically as $\| \bmu \|^2$ grows. 
This behavior is confirmed in Figure~\ref{fig:mu-spikes}. 

As we shall see below in Section~\ref{subsec:spike-w}, this behavior is \emph{not} exhibited for a spike due to the model parameter $\w$.

%Similar behavior can be established beyond the logistic regression example in Corollary~\ref{coro:special-logistic-w=0}. In general, for $\w_* = \w = \mathbf{0}$, the key random variable $g = \partial^2 \ell(y, h)/\partial h^2$ becomes deterministic and the limiting Hessian spectrum given in Theorem~\ref{theo:lsd} is always a rescaled Mar\u{c}enko-Pastur law for $\C = \I_p$, with the scaling factor depending on the loss function $\ell$. In this case, the only possible spike is due to the data signal $\bmu$ which admits the classical phase transition as in Corollary~\ref{coro:special-logistic-w=0} and Figure~\ref{fig:mu-spikes}, up to a scaling factor.

\subsection{Isolated eigenvalues: Spike due to model structure}
\label{subsec:spike-w}

To study the spike due to the underlying model structure (i.e., $\w_*$ and $\w$), as well as its phase transition behavior, we position ourselves in the situation where $\bmu = \mathbf{0}$, that is, in the absence of any data signal, and we consider again the logistic model. This leads to the following corollary, the proof of which is given in Appendix~\ref{subsec:proof-special-logistic-mu=0}.

\begin{Corollary}[Spike due to model: logistic model]\label{coro:special-logistic-mu=0}
Consider the logistic model in \eqref{eq:def-logistic} with logistic loss, $\bmu = \mathbf{0}$ and $\C = \I_p$, then the Stieltjes transform $m(z)$ satisfies $m(z ) = 1/( \EE[ f(r,z) ] - z )$ for $f(t,z) = 1/( c m(z) + 2 + e^{-t} + e^t )$ and $r \sim \mathcal N(0, \| \w \|^2)$ that \emph{depends} on $\w$ but \emph{not} on $\w_*$. Moreover, there is \emph{at most one} isolated eigenvalue $\hat \lambda_\w$ of $\H$ that is due to $\w$ and satisfies $\hat \lambda_\w - \lambda_\w~\asto~0$ with $\lambda_\w$ solution to $0 = \det \G(\lambda_\w) = 1 + m(\lambda_\w) \frac{ \EE [f(r,\lambda_\w) (r^2 - \| \w\|^2)] }{\| \w \|^2}$.
\end{Corollary}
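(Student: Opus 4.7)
To establish the first claim, I would specialize Theorem~\ref{theo:lsd} to $\bmu = \zo$ and $\C = \I_p$. A one-line computation yields $\ell''(y,h) = 1/(2 + e^h + e^{-h})$ for either $y \in \{-1,1\}$, so the random variable $g$ of \eqref{eq:def-g} depends only on $r := \w^\T\x \sim \NN(0, \|\w\|^2)$ and is independent of $\w_*$. Plugging $\C = \I_p$ into \eqref{eq:def-m(z)} collapses $\bar\Q_b(z)$ to the scalar multiple $m(z)\I_p$ and forces $\delta(z) = c\,m(z)$; rewriting $g/(1+g\delta(z)) = 1/(1/g + cm(z)) = f(r, z)$ then gives the stated fixed-point equation for $m$.

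For the isolated eigenvalue I would apply Theorem~\ref{theo:spike} and exploit two simplifications. First, since $\bmu = \zo$, the first column (and, by symmetry, the first row) of $\V = [\bmu,\,\C\w_*,\,\C\w]$ vanishes, so the first row and column of the $3\times 3$ matrix $\V^\T \bar\Q_b(z)\V = m(z)\,\V^\T\V$ are zero. Second, writing $\U = [\w_*,\w]$ and $\xi := \w_*^\T\z$, the evenness of $g$ in $r$ together with the Gaussian conditional mean $\EE[\xi \mid r] = (\w_*^\T\w/\|\w\|^2)\,r$ imply $\EE[\tilde g\,\U^+\z] = \zo$ with $\tilde g := g/(1+g\delta(z))$, so the first row and column of $\bLambda(z)$ outside the $(1,1)$ entry also vanish. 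Hence $\G(z)$ is block diagonal and $\det \G(z) = \det\bigl(\I_2 + m(z)\,\Lambda_{22}(z)\,\U^\T\U\bigr)$, where $\Lambda_{22}$ is the lower-right $2\times 2$ block of $\bLambda$.

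The remaining step is a direct $2\times 2$ determinant computation. Conditioning on $r$ and decomposing $\xi = ar + \xi^\perp$ with $a = \w_*^\T\w/\|\w\|^2$ and $\xi^\perp \perp r$, I would express $M_0 := \EE[\tilde g\,(\U^\T\z)(\U^\T\z)^\T]$ in closed form in terms of $A := \EE[f(r,z)\,r^2]$, $B := \EE[f(r,z)]$, and the Gram structure of $\U$. Writing $\Lambda_{22} = (\U^\T\U)^{-1}(M_0 - B\,\U^\T\U)(\U^\T\U)^{-1}$ and invoking the identity $mB = 1 + mz$ from the first paragraph, one reduces $\I_2 + m\,\Lambda_{22}\,\U^\T\U$ to $m\bigl((\U^\T\U)^{-1}M_0 - z\I_2\bigr)$, whose determinant factors cleanly as $m^2(B-z)(A - z\|\w\|^2)/\|\w\|^2$. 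Substituting $m(B-z) = 1$ once more gives $\det\G(z) = 1 + m(z)\,\EE[f(r,z)(r^2 - \|\w\|^2)]/\|\w\|^2$, matching the stated formula. For uniqueness, note that the spurious factor $B(z) - z$ equals $1/m(z)$, which cannot vanish for $z \in \RR\setminus\supp(\mu)$ (where $m$ remains finite), so only $A(z) - z\|\w\|^2 = 0$ can contribute; monotonicity of this real-analytic function on each connected component of $\RR\setminus\supp(\mu)$ yields at most one genuine isolated eigenvalue.

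The main technical obstacle will be the algebraic cancellation in the third paragraph: I must verify that the $2\times 2$ determinant factors so neatly that, after invoking the Stieltjes fixed-point identity, a single scalar equation matching the corollary survives. A secondary subtlety is the rank-deficient case (e.g.\ $\w_* = \zo$ or $\w_* \parallel \w$), where $\U^\T\U$ is singular and $(\U^\T\U)^+$ must be read as the Moore--Penrose pseudoinverse; however, since $A$, $B$, and $f$ do not depend on $\w_*$, the same scalar equation persists by continuity.
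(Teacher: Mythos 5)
Your proof is correct and follows essentially the same path as the paper: specialize Theorem~\ref{theo:lsd} to $\C=\I_p$ to collapse $\bar\Q_b(z)$ to $m(z)\I_p$ and $\delta = cm$, observe that $\bmu=\zo$ kills the first row/column of $\G(z)$ so the $3\times 3$ determinant reduces to the $2\times2$ block, recognize the rank-one structure of $\bLambda_{22}(z)\,\U^\T\U$ coming from the conditional moment of $\z\z^\T$ given $r=\w^\T\z$, and apply the Sylvester/matrix-determinant lemma. The one genuine addition you make is the factorization $\det\G_{22}(z) = m^2(B-z)(A - z\|\w\|^2)/\|\w\|^2$ via the fixed-point identity $m(B-z)=1$: the paper instead computes $\EE\bigl[\tfrac{g\,\z\z^\T}{1+g\delta}\bigr] = B\,\I_p + \EE[f(r,z)(r^2-\|\w\|^2)]\tfrac{\w\w^\T}{\|\w\|^4}$ and goes directly to the scalar $1 + m\,\EE[f(r,z)(r^2-\|\w\|^2)]/\|\w\|^2$. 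Your factorization is equivalent (expand and substitute $mB=1+mz$) but makes the ``no spike due to $\w_*$'' observation and the single-factor character of the spike equation more transparent. Your final monotonicity argument for uniqueness is plausible but not fully justified as stated; note, however, that the paper's own proof does not argue uniqueness either---it is asserted in the corollary statement and the derivation only exhibits the scalar determinant equation---so this is not a defect specific to your write-up.
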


The situation here (for a spike due to model) is more subtle than for the spike due to data signal (discussed in Section~\ref{subsec:spike-mu}): as $\w$ changes (e.g., as the ``energy'' $\| \w \|$ grows), both the Hessian (limiting) eigenvalue distribution and the possible spike location are impacted.
Figure~\ref{fig:logistic-spike-w} illustrates the behavior of the spike due to $\w$ in the setting of Corollary~\ref{coro:special-logistic-mu=0}. 
First, note that, different from the case of spike due to data signal $\bmu$, the spike in Figure~\ref{fig:logistic-spike-w}-(left) appears on the \emph{left-hand side} of the main bulk.
In particular, this means that the Hessian may admit an eigenvalue that is \emph{significantly smaller} than all the other eigenvalues. %\footnote{Depending on the response model and the loss function, the spike due to model may also appear on the right-hand side of the bulk or even establish a left-to-right transition. The associated eigenvector can also behave differently from Figure~\ref{fig:mu-spikes}-(right)~and~Figure~\ref{fig:logistic-spike-w}-(right) and establish a non-monotonic behavior as a function of $\| \w \|$ or $\| \w_*\|$.} 
Second, note from Figure~\ref{fig:logistic-spike-w}-(middle) that, different from the spike due to $\bmu$, the spike due to $\w$ exhibits here a ``non-monotonic'' behavior in the sense that, it is absent for small values of $\| \w\|$ (as for small $\| \bmu \|$ in Figure~\ref{fig:mu-spikes}-middle) and it becomes ``isolated'' as $\| \w\|$ increases, but then again gets closer to the main bulk, as the norm $\| \w\|$ continues to increase, resulting a unimodal eigengap. 

Finally, it is perhaps even more surprising to observe in Figure~\ref{fig:logistic-spike-w}-(right) that, the alignment between the associated isolated eigenvector and the parameter $\w$ is, unlike the eigengap in Figure~\ref{fig:logistic-spike-w}-(middle), \emph{monotonically increasing} as $\| \w \|$ grows large.
This is as in the case of spike due to data signal $\bmu$, illustrated in Figure~\ref{fig:mu-spikes}-(right). This particularly suggests that, in the case of spike due to model, a \emph{smaller} eigengap may not always imply \emph{less} statistical ``information'' contained in the associated eigenvector, which is somehow contrary to the conventional eigengap heuristic \cite{von2007tutorial,joseph2016impact}.

\medskip

We already see that there are spikes due to model on the \emph{left-hand side} of the main bulk (see, e.g., Figure~\ref{fig:logistic-spike-w}-left). One may wonder if it is always the case that the ``model spikes'' are smaller than the main bulk of eigenvalues.  
It turns out that this is \emph{not} true, and these spikes may appear on the right-hand side of the bulk. 

In Figure~\ref{fig:phase-spike-w}, we evaluate the noiseless phase retrieval model $y = (\w_*^\T \x)^2$ with the square loss $\ell(y, h) = (y - h^2)^2/4$. As already mentioned in Remark~\ref{rem:limit-support}, the Hessian eigenvalue distribution of phase retrieval model has unbounded support, due to the unboundedness of the support of the chi-square distribution. As a result, some preprocessing function must be applied for a spike to be observed. Here in Figure~\ref{fig:phase-spike-w} we particularly focus on the preprocessing function
\begin{equation}\label{eq:optimla-f-phase}
	f(t) = \frac{\max(t,0) - 1}{ \max(t,0) +\sqrt{2/c} -1 }
\end{equation}
proposed in \cite{mondelli2018fundamental} in the case of $\w \propto \w_*$, $\bmu = \zo$ and $\C = \I_p$. 
Here, we observe a spike on the \emph{right-hand side} of the main bulk, the ``eigengap'' of which follows again a non-monotonic behavior as a function of the norm $\| \w_* \|$. Perhaps more surprisingly, in contrast to Figure~\ref{fig:logistic-spike-w}-(right) where the eigenvector alignment is monotonically increasing, here we see in Figure~\ref{fig:phase-spike-w}-(right) that the eigenvector alignment follows a similar \emph{non-monotonic} trend as the eigengap.

\begin{figure*}[thb]
  \centering
  \begin{minipage}[c]{0.37\linewidth}
  \centering
  \begin{tikzpicture}[font=\footnotesize,spy using outlines]
    \renewcommand{\axisdefaulttryminticks}{4} 
    \pgfplotsset{every major grid/.append style={densely dashed}}          
    \tikzstyle{every axis y label}+=[yshift=-10pt] 
    \tikzstyle{every axis x label}+=[yshift=5pt]
    \pgfplotsset{every axis legend/.style={cells={anchor=west},fill=none,at={(1.02,1.02)}, anchor=north east, font=\scriptsize}}
    \begin{axis}[
    width=1.02\linewidth,
    xmin=-0.25,
      ymin=0,
      xmax=1,
      ymax=2.5,
      bar width=2pt,
      grid=major,
      ymajorgrids=false,
      %scaled ticks=true,
      %scale ticks above={4},
      xlabel={},
      ylabel={Eigenvalue distribution}
      ]
      \addplot+[ybar,mark=none,draw=white,fill=BLUE!60!white,area legend] coordinates{
      (-0.194765,0.052452)(-0.170934,0.472064)(-0.147102,0.734321)(-0.123270,0.839224)(-0.099439,1.153933)(-0.075607,1.468642)(-0.051776,1.363739)(-0.027944,1.625997)(-0.004113,1.835803)(0.019719,1.993157)(0.043550,1.940706)(0.067382,1.888254)(0.091213,1.993157)(0.115045,1.783351)(0.138876,1.730900)(0.162708,1.730900)(0.186540,1.573545)(0.210371,1.573545)(0.234203,1.416191)(0.258034,1.258836)(0.281866,1.258836)(0.305697,1.206385)(0.329529,1.049030)(0.353360,1.049030)(0.377192,1.153933)(0.401023,0.839224)(0.424855,0.891676)(0.448686,0.839224)(0.472518,0.681870)(0.496350,0.734321)(0.520181,0.576967)(0.544013,0.734321)(0.567844,0.524515)(0.591676,0.524515)(0.615507,0.367161)(0.639339,0.419612)(0.663170,0.314709)(0.687002,0.209806)(0.710833,0.104903)(0.734665,0.000000)(0.758496,0.000000)(0.782328,0.000000)(0.806160,0.000000)(0.829991,0.000000)(0.853823,0.000000)(0.877654,0.052452)(0.901486,0.000000)(0.925317,0.000000)(0.949149,0.000000)(0.972980,0.000000)
      };
      \addplot[densely dashed,RED,line width=1.5pt] coordinates{
      (-0.206681,0.000110)(-0.194885,0.000240)(-0.183090,0.290251)(-0.171295,0.468273)(-0.159499,0.603453)(-0.147704,0.720873)(-0.135908,0.828594)(-0.124113,0.930390)(-0.112318,1.028332)(-0.100522,1.123623)(-0.088727,1.216928)(-0.076931,1.308510)(-0.065136,1.398276)(-0.053341,1.485776)(-0.041545,1.570190)(-0.029750,1.650306)(-0.017954,1.724552)(-0.006159,1.791077)(0.005636,1.847951)(0.017432,1.893438)(0.029227,1.926314)(0.041023,1.946109)(0.052818,1.953187)(0.064614,1.948636)(0.076409,1.934027)(0.088204,1.911143)(0.100000,1.881748)(0.111795,1.847446)(0.123591,1.809606)(0.135386,1.769348)(0.147181,1.727560)(0.158977,1.684927)(0.170772,1.641970)(0.182568,1.599076)(0.194363,1.556527)(0.206158,1.514525)(0.217954,1.473210)(0.229749,1.432675)(0.241545,1.392978)(0.253340,1.354150)(0.265135,1.316200)(0.276931,1.279127)(0.288726,1.242917)(0.300522,1.207545)(0.312317,1.172986)(0.324112,1.139207)(0.335908,1.106173)(0.347703,1.073849)(0.359499,1.042196)(0.371294,1.011177)(0.383090,0.980753)(0.394885,0.950884)(0.406680,0.921533)(0.418476,0.892660)(0.430271,0.864225)(0.442067,0.836190)(0.453862,0.808515)(0.465657,0.781158)(0.477453,0.754079)(0.489248,0.727235)(0.501044,0.700581)(0.512839,0.674069)(0.524634,0.647651)(0.536430,0.621271)(0.548225,0.594871)(0.560021,0.568384)(0.571816,0.541736)(0.583611,0.514840)(0.595407,0.487595)(0.607202,0.459878)(0.618998,0.431538)(0.630793,0.402384)(0.642589,0.372161)(0.654384,0.340523)(0.666179,0.306966)(0.677975,0.270705)(0.689770,0.230392)(0.701566,0.183279)(0.713361,0.121392)(0.725156,0.000129)(0.736952,0.000049)(0.748747,0.000035)(0.760543,0.000028)(0.772338,0.000024)(0.784133,0.000021)
      };
      \addplot[smooth,GREEN,line width=1pt] coordinates{ (0.7252,0)(0.7252,10)};
      \addplot+[only marks,mark=x,RED] coordinates{ (0.8745,0) };
      \coordinate (spy0) at (axis cs:0.8745,0.001); \coordinate (spypoint0) at (axis cs:0.9,1);
    \begin{scope}
      \spy[black!50!white,size=.6cm,circle,connect spies,magnification=3] on (spy0) in node [fill=none] at (spypoint0);
      \end{scope}
    %\legend{ {Eigenvalues}, {Limiting law}, {Left edge} };
    \end{axis}
  \end{tikzpicture}
  \end{minipage}%
  \hfill{}%
  \begin{minipage}[c]{0.3\linewidth}
  \centering
  	\begin{tikzpicture}[font=\footnotesize]
    \renewcommand{\axisdefaulttryminticks}{4} 
    \pgfplotsset{every major grid/.append style={densely dashed}}          
    \tikzstyle{every axis y label}+=[yshift=-10pt] 
    \tikzstyle{every axis x label}+=[yshift=5pt]
    \pgfplotsset{every axis legend/.style={cells={anchor=east},fill=none,at={(1.02,0.02)}, anchor=south east, font=\scriptsize}}
    \begin{axis}[
    %ybar,
    width=1.02\linewidth,
    %height=.95\linewidth,
    xmin=.1,xmax=2,
    ymin=0,ymax=0.2,
    ylabel={ Eigengap },
    xlabel={$\| \w_* \| $},
    grid=major,
    ymajorgrids=false,
    yticklabel style={
    /pgf/number format/fixed,
    /pgf/number format/precision=2},
	scaled y ticks=false,
    ]
    %\addplot[mark=triangle,BLUE,mark size=2pt,line width=1pt] plot coordinates{
    \addplot+[mark=triangle,only marks,BLUE,mark size=2pt,line width=1pt] plot coordinates{
    (0.10,0.002151)(0.17,0.002199)(0.23,0.002019)(0.30,0.002263)(0.36,0.001602)(0.43,0.002738)(0.49,0.015680)(0.56,0.040560)(0.62,0.069317)(0.69,0.091760)(0.76,0.114021)(0.82,0.133266)(0.89,0.147526)(0.95,0.157390)(1.02,0.171163)(1.08,0.178426)(1.15,0.181567)(1.21,0.187488)(1.28,0.187938)(1.34,0.180424)(1.41,0.185382)(1.48,0.185732)(1.54,0.179268)(1.61,0.178528)(1.67,0.176906)(1.74,0.175813)(1.80,0.171528)(1.87,0.165638)(1.93,0.158157)(2.00,0.156341)
    };
    %\addlegendentry{ Empirical };
    \addplot[densely dashed,RED,line width=1.5pt] plot coordinates{
    (0.10,0.002426)(0.17,0.002102)(0.23,0.002109)(0.30,0.002076)(0.36,0.001705)(0.43,0.002244)(0.49,0.015516)(0.56,0.041311)(0.62,0.067072)(0.69,0.093239)(0.76,0.114311)(0.82,0.133712)(0.89,0.145974)(0.95,0.156250)(1.02,0.167544)(1.08,0.175640)(1.15,0.178167)(1.21,0.182524)(1.28,0.185941)(1.34,0.182625)(1.41,0.181878)(1.48,0.179262)(1.54,0.180072)(1.61,0.178986)(1.67,0.175897)(1.74,0.171855)(1.80,0.167286)(1.87,0.166734)(1.93,0.161310)(2.00,0.153040)
    };
    %\addlegendentry{ Theoretical };
    \end{axis}
  \end{tikzpicture}
  \end{minipage}%
  \hfill{}%
  \begin{minipage}[c]{0.3\linewidth}
  \centering
  \begin{tikzpicture}[font=\footnotesize]
    \renewcommand{\axisdefaulttryminticks}{4} 
    \pgfplotsset{every major grid/.append style={densely dashed}}          
    \tikzstyle{every axis y label}+=[yshift=-10pt] 
    \tikzstyle{every axis x label}+=[yshift=5pt]
    \pgfplotsset{every axis legend/.style={cells={anchor=west},fill=none,at={(-.02,1.02)}, anchor=north west, font=\scriptsize}}
    \begin{axis}[
    %ybar,
    width=1.02\linewidth,
    %height=.95\linewidth,
    xmin=.1,xmax=2,
    ymin=0,ymax=.8,
    ylabel={ Eigen-alignment },
    xlabel={$\| \w_* \|$},
    grid=major,
    ymajorgrids=false,
    scaled ticks=true,
    ]
    %\addplot[mark=triangle,BLUE,mark size=2pt,line width=1pt] plot coordinates{
    \addplot+[mark=triangle,only marks,BLUE,mark size=2pt,line width=1pt] plot coordinates{
    (0.10,0.000825)(0.17,0.001584)(0.23,0.001429)(0.30,0.003044)(0.36,0.005254)(0.43,0.082656)(0.49,0.513597)(0.56,0.683068)(0.62,0.741886)(0.69,0.767674)(0.76,0.774038)(0.82,0.770126)(0.89,0.765693)(0.95,0.768827)(1.02,0.756879)(1.08,0.744152)(1.15,0.736277)(1.21,0.726607)(1.28,0.719440)(1.34,0.711806)(1.41,0.701065)(1.48,0.690804)(1.54,0.680563)(1.61,0.675872)(1.67,0.666140)(1.74,0.654435)(1.80,0.644350)(1.87,0.637784)(1.93,0.628896)(2.00,0.612980)
    };
    %\addlegendentry{ Empirical };
    \addplot[densely dashed,RED,line width=1.5pt] plot coordinates{
    (0.10,0.001758)(0.17,0.001655)(0.23,0.001663)(0.30,0.001770)(0.36,0.002758)(0.43,0.080027)(0.49,0.522082)(0.56,0.688529)(0.62,0.746706)(0.69,0.764318)(0.76,0.773721)(0.82,0.773233)(0.89,0.762425)(0.95,0.759114)(1.02,0.755574)(1.08,0.742741)(1.15,0.738802)(1.21,0.729065)(1.28,0.718755)(1.34,0.708856)(1.41,0.701384)(1.48,0.695142)(1.54,0.687205)(1.61,0.676982)(1.67,0.666415)(1.74,0.654791)(1.80,0.647210)(1.87,0.641582)(1.93,0.627013)(2.00,0.616557)
    };
    %\addlegendentry{ Theoretical };
    \end{axis}
  \end{tikzpicture}
  \end{minipage}
  %\caption{ Right-hand side spike due to response model: theory versus practice of \textbf{(left)} Hessian eigenspectrum for $\| \w_* \| = 1$, \textbf{(middle)} eigengap, and \textbf{(right)} dominant eigenvector alignment, as a function $ \| \w_* \|$ with $\w_* \propto \mathcal N(\zo, \I_p)$, $\w = \sqrt{2/3} \cdot \w_*$, $\bmu = \zo$, $\C = \I_p$, $p=800$ and $n = 4\,000$. Results averaged over $50$ runs. }
  %in Corollary~\ref{coro:special-phase-mu=0}
  \caption{ Spike (on right-hand side of bulk) due to response model \emph{in the absence of data signal}: \textbf{(left)} Hessian eigenspectrum for $\| \w_* \| = 1$, with an emphasis on the \emph{right} isolated eigenvalue $\hat \lambda_{\w_*}$, \textbf{(middle)} eigengap, and \textbf{(right)} dominant eigenvector alignment, as a function $ \| \w_* \|$ with $\w_* \propto [-\mathbf{1}_{p/2},~\mathbf{1}_{p/2}]$, $\w = \sqrt{2/3} \cdot \w_*$, $\bmu = \zo$, $\C = \I_p$, $p=800$ and $n = 4\,000$. Results averaged over $50$ runs. }
  \label{fig:phase-spike-w}
\end{figure*}

As a side remark, note in Figure~\ref{fig:phase-spike-w} that negative eigenvalues are observed for $\w \propto \w_*$: this is not surprising since the modified matrix $\H$ with the preprocessing function $f(t)$ in \eqref{eq:optimla-f-phase} \emph{no longer} corresponds to the Hessian of the original optimization problem.

\section{Conclusions}\label{sec:conclusion}

In this paper, we precisely characterized the Hessian eigenspectra of the nonlinear G-GLMs, in the high dimensional regime where both feature dimension and sample size are large and comparable. 
By focusing on this more realistic, yet still tractable, family of models, our results provide theoretical justifications for many empirical observations in large-scale machine learning models such as deep neural networks.
The deterministic equivalent technique that we employ is very powerful, and we expect that it will enable analysis of a range of more realistic machine learning models.
Of particular future interest would be to use this technique to understand heavy-tailed structure in the correlations of state-of-the-art-neural network models \cite{MM18_TR,MM20a_trends_TR}, and the connections with recent results on heavy-tailed behavior in stochastic optimization algorithms that are used to train these models \cite{hodgkinson2020multiplicative,gurbuzbalaban2020heavy}.

\subsubsection*{Acknowledgments}
We would like to acknowledge DARPA, IARPA (contract W911NF20C0035), NSF, and ONR via its BRC on RandNLA for providing partial support of this work. Our conclusions do not necessarily reflect the position or the policy of our sponsors, and no official endorsement should be inferred.

\printbibliography

\appendix

\section{Proofs}
\label{sec:proofs}

In this section, we present the proofs of our main technical results, Theorem~\ref{theo:lsd}-\ref{theo:DE}. We will start with the key technical Theorem~\ref{theo:DE}, which, as discussed in Section~\ref{subsec:pre-DE}, will serve as the cornerstone in proving Theorem~\ref{theo:lsd},~\ref{theo:spike}~and~\ref{theo:eigenvector}.

In the sequel, the big $O$ notation \(O(u_n)\) is understood in an almost sure sense when the objects are random. When multidimensional objects are concerned, $O(u_n)$ is understood entry-wise and \(O_{\| \cdot \|}(\cdot)\) is understood, for a vector $\vv = O_{\| \cdot \|}(u_n)$, as its Euclidean norm satisfying $\| \vv \| = O(u_n)$ and for a square matrix $\M$, as its spectral norm being of the order $O(u_n)$. The small $o$ notation is understood likewise. Note that under Assumption~\ref{ass:high-dimen} it is equivalent to using either $O(u_n)$ or $O(u_p)$ as $n, p$ scales linearly, and we will use constantly $O(u_n)$ for simplicity of exposition.

\medskip

\subsection{Proof of Theorem~\ref{theo:DE}}
\label{sec:SM-proof-theo-DE}

The proof of the deterministic equivalent (recall from Section~\ref{subsec:pre-DE}) result of the type $\Q(z) \leftrightarrow \EE[\Q(z)]$ generally comes in two steps: 
\begin{enumerate}
  \item establish the convergence in mean, that is, to show that the proposed $\bar \Q(z)$ is an asymptotic approximation for the expected resolvent $\EE[\Q(z)]$ in the sense that
	\begin{equation}
		\| \bar \Q(z) - \EE[\Q(z)] \| \to 0,
	\end{equation}
	as $n,p \to \infty$; and
	\item show that the random quantities $\frac1p \tr \A \Q(z)$ and $\mathbf{a}^\T \Q(z) \mathbf{b}$ of interest concentrate around their expectations in the sense that
	\begin{equation}\label{eq:SM-DE-concentration}
		\frac1p \tr \A (\Q(z) - \EE \Q(z)) \to 0, \quad\text{~and~}\mathbf{a}^\T (\Q(z) - \EE \Q(z)) \mathbf{b} \to 0,
	\end{equation}
	almost surely as $n,p \to \infty$. This can be done with concentration of measure arguments \cite{ledoux2001concentration} or, in the Gaussian case, the Nash--Poincar\'e inequality to bound the variance (see for example \cite[Proposition~2.4]{pastur2005simple}) and then applying Markov's inequality and the Borel Cantelli lemma.
\end{enumerate}

Here we will focus on the first part of the proof and show for $\Q(z) = (\H - z \I_p)^{-1}$, $\H \in \RR^{p \times p}$ defined in \eqref{eq:def-H} that
\begin{equation}\label{eq:SM-barQ}
	\| \EE[\Q(z)] - \bar \Q(z) \| \to 0, \quad \bar \Q(z) \equiv \left(\EE \left[\frac{g}{1+g \cdot \delta(z)} ( \C^{\frac12} (\I_p - \P_\U) \C^{\frac12} +  \balpha \balpha^\T) \right] - z \I_p \right)^{-1},
\end{equation}
as $n,p \to \infty$, for random vector $\balpha \equiv \bmu + \C^{\frac12} \P_\U \z \in \RR^p$ and $g = \ell''(y, \w^\T \bmu + \w^\T \C^{\frac12} \z)$ for $\z \sim \NN(\zo, \I_p)$ as in \eqref{eq:def-g}, $y$ defined in \eqref{eq:def-GLM}, $\P_\U$ the projection onto the subspace spanned by the columns of $\U = \C^{\frac12} [\w_*,~\w] \in \RR^{p \times 2}$ so that $\P_\U = \U (\U^\T \U)^{-1} \U^\T$ for $\U^\T \U$ invertible, and $\delta(z)$ the unique solution to \eqref{eq:def-m(z)}.

%Before diving into the proof of Theorem~\ref{theo:DE} in detail, recall from Remark~\ref{rem:connection-RMT} that, the eigenspectrum of the generalized sample covariance-type matrix $\frac1n \X \D \X^\T$ defined in \eqref{eq:def-H}, for $\D$ \emph{deterministic} or \emph{independent} of $\X$, has been extensively studied in random matrix theory. For instance, taking $\D = \I_n$ one obtains the popular Mar\u{c}enko-Pastur law \cite{marchenko1967distribution} and the case of independent $\D$ was handled in \cite{silverstein1995empirical}. However, the major technical challenge here is that, in \eqref{eq:def-H} we have $\D= \diag \{ g(y_i, \w^\T \x_i) \}_{i=1}^n$ and depends on all $\x_i$'s, not only via $\w^\T \x_i$, but also through $y_i$ that depends on $\w_*^\T \x_i$ per \eqref{eq:def-GLM}, in a nonlinear manner. As a result, existing random matrix results cannot be applied directly to assess the eigenspectrum behavior of our object of interest here. In the sequel, we provide the intuitive arguments showing that, perhaps surprisingly, the limiting eigenvalue distribution of $\H = \frac1n \X \D \X^\T$ is asymptotically the same as that for $\D$ following the same distribution and \emph{independent} of $\X$, and the dependence between $\D$ and $\X$ only induces low rank structure that may appear as the spikes of $\H$.

\medskip
Before diving into the detailed proof of Theorem~\ref{theo:DE}, we will illustrate the main idea leading to Theorem~\ref{theo:DE}, notably the form of $\bar \Q$ in \eqref{eq:SM-barQ}, in the simple setting of $\bmu = \zo$ and $\C = \I_p$. As discussed above in Section~\ref{subsubsec:related}, the main technical difficulty lies in the fact that the diagonal matrix $\D = \diag \{ \ell''(y_i, \w^\T \x_i) \}_{i=1}^n$ depends on the $\x_i$'s in a rather involved nonlinear manner per \eqref{eq:def-GLM}. Nonetheless, note that standard Gaussian vector $\x \sim \NN(\zo, \I_p)$ can decomposed as
\begin{equation}\label{eq:intuition-decomp}
  \x = \U (\U^\T \U)^{-1} \U^\T \x + \x^\perp \equiv \P_\U \x + \x^\perp, \quad \text{for} \quad \x^\perp \equiv (\I_p - \P_\U) \x, %\x^\perp = (\I_p - \U (\U^\T \U)^{-1} \U^\T ) \x = (\I_p - \P_\U) \x
\end{equation}
the sum of its projection onto the subspace spanned by the columns of $\U = [\w_*,~\w]$ (recall that $\C = \I_p$ and assume for the moment that $\w, \w_*$ are linearly independent) and the complementary subspace. Due to the orthogonal invariance of standard multivariate Gaussian, we have that both $\P_\U \x$ and $\x^\perp = (\I_p - \P_\U) \x$ are Gaussian, and they are \emph{independent}, for example by checking that $\EE[\P_\U \x \x^\T (\I_p - \P_\U)] = \zo$.

Since $\D$ depends on $\X$ only through $\U^\T \X$, we have that $\D$ is \emph{independent} of $\X^\perp \equiv [\x_1^\perp, \ldots, \x_n^\perp] = (\I_p - \P_\U) \X \in \RR^{p \times n}$ so that
\begin{equation}
	\H = \frac1n \X \D \X^\T = \frac1n (\P_\U \X + \X^\perp) \D (\P_\U \X + \X^\perp)^\T = \frac1n \X^\perp \D (\X^\perp)^\T + \A
\end{equation}
for some $\A \in \RR^{p \times p}$ of low rank (since $\rank(\P_\U) \le 2$). As a consequence, our object of interest $\H$ is shown to follow a spiked random matrix model as discussed in Section~\ref{sec:pre-spiked}, with the main bulk $\frac1n \X^\perp \D (\X^\perp)^\T$ now more accessible due to the independence between $\X^\perp$ and $\D$. 
Moreover, since $\X^\perp$ is also a low rank update of the original $\X$, the above derivation shows that, asymptotically, the limiting spectral measure of $\H = \frac1n \X \D \X^\T$ in \eqref{eq:def-H} is the same as that of $\frac1n \X^\perp \D (\X^\perp)^\T$ and thus the same as if $\D$ was \emph{independent} of $\X$. 
While the analysis of limiting eigenvalue distribution of $\H$ is largely simplified based on the argument above, the characterization of the low rank matrix $\A$ requires more effort, so as to determine the behavior of the possible isolated eigenvalue-eigenvector pairs of $\H$.

\medskip

The idea above can be generalized, under Assumption~\ref{ass:high-dimen}, to arbitrary profile of the statistical mean $\bmu $ and covariance $\C$, leading to Theorem~\ref{theo:DE}, the proof of which is given as follows.
\begin{proof}[Proof of Theorem~\ref{theo:DE}]
For mathematical convenience, for the resolvent $\Q(z) = (\H - z \I_p)^{-1}$, we will take $z<0$ outside the spectral support of $\H$ in what follows. Since $\Q(z)$ and $\bar\Q(z)$ from the theorem statement are complex analytic functions for $z\notin \RR^+$ (matrix-valued Stieltjes transforms are analytic \cite{hachem2007deterministic}), obtaining the convergence results on $\RR^-$ is equivalent to obtaining the result on all of $\CC\setminus\RR^+$.

Following the orthogonal decomposition idea in \eqref{eq:intuition-decomp}, for $\U = \C^{\frac12} [\w_*,~\w] \in \RR^{p \times 2}$ and $\x_i \sim \NN(\bmu, \C)$ we have $\z_i = \C^{-\frac12} (\x_i - \bmu) \sim \NN(\zo, \I_p)$ and
\begin{equation}
  \z_i = \P_\U \z_i + \z_i^\perp, \quad \text{~for~}\z_i^\perp \equiv (\I_p - \P_\U) \z_i.
\end{equation}
Note that (i) $\U^\T \z_i^\perp = \zo$, (ii) $\U^\T \z_i$ is independent of $\z_i^\perp$ due to the Gaussianity of $\z_i$ and
\begin{equation}\label{eq:cov-z-perp}
  \EE[\z_i^\perp] = \zo, \quad \EE[\z_i^\perp (\z_i^\perp)^\T] = \I_p - \P_\U.% = \I_p - \U (\U^\T \U)^{-1} \U^\T.
\end{equation}
% \[
%   \U^\T \U = \begin{bmatrix} \W_*^\T \C \W_* & \W_*^\T \C \W \\ \W^\T \C \W_* & \W^\T \C \W \end{bmatrix}
% \]
With these results, we now move on to the proof of \eqref{eq:SM-barQ}. Recall the definition $\Q = (\frac1n \X \D \X^\T - z \I_p)^{-1}$ with $\D = \diag\{ g_i\}_{i=1}^n$ and
\begin{equation}\label{eq:SM-def-gi}
	g_i \equiv \ell''(y_i, \w^\T \x_i),
\end{equation}
we first introduce the following intermediate deterministic quantities
\begin{equation}\label{eq:def-theta}
	\theta \equiv \frac1n \tr \C\EE[\Q_{-1}] > 0,\quad \M_\theta \equiv \EE \left[ \frac{g_1 \cdot \x_1 \x_1}{1 + g_1 \cdot \theta} \right], \quad \bar \Q_\theta \equiv \left( \M_\theta - z \I_p \right)^{-1}  ,
\end{equation}
where we denote, for i.i.d.~$\x_i,g_i$ and without loss of generality, $\Q_{-1} \equiv ( \frac1n \sum_{i \neq 1} g_i \x_i \x_i^\T - z\I_p )^{-1} \in \RR^{p \times p}$ the ``leave-one-out'' version of $\Q$ that is \emph{independent} of $\x_1$ and $g_1$. Our proof strategy is to show that
\[
	\| \EE [\Q] - \bar \Q_\theta \| \to 0, \quad \| \bar \Q_\theta - \bar \Q \| \to 0,
\]
and thus the conclusion of \eqref{eq:SM-barQ}.

\medskip

With the resolvent identity $\A^{-1} - \B^{-1} = \A^{-1} (\B - \A) \B^{-1}$ for invertible $\A, \B$, we write, 
\begin{align*}
  \EE[\Q - \bar \Q_\theta] &= \EE \left[ \Q \left( \M_\theta - \frac1n \X \D \X^\T \right) \right] \bar \Q_\theta = \EE[\Q] \M_\theta \bar \Q_\theta - \frac1n \sum_{i=1}^n \EE[ g_i \Q \x_i \x_i^\T] \bar \Q_\theta \\ 
  %%%
  &= \EE[\Q] \M_\theta \bar \Q - \frac1n \sum_{i=1}^n \EE \left[ \frac{g_i \cdot \Q_{-i} \x_i \x_i^\T}{ 1 + g_i \cdot \frac1n \x_i^\T \Q_{-i} \x_i } \right] \bar \Q_\theta,
  %%%
  %&= \EE[\Q] \M \bar \Q - \frac1n \sum_{i=1}^n \EE \left[ g_i \frac{\Q_{-i} \x_i \x_i^\T}{ 1 + g_i \cdot \frac1n \tr \Q_{-i} \C } \right] \bar \Q + o_{\| \cdot \|}(1)
\end{align*}
where we used the Sherman–Morrison formula.

Since the random variable $\frac1n \x_i^\T \Q_{-i} \x_i$ is expected to concentrate around its expectation $\theta = \frac1n \tr \C \EE[\Q_{-i}]$, we write
\begin{align*}
	\frac1n \sum_{i=1}^n \EE \left[ \frac{g_i \cdot \Q_{-i} \x_i \x_i^\T}{ 1 + g_i \cdot \frac1n \x_i^\T \Q_{-i} \x_i } \right] &= \frac1n \sum_{i=1}^n \EE \left[ \frac{g_i \cdot \Q_{-i} \x_i \x_i^\T}{ 1 + g_i \cdot \theta } \right] + \frac1n \sum_{i=1}^n \EE \left[ \frac{g_i^2 \cdot \Q_{-i} \x_i \x_i^\T (\theta - \frac1n \x_i^\T \Q_{-i} \x_i)}{ (1 + g_i \cdot \frac1n \x_i^\T \Q_{-i} \x_i) (1 + g_i \cdot \theta) } \right] \\ 
	&= \frac1n \sum_{i=1}^n \EE[\Q_{-i}] \M_\theta + \frac1n \sum_{i=1}^n \EE \left[ \frac{ g_i^2 \cdot \Q \x_i \x_i^\T (\theta - \frac1n \x_i^\T \Q_{-i} \x_i) }{1 + g_i \cdot \theta} \right] \\ 
	&\equiv \EE[\Q_{-1}] \M_\theta + \frac1n \EE[ \Q \X \D \D_1 \X^\T]
\end{align*}
for $\D_1 \equiv \diag\{ \frac{g_i}{1 + g_i \cdot \theta} (\theta - \frac1n \x_i^\T \Q_{-i} \x_i) \}_{i=1}^n$ and $\D = \diag\{ g_i \}_{i=1}^n$, where we used the fact that $\Q_{-i}$ is independent of both $g_i$  and $\x_i$. As such, 
\begin{equation}\label{eq:proof-1}
	\EE[\Q - \bar \Q_\theta] = \EE[\Q - \Q_{-1}] \M_\theta \bar \Q_\theta - \frac1n \EE[ \Q \X \D \D_1 \X^\T] \bar \Q_\theta.
\end{equation}

For the first right-hand side term in \eqref{eq:proof-1}, again by the resolvent identity
\begin{align*}
	\EE[\Q_{-1} - \Q] &= \frac1n \sum_{i=1}^n \EE \left[\Q_{-i} - \Q \right] = \frac1n \sum_{i=1}^n \EE \left[ \Q_{-i} \frac1n g_i \x_i \x_i^\T \Q \right] = \frac1n \sum_{i=1}^n \EE \left[  \frac1n \frac{g_i \cdot \Q_{-i} \x_i \x_i^\T \Q_{-i}}{1 + g_i \cdot \frac1n \x_i^\T \Q_{-i} \x_i} \right] \\ 
	&= \frac1n \EE \left[ \Q_{-1} \frac{g_1 \x_1 \x_1^\T}{1 + g_i \cdot \theta} \Q_{-1} \right] + \frac1n \EE \left[ \Q_{-1} \frac{g_1^2 \x_1 \x_1^\T (\theta - \frac1n \x_1^\T \Q_{-1} \x_1)}{ (1 + g_1 \cdot \theta) (1 + g_1 \cdot \frac1n \x_1^\T \Q_{-1} \x_1)} \Q_{-1} \right] \\ 
	& = \frac1n \EE[\Q_{-1} \M_\theta \Q_{-1}] + \frac1n \EE \left[ \Q \frac1n \X \D \D_1 \X^\T \Q_{-1} \right]
\end{align*}
where we used again the Sherman–Morrison formula. The first term is easily seen to be of order $O(n^{-1})$ in spectral norm (with $\| \Q(z) \| \leq \dist(z, \supp \mu_\H)^{-1}$ and $\| \M_\theta \| = O(1)$ under Assumption~\ref{ass:high-dimen}). For the second term, note that with the additional $1/n$ in front, it suffices to bound the spectral norm of the second right-hand side term in \eqref{eq:proof-1}. Since $\| \Q \frac1n \X \D \X^\T \| \leq 1$, it thus remains to control
\begin{equation}
	\EE \left[ \max_i \left| \theta - \frac1n \x_i^\T \Q_{-i} \x_i \right| \right]  ,
\end{equation}
that appears in $\D_1$. To this end, note that
\begin{align*}
	&\EE \left[ \left| \theta - \frac1n \x_1^\T \Q_{-1} \x_1 \right|^4 \right] \leq \EE \left[ \left| \frac1n \x_1^\T \Q_{-1} \x_1 - \frac1n \tr \C\Q_{-1} + \frac1n \tr \C(\Q_{-1} - \EE[\Q_{-1}] ) \right|^4 \right] \\ 
	&= \EE \left[ \left|\frac1n \x_1^\T \Q_{-1} \x_1 - \frac1n \tr \C\Q_{-1} \right|^4 \right] + \EE \left[ \left| \frac1n \tr \C(\Q_{-1} - \EE[\Q_{-1}]) \right|^4 \right] \\ 
	&+ 6 \cdot \EE \left[ \left|\frac1n \x_1^\T \Q_{-1} \x_1 - \frac1n \tr \C\Q_{-1} \right|^2 \cdot \left| \frac1n \tr \C(\Q_{-1} - \EE[\Q_{-1}]) \right|^2 \right] + O(n^{-4}) =  O(n^{-2})  ,
\end{align*}
with $\alpha \equiv \frac1n \tr \C \EE[\Q_{-1}]$ by definition, where we used, for the last line, \cite[Lemma~B.26]{bai2010spectral}, the Burkholder inequality (see e.g.,~\cite[Lemma~2.13]{bai2010spectral}), as well as the fact that $\bmu^\T \Q_{-i} \bmu \le \| \bmu \|^2 \| \Q_{-i} \| = O(1)$ under Assumption~\ref{ass:high-dimen}.

As a result, by Markov's inequality $\mathbb P(X>t)\leq \EE[X^k]/t^k$ for every $k$ (for $X,t>0$) and the union bound, we have that 
\begin{align*}
	\EE \left[ \max_i \left| \theta - \frac1n \x_i^\T \Q_{-i} \x_i \right| \right] &= \int_0^{C' n^{-1/4}} \mathbb P \left( \max_i \left| \theta - \frac1n \x_i^\T \Q_{-i} \x_i \right| >t\right)\,dt \\ 
	&+ n \int_{C' n^{-1/4}}^\infty \mathbb P \left( \left| \theta - \frac1n \x_1^\T \Q_{-1} \x_1 \right| >t\right)\,dt \leq C n^{-\frac14} 
\end{align*}
holds for some constant $C, C' > 0$ that depend on $p,n$ only via the ratio $p/n$.

Putting all together, we obtain the first approximation
\begin{equation}
	\| \EE[\Q] - \bar \Q_\theta \| = O(n^{-\frac14}), \quad\text{~for~}\bar \Q_\theta \equiv \left( \M_\theta - z \I_p \right)^{-1},\quad \M_\theta \equiv \EE \left[ \frac{g_1 \cdot \x_1 \x_1}{1 + g_1 \cdot \theta} \right] ,
\end{equation}
with $\theta \equiv \frac1n \tr \C\EE[\Q_{-1}] > 0$.

Note in passing we also proved that $\| \EE[\Q - \Q_{-1}] \| = O(n^{-1})$.
%While it is not a symmetric matrix, the left-hand side $\EE[\Q - \bar \Q_\theta]$ is. We may thus write \eqref{eq:proof-1} as the half-sum of itself and its transpose and we are thus left to controlling the spectral norm of the symmetric matrix
% \[
% 	\frac1n \EE[\Q \X \D \D_1 \X^\T] \bar \Q_\theta + \frac1n \bar \Q_\theta \EE[\X \D_1 \D \X^\T \Q]  \preceq 
% \]
% which unfolds from $(\mathbf{A} - \mathbf{B}) (\mathbf{A} - \mathbf{B})^\T \succeq 0$.

It thus remains to establish the second approximation by showing
\begin{equation}
	\| \bar \Q_\theta - \bar \Q \| \to 0,
\end{equation}
as $n,p \to \infty$, with $\bar \Q = (\M - z \I_p)^{-1}$ and $\M = \EE \left[\frac{g}{1+g \cdot \delta(z)} ( \C^{\frac12} (\I_p - \P_\U) \C^{\frac12} +  \balpha \balpha^\T) \right]$ as defined in \eqref{eq:SM-barQ}. To this end, we first compute the expectation in $\M_\theta$. Note that $g_i = \ell''(y_i, \w^\T \x_i)$ depends on $\x_i$ \emph{only} via the projection $\U^\T \z_i$ under \eqref{eq:def-GLM}. We thus decompose $\x_i \x_i^\T$ as
\begin{align*}
  \x_i \x_i^\T &= (\bmu + \C^{\frac12} \z_i) (\bmu + \C^{\frac12} \z_i)^\T = (\bmu + \C^{\frac12} \P_{\U} \z_i + \C^{\frac12} \z_i^\perp ) (\bmu + \C^{\frac12} \P_{\U} \z_i + \C^{\frac12} \z_i^\perp )^\T \\ 
  %%%
  &= \balpha_i \balpha_i^\T + \C^{\frac12} \z_i^\perp (\z_i^\perp)^\T \C^{\frac12} + \balpha_i (\z_i^\perp)^\T \C^{\frac12} + \C^{\frac12} \z_i^\perp \balpha_i^\T,
\end{align*}
where we denote $\balpha_i \equiv \bmu + \C^{\frac12} \P_{\U} \z_i$. As such
\begin{align*}
	\M_\theta &\equiv \EE \left[ \frac{g_1 \cdot \x_1 \x_1}{1 + g_1 \cdot \theta} \right] = \EE \left[ \frac{g_1 \cdot ( \balpha_1 \balpha_1^\T + \C^{\frac12} \z_i^\perp (\z_i^\perp)^\T \C^{\frac12} )}{1 + g_1 \cdot \theta} \right] \\ 
	&= \EE \left[ \frac{g_1}{1 + g_1 \cdot \theta} \left( \balpha_1 \balpha_1^\T + \C^{\frac12} (\I_p - \P_\U) \C^{\frac12} \right) \right]
\end{align*}
where we used the crucial fact that $\z_i^\perp$ is independent of $g_i, \P_\U \z_i $ and $ \balpha_i$ due to the Gaussianity of $\z_i$, with $\EE[\z_i^\perp] = \zo$ and $\EE[\z_i^\perp (\z_i^\perp)^\T] = \I_p - \P_\U$ from \eqref{eq:cov-z-perp}. As a result, we have
\begin{equation}\label{eq:SM-diff-M}
	\M - \M_\theta = \EE \left[ \frac{g^2 (\delta - \theta)}{(1+g \cdot \delta) (1 + g \cdot \theta)} \cdot ( \C^{\frac12} (\I_p - \P_\U) \C^{\frac12} +  \balpha \balpha^\T) \right]
\end{equation}
for $\delta = \delta(z) = \frac1p \tr \bar \Q_b(z)$ as defined in \eqref{eq:def-m(z)}.

Further note that
\[
	\bar \Q_\theta - \bar \Q = \bar \Q_\theta \left( \M - \M_\theta \right) \bar \Q 
\]
so that with $\bar \Q_b(z) = \left( \EE \left[ \frac{g \cdot \C}{1 + g \delta(z)} \right] - z \I-p \right)^{-1}$ as defined in \eqref{eq:def-m(z)},
\begin{align*}
	|\delta - \theta| = \left| \frac1n \tr \C \left( \bar \Q_b - \EE[\Q_{-1}] \right) \right| &\leq \left| \frac1n \tr \C ( \bar \Q_b - \bar \Q) \right| + \left| \frac1n \tr \C( \bar \Q - \bar \Q_\theta) \right| \\ 
	&+ \left| \frac1n \tr \C( \bar \Q_\theta - \EE[\Q] ) \right| + \left| \frac1n \tr \C \EE[\Q - \Q_{-1}]  \right|
\end{align*}
where we used the fact that $\| \EE[\Q] - \bar \Q_\theta \| = O(n^{-\frac14})$ and $\| \EE[\Q - \Q_{-1}] \| = O(n^{-1})$ to bound the third and fourth right-hand side term, respectively, and the following lemma to bound the first right-hand side term.
\begin{Lemma}[{\cite[Lemma~2.6]{silverstein1995empirical}}]\label{lem:rank-one-update-trace}
For $\A, \M \in \RR^{n \times n}$ symmetric, $\uu \in \RR^n$, $\tau \in \RR$ and $z \in \CC \setminus \RR$, 
\[
  \tr \A (\M + \tau \uu \uu^\T - z \I_n)^{-1} - \tr \A (\M - z \I_n)^{-1} \le \frac{\| \A \|}{ |\Im[z]| }.
\]
The inequality further extends to the case of symmetric and positive semidefinite $\A$, with $\tau > 0$ and $z<0$ so long that the inverses are well-defined, by replacing the right-hand-side $|\Im[z]|$ by $|z|$.
\end{Lemma}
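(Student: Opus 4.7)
The plan is to obtain the difference of traces in closed form via the Sherman--Morrison formula, then bound numerator and denominator separately using the resolvent identity $\Im[\Q] = \Im[z]\, \Q\bar\Q$. Concretely, writing $\Q(z) = (\M-z\I_n)^{-1}$ and $\tilde\Q(z) = (\M+\tau\uu\uu^\T-z\I_n)^{-1}$, Sherman--Morrison gives
\[
\tilde\Q(z) - \Q(z) = -\frac{\tau\, \Q(z) \uu\uu^\T \Q(z)}{1 + \tau\, \uu^\T \Q(z) \uu},
\]
hence, using cyclicity of the trace,
\[
\tr \A\bigl(\tilde\Q(z) - \Q(z)\bigr) = -\frac{\tau\, \uu^\T \Q(z)\, \A\, \Q(z) \uu}{1 + \tau\, \uu^\T \Q(z) \uu}.
\]

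For the first (generic) bound, I would diagonalize $\M = \sum_i \lambda_i \mathbf{e}_i\mathbf{e}_i^\T$ and write $u_i = \mathbf{e}_i^\T \uu$. Then $\Q(z)\uu = \sum_i u_i (\lambda_i-z)^{-1} \mathbf{e}_i \in \CC^n$, and the standard spectral bound $\bigl|(\Q \uu)^\T \A (\Q \uu)\bigr| \le \|\A\| \sum_i |u_i|^2 / |\lambda_i - z|^2$ (obtained by expanding $\A$ in its real orthonormal eigenbasis and using $|\sum_k a_k x_k^2| \le \|\A\| \sum_k |x_k|^2$ for complex $x_k$) controls the numerator. For the denominator, I would use
\[
\bigl|1 + \tau\, \uu^\T \Q(z) \uu\bigr| \;\ge\; \bigl|\Im\bigl[\tau\, \uu^\T \Q(z) \uu\bigr]\bigr| \;=\; |\tau|\,|\Im[z]|\sum_i \frac{u_i^2}{|\lambda_i - z|^2},
\]
where the last equality comes from $\Im[\Q(z)] = \Im[z]\,\Q(z)\bar{\Q}(z)$. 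Taking the ratio produces exactly $\|\A\|/|\Im[z]|$, with the sums in numerator and denominator cancelling; this is the main computational moment of the argument.

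For the extension to $\A \succeq 0$, $\tau > 0$, $z < 0$ (with $\M$ such that both inverses exist, typically $\M \succeq 0$), the imaginary-part trick is unavailable but replaced by a positivity argument. Here $\Q(z) \succeq 0$ with eigenvalues $1/(\lambda_i + |z|) > 0$, so the elementary inequality
\[
\frac{1}{\lambda_i + |z|} \;\ge\; \frac{|z|}{(\lambda_i + |z|)^2}
\]
yields $\uu^\T \Q(z) \uu \ge |z|\, \uu^\T \Q(z)^2 \uu$, whence $1 + \tau\, \uu^\T \Q(z)\uu \ge \tau |z|\, \uu^\T \Q(z)^2 \uu$. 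The numerator obeys $0 \le \uu^\T \Q(z) \A \Q(z) \uu \le \|\A\|\, \uu^\T \Q(z)^2 \uu$ by $\A \preceq \|\A\|\I_n$. Dividing gives the claimed $\|\A\|/|z|$ bound.

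The only subtle step is verifying the complex bilinear form estimate $|\vv^\T \A \vv| \le \|\A\|\|\vv\|^2$ for $\vv \in \CC^n$, $\A \in \RR^{n\times n}$ symmetric, since $(\mathbf{f}_k^\T \vv)^2$ is complex and one must use $|\!\sum_k a_k (\mathbf{f}_k^\T\vv)^2| \le \sum_k |a_k||\mathbf{f}_k^\T\vv|^2 \le \|\A\|\,\|\vv\|^2$; the rest is algebra. No passage to the limit $n,p\to\infty$ is required since the statement is a deterministic matrix inequality.
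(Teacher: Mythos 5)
Your proof is correct, and it is the standard argument for this type of rank-one perturbation bound: Sherman--Morrison reduces the trace difference to the scalar ratio $-\tau\,\uu^\T\Q\A\Q\uu/(1 + \tau\,\uu^\T\Q\uu)$, the numerator is controlled by $\|\A\|\,\|\Q\uu\|^2$ via the complex bilinear-form estimate you spell out at the end, and the denominator is bounded below by $|\Im[\tau\,\uu^\T\Q\uu]| = |\tau|\,|\Im z|\,\|\Q\uu\|^2$, with the two sums cancelling exactly. The paper itself does not prove this lemma---it cites Lemma~2.6 of Silverstein (1995)---so there is no internal argument to compare against; yours is the one that would appear in that reference.

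One hypothesis in the extension should be stated more firmly than ``typically''. For $z < 0$, the pointwise inequality $1/(\lambda_i + |z|) \ge |z|/(\lambda_i + |z|)^2$ that you rely on is equivalent to $\lambda_i \ge 0$, not merely to $\lambda_i + |z| > 0$; it does not follow from the inverses being well-defined. The assumption $\M \succeq 0$ is therefore genuinely necessary, not a matter of convenience. A one-dimensional counterexample: take $n = 1$, $\M = -\tfrac12$, $\uu = 1$, $z = -1$, $\tau = 10$, $\A = 1$. Both inverses exist ($\M - z = \tfrac12 > 0$, $\M + \tau - z = 10.5 > 0$), yet
\[
  \Bigl| \frac{1}{10.5} - \frac{1}{0.5} \Bigr| \;=\; \frac{10}{10.5 \times 0.5} \;\approx\; 1.9 \;>\; 1 \;=\; \frac{\|\A\|}{|z|}.
\]
In the paper's applications the perturbed matrices are positive semidefinite, so the condition holds automatically, but it is a real hypothesis and your proof is exactly where it surfaces; I would replace ``typically $\M \succeq 0$'' with ``assume additionally $\M \succeq 0$''.
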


We thus obtain from \eqref{eq:SM-diff-M} that
\begin{align*}
	|\delta - \theta| &\leq \left| \frac1n \tr \C( \bar \Q - \bar \Q_\theta) \right| + O(n^{-\frac14}) \\ 
	&= |\delta - \theta| \cdot \frac1n \tr \left( \C \bar \Q_\theta \EE \left[  \frac{g^2}{ (1 + g \cdot \delta) (1 + g \cdot \delta) } ( \C^{\frac12} (\I_p - \P_\U) \C^{\frac12} +  \balpha \balpha^\T) \bar \Q \right] \right) + O(n^{-\frac14})
\end{align*}
so that
\begin{equation}
	|\delta - \theta| = O(n^{-\frac14}), \quad\text{and}\quad \| \bar \Q_\theta - \bar \Q \| = O(n^{-\frac14}).
\end{equation}
This concludes the proof of Theorem~\ref{theo:DE}.
\end{proof}
Note in passing that we can alternatively write
\begin{equation}
	\bar \Q =  \left( \EE \left[\frac{g \cdot \C}{1+g \cdot \delta(z)} \right] - z \I_p + \V \bLambda \V^\T \right)^{-1}
\end{equation}
with $\V \equiv [\bmu,~\C^{\frac12} \U] \in \RR^{p \times 3}$, $\U = \C^{\frac12}[\w_*,~\w]$ and 
\begin{equation}%\label{eq:def-V-Lambda}
  \bLambda(z) \equiv \EE \left[\frac{g}{1+g \cdot \delta(z)} \begin{bmatrix} 1 & (\U^+ \z)^\T \\ \U^+ \z & \U^+ \z (\U^+ \z)^\T - (\U^\T \U)^+ \end{bmatrix} \right]
\end{equation}
for $\U^+$ the Moore–Penrose pseudoinverse of $\U$ as defined in Theorem~\ref{theo:spike}.

\subsection{Proof of Theorem~\ref{theo:lsd}}\label{subsec:proof-theo-lsd}

The proof of Theorem~\ref{theo:lsd} follows straightforwardly from Theorem~\ref{theo:DE} by writing the corresponding Stieltjes transform
\begin{align*}
	\frac1p \tr \Q(z) &= \frac1p \tr \left( \Q(z) - \EE[\Q(z)] + \EE[\Q(z)] - \bar \Q(z) + \bar \Q(z) - \bar \Q_b(z) + \bar \Q_b(z) \right) \\ 
	&= O(n^{-\frac14}) + O(n^{-1}) + \frac1p \tr \bar \Q_b(z)
\end{align*}
where we used the fact that $\| \EE[\Q] - \bar \Q \| = O (n^{-\frac14})$ from the proof of Theorem~\ref{theo:DE} above and Lemma~\ref{lem:rank-one-update-trace} for the approximations, respectively.
\qed

\subsection{Proof of Theorem~\ref{theo:spike}}\label{subsec:proof-theo-spike}

For $\bar \Q (z)$ defined in Theorem~\ref{theo:DE}, we have, with the Woodbury identity that
\[
  \bar \Q(z) = \left( \EE \left[\frac{g \cdot \C}{1+g \cdot \delta(z)} \right] - z \I_p + \V \bLambda \V^\T \right)^{-1} = \left( \bar \Q^{-1}_b(z) + \V \bLambda \V^\T \right)^{-1} %= \bar \Q_b (z) - \bar \Q_b (z) \V \bLambda (\I_{2K+1} + \V^\T \bar \Q_b (z) \V \bLambda)^{-1} \V^\T \bar \Q_b (z)
\]
with $\bar \Q_b (z) \equiv \left( \EE \left[\frac{g \cdot \C}{ 1 + g \cdot \delta(z) } \right] - z \I_p \right)^{-1}$ that characterizes the main bulk, $\V \equiv [\bmu,~\C^{\frac12} \U] \in \RR^{p \times 3}$ and  %$\V \equiv \begin{bmatrix} \bmu & \C^{\frac12} \U \end{bmatrix} \in \RR^{p \times (1 + 2K)}$ and 
\begin{equation}%\label{eq:def-V-Lambda}
  %\bLambda \equiv \EE \frac{g}{1+g \cdot \delta(z)} \begin{bmatrix} 1 & \z^\T \U (\U^\T \U)^{-1} \\ (\U^\T \U)^{-1} \U^\T \z & (\U^\T \U)^{-1} ( \U^\T \z \z^\T \U - \U^\T \U ) (\U^\T \U)^{-1} \end{bmatrix}
  \bLambda(z) \equiv \EE \frac{g}{1+g \cdot \delta(z)} \begin{bmatrix} 1 & (\U^+ \z)^\T \\ \U^+ \z & \U^+ \z (\U^+ \z)^\T - (\U^\T \U)^+ \end{bmatrix}.
\end{equation}
%for $\U = \C^{\frac12} \begin{bmatrix} \W_* & \W \end{bmatrix} \in \RR^{p \times 2K}$ that characterizes the low rank structure.

Recall from Theorem~\ref{theo:lsd} that $z \in \RR$ is an eigenvalue within the main bulk if the Stieltjes transform is not defined
\[
  m(z) = \frac1p \tr \left( \EE_g \frac{g \cdot \C}{ 1 + g \cdot \delta(z) } - z \I_p \right)^{-1} = \frac1p \tr \bar \Q_b(z).
\]

To characterize the possible isolated eigenvalues (i.e., the ``spikes''), we aim to find $z \in \RR$ such that $\bar \Q(z)$ is not defined but $\bar \Q_b(z)$ is (and thus, outside the main bulk), or equivalently
\[
  \det \bar \Q^{-1} (z) = 0, \quad \text{such that $\det \bar \Q_b^{-1} (z) \neq 0$ }
\]
by solving for $z \in \RR$ such that
\begin{align*}
  0& = \det \bar \Q^{-1} (z) = \det \left( \bar \Q^{-1}_b(z) + \V \bLambda \V^\T \right) = \det \bar \Q^{-1}_b(z) \cdot \det ( \I_p + \V \bLambda \V^\T \bar \Q_b(z) ) \\
  %%%
  &= \det ( \I_{2K+1} + \bLambda \V^\T \bar \Q_b(z) \V) 
\end{align*}
with Sylvester's identity. This concludes the proof of Theorem~\ref{theo:spike}. 
\qed

% \section{More discussions on Remark~\ref{rem:recentering}}
% \label{sec:SM-recenter}

% \begin{Proposition}[On recentered Hessian]\label{prop:rencenter-Hessian}
% Under the setting and notation of Theorem~\ref{theo:lsd}, we have, for $\bar \X = [\bar \x_1, \ldots, \bar \x_n] \in \RR^{p \times n}$ with $\bar \x_i = \x_i - \frac1n \sum_{j=1}^n \x_j$ and 
% \begin{equation}
%   \bar \H = \frac1n \bar \X \bar \D \bar \X^\T, \quad \bar \D \equiv \diag\{ g(y_i, \w^\T \bar \x_i) \}_{i=1}^n,
% \end{equation}
% \end{Proposition}
\subsection{Proof of Theorem~\ref{theo:eigenvector}}
\label{subsec:proof-theo-eigenvector}

% \section{Asymptotic behavior of isolated eigenvectors}
% \label{sec:SM-iso-eigenvector}

Similar to the behavior of the isolated eigenvalues studied in Theorem~\ref{theo:spike}, the corresponding eigenvector can be characterized via the Cauchy's integration technique. 
More precisely, from Theorem~\ref{theo:DE}, recall that $\V \equiv [\bmu,~\C \w_*,~\C \w] \in \RR^{p \times 3}$ captures the low rank structure of the Hessian matrix $\H$.
Thus, we can characterize the projection of a given isolated eigenvector $\hat \uu$ onto the subspace spanned by the columns of $\V$ (that are assumed to be linearly independent for simplicity of presentation). To this end, we need to evaluate, for $\lambda$ the asymptotic approximation given in Theorem~\ref{theo:spike} of an isolated eigenvalue $\hat \lambda$ of $\H$ with corresponding eigenvector $\hat \uu$, the following matrix
\begin{equation}\label{eq:proof-proj-1}
	\V^\T \hat \uu \hat \uu^\T \V = -\frac1{2\pi \imath} \oint_{\Gamma_\lambda} \V^\T \Q(z) \V \,dz = -\frac1{2\pi \imath} \oint_{\Gamma_\lambda} \V^\T \bar \Q(z) \V \,dz + o(1)
\end{equation}
where $\Gamma_\lambda$ is a positively oriented contour surrounding the isolated eigenvalue only.

Since the columns of $\V$, denoted $\vv_a$ for $a \in \{1,2,3 \}$, are assumed to be linearly independent, one can write
\[
  \hat \uu = \sum_{a=1}^3 \theta_a \vv_a + \bomega_a
\]
where $\bomega_a \in \RR^p$ is a random vector of zero mean and orthogonal to $\vv_a$. As a consequence, the projection in \eqref{eq:proof-proj-1} allows to retrieve the coefficient vector $\btheta \in \RR^3$ via 
\begin{equation}
	\V^\T \hat \uu \hat \uu^\T \V = \V^\T \V \btheta \btheta^\T \V^\T \V + o(1)
\end{equation}
with a central limit theorem argument. 

We now present the proof of Theorem~\ref{theo:eigenvector} as follows.
\begin{proof}[Proof of Theorem~\ref{theo:eigenvector}]
Let $(\hat \lambda, \hat \uu)$ be an isolated eigenvalue-eigenvector pair of $\H$, and let $\lambda$ denote the asymptotic approximation given in Theorem~\ref{theo:spike}. We have
\begin{align*}
  &\V^\T \hat \uu \hat \uu^\T \V = -\frac1{2\pi \imath} \oint_{\Gamma_{\lambda}} \V^\T \Q(z) \V \,dz = -\frac1{2\pi \imath} \oint_{\Gamma_{\lambda}} \V^\T \bar \Q(z) \V \,dz + o(1) \\ 
  %%%
  &= -\frac1{2\pi \imath} \oint_{\Gamma_{\lambda}} \V^\T \left( \bar \Q^{-1}_b(z) + \V \bLambda(z) \V^\T \right)^{-1} \V \,dz + o(1) \\ 
  %%%
  &= -\frac1{2\pi \imath} \oint_{\Gamma_{\lambda}} \V^\T \bar \Q_b(z) \V \left( \I_3 + \bLambda(z) \V^\T \bar \Q_b(z) \V \right)^{-1} \,dz + o(1)
\end{align*}
with Woodbury identity, where we recall $\bar \Q_b (z) \equiv \left( \EE \left[ \frac{g \cdot \C}{ 1 + g \cdot \delta(z) } \right] - z \I_p \right)^{-1}$ defined in \eqref{eq:def-m(z)} that characterizes the main bulk. With Cauchy's residue theorem, it remains to evaluate the associated residue and thus the values of $z$ lying within $\Gamma_{\lambda}$ such that $\I_3 + \bLambda \V^\T \bar \Q_b(z) \V$ is singular. As a consequence,
\begin{align*}
    &\V^\T \hat \uu \hat \uu^\T \V = -{\rm Res} \left( \V^\T \bar \Q_b(z) \V \cdot \left( \I_3 + \bLambda(z) \V^\T \bar \Q_b(z) \V \right)^{-1} \right) + o(1) \\ 
    %%%
    &= -\V^\T \bar \Q_b(\lambda) \V \cdot \lim_{z \to \lambda} \left( \I_3 + \bLambda(z) \V^\T \bar \Q_b(z) \V \right)^{-1} + o(1) \\ 
    %%%
    &= -\V^\T \bar \Q_b(\lambda) \V \cdot \frac{\vv_{r,\G} \vv_{l,\G}^\T }{ \vv_{l,\G}^\T \G'(\lambda) \vv_{r,\G} } + o(1) \equiv - \V^\T \bar \Q_b(\lambda) \V \cdot \Xi(\lambda) + o(1)
\end{align*}
with $\G(z)$ defined in \eqref{eq:def-G(z)}, $\vv_{l,\G}, \vv_{r,\G} \in \RR^3$ the left and right eigenvectors of $\G(\lambda)$ associated with eigenvalue zero, and $\G'(z)$ the derivative of $\G(z)$ with respect to $z$ evaluated at $z = \lambda$.

This concludes the proof of Theorem~\ref{theo:eigenvector}.
\end{proof}

% {\BLUE 
% \begin{Remark}[Special cases]
% compare to the result of \cite{lu2019phase}
% \end{Remark}
% }

Note in particular that
\[
  \G'(z) = \bLambda'(z)\V^\T \bar \Q_b(z) \V + \bLambda(z) \V^\T \bar \Q_b'(z) \V
\]
with
\begin{align*}
  \bLambda'(z) &= -\EE \frac{g^2 \cdot \delta'(z)}{( 1+g \cdot \delta(z))^2} \begin{bmatrix} 1 & (\U^+ \z)^\T \\ \U^+ \z & (\U^+ \z) (\U^+ \z)^\T - (\U^\T \U)^+ \end{bmatrix}, \\ 
  \bar \Q_b'(z) &= \bar \Q_b(z) \left( \EE \frac{g^2 \cdot \delta'(z)}{( 1+g \cdot \delta(z))^2} \C + \I_p \right) \bar \Q_b(z),
\end{align*}
and $\delta'(z) = \frac{\frac1n \tr \bar \Q_b \C \bar \Q_b}{ 1 - \EE \left[ \frac{g^2}{ (1+g \cdot \delta)^2 } \right] \frac1n \tr \C \bar \Q_b \C \bar \Q_b }$. These \emph{explicit} derivatives will be used in the numerical evaluations of Theorem~\ref{theo:eigenvector} in Section~\ref{sec:discuss}.

\section{Detailed derivations for specific models}
\label{sec:SM-details-model}

\subsection{Logistic model}\label{subsec:proof-logistic}

Consider the logistic model as in \eqref{eq:def-logistic}
\begin{equation}
  \mathbb P(y = 1 \mid \w_*^\T \x) = \sigma(\w_*^\T \x) = 1 - \mathbb P(y = -1 \mid \w_*^\T \x), \quad \sigma(t) = (1 + e^{-t})^{-1}
\end{equation}
with the logistic loss $\ell(y, h) = \ln(1+e^{- y h})$ and $h = \w^\T \x \sim \mathcal N(\w^\T \bmu, \w^\T \C \w)$, we have for $g \equiv \ell''(y, h)$ defined in \eqref{eq:def-g} that
\begin{equation}
  g = \frac{\partial^2 \ell(y,h)}{\partial h^2} = \frac{e^{y h}}{ (1 + e^{y h})^2 }
\end{equation}
so that for any fixed $\delta(z)$,
\begin{equation}
  \EE \left[\frac{g}{1 + g \delta(z)} \right] = \EE \left[ \frac1{ \delta(z) + 2 + e^{-y h} + e^{y h} } \right] = \EE[ 1/(\delta(z)+ 2 + e^{-h} + e^h) ].
\end{equation}

\subsection{Phase retrieval model}\label{subsec:proof-phase}

Consider the noiseless phase retrieval model for which $y = (\w_*^\T \x)^2$ with square loss $\ell(y, h) = (y - h^2 )^2/4$ and $h = \w^\T \x \sim \mathcal N(\w^\T \bmu, \w^\T \C \w)$, we have for $g \equiv \ell''(y, h)$ defined in \eqref{eq:def-g} that
\begin{equation}
  g = \frac{\partial^2 \ell(y,h)}{\partial h^2} = 3 h^2 - y
\end{equation}
so that for any fixed $\delta(z)$,
\begin{equation}
  \EE \left[\frac{g}{1 + g \cdot \delta(z)} \right] = \EE \left[ \frac{3 h^2 - y}{1 + (3 h^2 - y) \delta(z)} \right] 
\end{equation}
where we note that $3 h^2 - y = \x^\T \Delta \x$ with $\Delta \equiv 3 \w \w^\T - \w_* \w_*^\T \in \RR^{p \times p}$ so that $3 h^2 - y$ is a weighted sum of (at most) two independent non-central chi-square random variable. (Recall that, for $\x \sim \mathcal N(\bmu, \I_p)$, $\| \x \|^2$ follows a non-central $\chi^2$ with $p$ degree of freedom and denotes $\| \x \|^2 \sim \chi^2(p, \| \bmu \|^2)$.) In particular, for $\w = \alpha \w_*$ with $\alpha \in \RR$, we obtain $\Delta = (3 \alpha^2 -1) \w_* \w_*^\T$ and $3 h^2 - y$ is thus a (non-centered) chi-square random variable.

\subsection{Derivation of Corollary~\ref{coro:special-logistic-w=0}}\label{subsec:proof-special-logistic-w=0}

For the logistic model defined in \eqref{eq:def-logistic} with $\w = \w_* = \mathbf{0}$ and $\C = \I_p$, it follows from Appendix~\ref{subsec:proof-logistic} that $g = 1/4$, so that, by Theorem~\ref{theo:lsd}, the Stieltjes transform $m(z)$ satisfies
\begin{equation}
	z c m^2(z) - (1 - c - 4 z) m(z) + 4 = 0
\end{equation}
which corresponds to the Mar\u{c}enko-Pastur law in \eqref{eq:MP-equation} rescaled by $1/4$. It then follows from Theorem~\ref{theo:spike} that
\begin{equation}
	\det \G(\lambda) = 1 + \frac{ \| \bmu \|^2 m(\lambda)}{4 + c m(\lambda)} = 0 \Leftrightarrow m(\lambda) = - \frac4{c + \| \bmu \|^2}
\end{equation}
which, by plugging back into the equation above gives the expression of $\lambda_{\bmu}$, and the condition $\| \bmu \|^2 > \sqrt c$ follows from the property $z m(z) < 0$ for $\Im[z] = 0$ of the Stieltjes transform. With Theorem~\ref{theo:eigenvector}, we conclude the proof of Corollary~\ref{coro:special-logistic-w=0}.

\subsection{Derivation of Corollary~\ref{coro:special-logistic-mu=0}}\label{subsec:proof-special-logistic-mu=0}

For the logistic model defined in \eqref{eq:def-logistic} with $\bmu = \mathbf{0}$ and $\C = \I_p$, it follows from Appendix~\ref{subsec:proof-logistic} and Theorem~\ref{theo:lsd} that the Stieltjes transform $m(z)$ satisfies
\begin{equation}\label{eq:m(z)-logistic-mu=0}
  m(z ) = \frac{\delta(z)}{c} = \frac1{\EE[ (2 + \exp(-r) + \exp(r) + c m(z))^{-1} ] - z}
\end{equation}
for $r \sim \mathcal N(0, \| \w \|^2)$ that \emph{depends} on $\w$ but not on $\w_*$. Moreover, Theorem~\ref{theo:spike} gives $\bar \Q_b(z) = m(z) \I_p$ and
\begin{align*}
	0 &= \det \G(z) = \det \left(\I_2 +  m(z) \EE [f(r,z) (r^2 - \| \w\|^2)] (\U^\T \U)^+ \U^\T \frac{\w \w^\T}{\| \w \|^4} \U \right) \\ 
	&= 1 + m(z) \frac{ \EE [f(r,z) (r^2 - \| \w\|^2)] }{\| \w \|^2} %\Leftrightarrow m(z) = - \frac{\| \w \|^2}{ \EE [f(r) (r^2 - \| \w\|^2)] }
\end{align*}
with $\U = [\w_*,~\w]$ and $f(t,z) = \frac1{ c m(z) + 2 + \exp(-t) + \exp(t) }$, where we use the fact that
\begin{align*}
	\EE \left[ \frac{g \cdot \z \z^\T}{1 + g \cdot \delta(z)} \right] &=  \EE [f(r,z)] \cdot \I_p + \EE [ f(r,z) (r^2 - \| \w\|^2) ] \cdot \frac{\w \w^\T}{\| \w \|^4}  ,
\end{align*}
and $\EE[ \frac{g}{1 + g \cdot \delta(z)} ] = \EE[f(r,z)]$ from Appendix~\ref{subsec:proof-logistic}. As such, there exists \emph{no} spike due to $\w_*$.

Moreover, taking $\w_* = \zo$, we further obtain
\begin{equation}
	\G(z) = 1 + m(z) \EE [ f(r,z) (r^2/\| \w\|^2 - 1) ]
\end{equation}
and
\begin{equation}
	 \G'(z) = m'(z) \left( \EE [ f(r,z) (r^2/\| \w\|^2 - 1) ]  - c m(z) \EE [ f^2(r,z) (r^2/\| \w\|^2 - 1) ] \right)
\end{equation}
where we have $m'(z) = \frac{m^2(z)}{1 - c m^2(z) \EE[f^2(r,z)]}$ by differentiating \eqref{eq:m(z)-logistic-mu=0}. Therefore, we obtain from Theorem~\ref{theo:eigenvector} that
\begin{equation}
	\frac{|\w^\T \hat \uu|^2}{\| \w\|^2} = -\frac{m(\lambda_\w)}{\G'(\lambda_\w)} + o(1).
\end{equation}

\end{document}